\def\given{\,|\,}
\def\tr{\mathop{\text{tr}}\kern.2ex}
\def\F{\text{F}}
\long\def\comment#1{}
\def\vec{\mathop{\text{vec}}}
\def\tr{\mathop{\text{Tr}}}
\def\cS{{\mathcal{S}}}
\def\cP{{\mathcal{P}}}
\def\cL{{\mathcal{L}}}
\def\cT{{\mathcal{T}}}
\def\cB{{\mathcal{B}}}
\def\cO{\mathcal{O}}
\def\tr{{\text{Tr}}}
\newcommand{\bel}{\begin{eqnarray}\label}
\newcommand{\eel}{\end{eqnarray}}
\newcommand{\bes}{\begin{eqnarray*}}
\newcommand{\ees}{\end{eqnarray*}}
\newcommand{\la}{\langle}
\newcommand{\ra}{\rangle}
\def\##1\#{\begin{align}#1\end{align}}
\def\$#1\${\begin{align*}#1\end{align*}}
\begin{document}

\title{\LARGE
	Risk-Sensitive Deep RL: 
	Variance-Constrained Actor-Critic Provably  Finds Globally Optimal Policy}
	\author{Han Zhong\thanks{Peking University. Email: \texttt{hanzhong@stu.pku.edu.cn}} \qquad Xun Deng\thanks{University of Science and Technology of China. Email: \texttt{dx981228@mail.ustc.edu.cn}} \qquad Ethan X. Fang\thanks{Duke University. Email: \texttt{ethan.fang@duke.edu}}  \qquad Zhuoran Yang\thanks{Yale University. Email: \texttt{zhuoran.yang@yale.edu}}\\ Zhaoran Wang\thanks{Northwestern University. Email: \texttt{zhaoranwang@gmail.com}} \qquad  Runze Li\thanks{Pennsylvania State University. Email: \texttt{rzli@psu.edu}}}

\date{}

%\date{February, 2020}

\maketitle

%%%%%%%%%%%%%

%!TEX root =main.tex

\begin{abstract}
While deep reinforcement learning has achieved tremendous successes in various applications, most existing works only focus on maximizing the expected value of total return and thus ignore its inherent stochasticity. Such stochasticity is also known as the aleatoric uncertainty and is closely related to the notion of risk. In this work, we make the first attempt to study risk-sensitive deep reinforcement learning under the average reward setting with the variance risk criteria. In particular, we focus on a variance-constrained policy optimization problem where the goal is to find a policy that maximizes the expected value of the long-run average reward, subject to a constraint that the long-run variance of the average reward is upper bounded by a threshold. Utilizing Lagrangian and Fenchel dualities, we transform the original problem into an unconstrained saddle-point policy optimization problem, and propose an actor-critic algorithm that iteratively and efficiently updates the policy, the Lagrange multiplier, and the Fenchel dual variable. When both the value and policy functions are represented by multi-layer overparameterized neural networks, we prove that our actor-critic algorithm generates a sequence of policies that finds a globally optimal policy at a sublinear rate. Further, We provide numerical studies of the proposed method using two real datasets to back up the theoretical results.
  
 %by imposing a variance constraint to upper bound the variance of the  total return obtained by following a stationary policy. Such a risk-sensitive/variance-constrained approach finds many vital applications in practice. However, it is substantially more challenging than the unconstrained version. By leveraging modern optimization techniques, we provide a new framework to conduct variance-constrained deep reinforcement learning. Despite the nonconvexity  and the high dimensionality, we prove that under mild assumptions, our algorithm achieves the global optimality that the duality gap diminishes at an $\cO(1/\sqrt{K})$ rate, where $K$ is the iteration counter.
\end{abstract}

\section{Introduction}

Reinforcement learning (RL) is a powerful approach to solving multi-stage decision-making problems by interacting with the environment and learning from experiences. 
Thanks to the practical 
efficacy 
of reinforcement learning, it draws substantial attentions from different communities such as operations research \citep{bertsekas1996neuro,mertikopoulos2016learning,wen2017efficient,wang2017stochastic,zeng2018asyncqvi}, computer science \citep{sutton1998introduction} and statistics \citep{menictas2019artificial,clifton2020q}. 
With the advance of deep learning, over the past few years, we have witnessed phenomenal successes of deep reinforcement learning (DRL) in solving extremely challenging problems such as Go \citep{silver2016mastering, silver2017mastering, openai2019}, robotics \citep{kober2013reinforcement, gu2017deep}, and natural language processing \citep{narasimhan2015language}, which were once  regarded too complicated to be solvable by computer programs in the past. 

Despite these empirical successes, 
providing 
theoretical justifications for deep reinforcement learning 
is  rather challenging. 
A  significant challenge  is that 
the optimization problems associated with deep reinforcement learning are usually highly nonconvex, which is due to a combination of  the following two sources. 
 
 First, under the risk-neutral setting, where the goal is to find a policy that  maximizes the (long-run) average reward in expectation within a parametric policy class, 
 the optimization objective is 
 a nonconvex function of the policy parameter. 
 This is true even when the policy admits a tabular or linear parameterization, and the global convergence and optimality of policy optimization algorithms 
 for these cases are only established recently. See, e.g., \cite{agarwal2020optimality,shani2020adaptive, mei2020global, cen2020fast} and the references therein.
 
 Second, when the policy is represented  by a  deep neural network, due to its nonlinearity and complicated structure,   policy optimization  is significantly more challenging. 
Theoretical  guarantees for    deep policy optimization is rather limited.  
 Recently, built upon the theory of neural tangent kernel \citep{jacot2018neural}, 
 \cite{liu2019neural, wang2019neural, fu2020single} prove that various actor-critic algorithms with overparameterized neural networks provably achieve global convergence and optimality.
  
% 
% Earlier works such as trust region policy optimization \citep{schulman2015trust} and proximal policy optimization \citep{schulman2017proximal} are among the earliest and most empirically successful algorithms for solving such problems. However, these algorithms do not guarantee convergence to the globally optimal solution.  Recently, \cite{liu2019neural} show that a variant of these two algorithms converges to the global solution, and theoretically justify the success of deep reinforcement learning in practice.

In this paper,
going beyond the risk-neutral setting, 
 we 
make the first  attempt to study risk-sensitive deep reinforcement learning.
In particular, 
we focus on the variance risk measure 	
\citep{sobel1982variance}
  and aim to  find a neural network  policy that 	 maximizes the expected value of the  long-run average reward under the constraint that   the variance of the long-run average reward  is upper bounded by a certain threshold. 
  Here the variance constraint   incorporates the risk-sensitivity --- the reinforcement learning agent is willing to achieve a possibly smaller expected reward in exchange for a smaller variance. 
 Moreover, such a problem is substantially more challenging than the risk-neutral setting   and   finds important applications. 
 Our goal is to establish an algorithm that provably finds a globally optimal solution to such a risk-sensitive policy optimization problem within the class of deep neural network policies. 
 To the best of our knowledge, this problem has never been considered in existing deep reinforcement learning literature.

%To the best of our knowledge, the notion of risk/aleatoric uncertainty is largely unexplored in the study of  deep reinforcement learning.  To make the first attempt for risk-sensitive deep reinforcement learning, we propose to conduct variance-constrained deep reinforcement learning. In particular, we maximize the expected total reward under the constraint that the variance of total reward is upper bounded. 

\subsection{Motivating Applications}
Imposing the variance constraint is of substantial practical interests. We provide two concrete motivating applications. The first application is in portfolio management. Reinforcement learning/deep reinforcement learning methods have been applied for portfolio optimization \citep{moody1998performance,jiang2017cryptocurrency}, where we dynamically allocate the assets to maximize the total return over time. In such applications, while optimizing the expected total return, it is important to control the volatility/risks of the portfolio. In the celebrated Markowitz model \citep{markowitz1952portfolio},  it is suggested that the risk of a portfolio is based on the variability/variance of returns, and the model is exactly maximizing the expected total return for a given level of the variance of total return.

%
%assets to maximize the total return over time. In such applications, while optimizing the expected total return, it is important to control the portfolio's volatility/risks. In the celebrated Markowitz model \citep{markowitz1952portfolio}, it is suggested that the risk of a portfolio is based on the variability/variance of returns, and the model is to maximize the expected total return for a given level of the variance of total return. 

The second example is in robotics. 
%Deep reinforcement learning achieves phenomenal successes in training robotics \citep{gu2017deep,tai2017virtual}. 
It is known that one of the emerging and promising applications of robotics is senior care/medicine \citep{kohlbacher2015leading,taylor2016medical,tan2020governing}. In these applications, while achieving the maximum expected return, it is extremely important to control the variability of the outcome, as a little change in robotics' operation could lead to devastating outcomes. 
While deep reinforcement learning has achieved phenomenal successes in training robotics \citep{gu2017deep,tai2017virtual} under the risk-neutral setting, 
this example shows that risk-sensitive/variance-constrained deep reinforcement indeed calls for a principled solution.

\subsection{Major Contribution}

Incorporating  a  variance constraint into deep reinforcement learning raises several challenges.
First, this makes the optimization problem a constrained one.
Although there are various algorithms designed for constrained Markov decision process (CMDP)  \citep{altman1999constrained}, 
these methods cannot be directly applied to our variance-constrained problem. 
In particular, 
the constraint in  CMDP is  irrelevant to the reward function in the objective, 
whereas the constraint in our problem is the variance of the long-run average reward.
Thus,  handling such a constraint requires   new algorithms. Second, as we employ deep neural network
policies,  both the  expected value and variance of the long-run average reward  are 
 highly nonconvex functions of the policy parameter.  
 Third, 
 to obtain  the 
 policy update directions, 
 we need to characterize the landscape of the variance of average reward as a functional of the policy.   
  As discussed in \cite{tamar2016learning},
  due to the nonlinearity of  the variance of a random variable in the probability space, 
    this raises a  substantial challenge  even in the simpler linear setting.

 To tackle these challenges, 
 inspired by the celebrated   actor-critic  framework  \citep{konda2000actor}, we  propose a \underline{var}iance-constrained \underline{a}ctor-\underline{c}ritic (VARAC) algorithm, 
 where both the policy (actor) and the value functions (critic) are represented by multi-layer overparameterized neural networks. 
 In specific, to handle  the first challenge, 
 we  transform the constrained problem into  an  unconstrained saddle point problem   via Lagrangian duality. 
 Then, to cope with the third challenge, leveraging Fenchel duality, we further write the variance 
 into the  variational form by introducing a dual variable.
 Thus, the original problem is transformed into a saddle point  problem involving the policy $\pi$, Lagrange multiplier $\lambda$, and dual variable $y$. 
 More importantly, when $\lambda$ and $y$ are fixed, the objective is equal to the long-run average of a transformed  reward, and thus we can characterize its landscape for policy optimization. 
For such a saddle point problem, VARAC updates 
$\pi$, $\lambda$, and $y$ via first-order optimization. 
Specifically, in each iteration, we update the policy $\pi$ via proximal update with the Kullback-Leibler (KL) divergence serving as the Bregman divergence, while $\lambda$ is updated via (projected) gradient method, and $y$-update step admits a closed-form solution.
Moreover, 
the update directions are all based on the  solution to  
the inner problem of the critic, which corresponds to solving two policy evaluation problems determined by current $\lambda$ and $y$ via temporal-difference learning \citep{sutton1988learning} with deep neural networks. 
Our KL-divergence regularized policy update is closely related to the trust-region policy optimization \citep{schulman2015trust} and proximal policy optimization
 \citep{schulman2017proximal}, which have demonstrated great empirical successes.
 Finally, to tackle the second challenge, 
 from a functional perspective, 
 we view the policy update  of VARAC 
   as an instantiation of  infinite-dimensional mirror descent   \citep{beck2003mirror,zhang2018convergence}, 
   which is well approximated by the parameter update of the policy when the neural network is overparameterized. 
   Thus, 
      we show that under mild assumptions, despite  nonconvexity, 
      the policy sequence obtained by VARAC 
      converges to a globally optimal policy  at a sublinear $\cO(1/\sqrt{K})$ rate, where $K$ is the iteration counter. 

In summary, our contribution is two-fold. First, to our best knowledge, we make the first attempt to study risk-sensitive deep reinforcement learning by imposing a  variance-based risk constraint. 
Second, we propose a novel actor-critic  algorithm, dubbed as VARAC,  which provably finds a globally optimal policy of  the variance constrained problem at a sublinear rate.   
 We believe that our work brings a promising future research direction for both optimization and machine learning communities. 

\subsection{Related Work}

Our work extends the field of risk-sensitive optimization. The risks are essentially some measures of the aleatoric uncertainty. In nature, there are two types of uncertainty \citep{clements2019estimating}. The first type is {\it epistemic} uncertainty, which refers to the uncertainty caused by a lack of knowledge
and    can be reduced by acquiring more data.
The second type is {\it aleatoric} uncertainty, which  refers to the notion of inherent randomness.  
That is, 
the uncertainty  due to the stochastic nature of the  environment,   which cannot be reduced even with unlimited data. 
 Optimizing returns while controlling the risk is of great practical importance. 
 Various risk measures are proposed for different applications, which  include variance \citep{rubinstein1973mean}, value at risk (VaR) \citep{pflug2000some},  conditional value at risk (CVaR) \citep{rockafellar2000optimization}, and utility function \citep{browne1995optimal}. The notion of risk is widely studied in the optimization community over  past decades. See, e.g.,  \cite{ruszczynski2006conditional,ruszczynski2006optimization,ruszczynski2010risk,dentcheva2019risk,kose2020risk}, and the  references therein.

Furthermore, our work is 
closely related to the literature on risk-sensitive reinforcement learning
with the  variance  risk measure. 
The study of the variance of the total returns in a Markov decision process (MDP) dates back to \cite{sobel1982variance}.
\cite{filar1989variance} formulates the   variance-regularized MDP 
 as a nonlinear program. 
\cite{mannor2011mean}
proves that finding an exact  optimal policy of  variance-constrained reinforcement learning   is NP-hard, even when the model   is  known. 
More recently, 
 with linear function approximation, 
 \citep{tamar2016learning} proposes a temporal-difference learning algorithm for estimating the variance of the total reward, and 
 \cite{tamar2013variance,prashanth2016variance, prashanth2018risk} propose  actor-critic algorithms for variance-constrained policy optimization. 
 These works all establish asymptotic convergence guarantees via stochastic approximation \citep{borkar2009stochastic}. 
 A more related work is \cite{xie2018block}, which proposes an  actor-critic algorithm via Lagrangian and Fenchel duality, which is shown to converge to a stationary point  at a sublinear rate under the linear setting. In contrast, our work employs deep neural networks, adopts a different KL-divergence regularized policy update, and our algorithm provably finds a globally optimal policy at a sublinear rate.

%In the vast majority of existing reinforcement learning works, 
%theoretical results for the risk-sensitive setting is 

%the goal is to maximize the total reward in expectation. 

%That is, we maximize the reward in a risk-neutral setting. This somewhat ignores some risk/uncertainty in practice. Meanwhile, in many important application domains such as finance and government planning, it is crucial to be risk-averse and control the risks, where such uncertainties cannot be ignored. A few exceptions include \cite{tamar2013variance,tamar2016learning,prashanth2016variance}, where the authors study the variance of the total returns. However, these works only consider  classical settings where the models are linear, and cannot be generalized for deep reinforcement learning. This is because  that we need much more sophisticated algorithms and analysis tools to handle complicated deep neural networks.

\vspace{5pt}

\noindent{\bf Paper Organization}. The rest of this paper is organized as follows. In Section~\ref{sec:2}, we briefly introduce  some background knowledge. In Section~\ref{sec:alg}, we present the VARAC algorithm. In Section~\ref{sec:results}, we provide theoretical guarantees for the VARAC algorithm. To better illustrate our theory, we provide the analysis of VARAC for risk-sensitive RL with linear function approximation in Section~\ref{sec:linear}. In Section~\ref{sec:exp}, we conduct numerical experiments to investigate the empirical performance of our method using two mechanical control environments.
We conclude the paper in Section~\ref{sec:con}. 

\vspace{5pt}

\noindent{\bf Notations}.
For an integer $H$, we denote by $[H]$  the set $\{1, 2, \cdots, H\}$. Meanwhile, for any $x \in \RR$, we define $[x]_+ = \max(x, 0)$. Furthermore, we denote by $\| \cdot \|_2$ the $\ell_2$-norm of a vector or the spectral norm of a matrix, and denote by $\| \cdot \|_\F$ the Frobenius norm of a matrix.  Also, let $\{a_n\}_{n \ge 0}$ and $\{b_n\}_{n \ge 0}$  be two positive sequences. If there exists some positive constant $c$ such that $\limsup_{n \rightarrow \infty} a_n / b_n \le c$, we write $a_n = \cO(b_n)$.
% If $a_n = \cO(b_n\log^k n)$ for some $k>0$, we write $a_n=\tilde\cO(b_n)$. 
 If $\liminf_{n \rightarrow \infty} a_n / b_n \ge c_1$ for some positive constant $c_1$, we write   $a_n = \Omega(b_n)$.

%
%
%\begin{itemize}
%    \item Introduce the epistemic and aleatoric uncertainty of RL. Epistemic uncertainty stems from limited data and aleatoric uncertainty is the inherent randomness of the environment.  \citep{clements2019estimating} 
%    \item  Although there are a lot of works on DRL, most of these works focus on the risk-neutral setting. The aleatoric uncertainty of DRL is rather less explored in theory. 
%    \item In this work, we focus on the mean-variance risk-sensitive RL. We formulate this problem as learning the optimal policy subject to a variance constraint. 
%    \item Challenge of this problem: i) Need to handle the variance constraint. ii) Need to estimate the policy gradient of the variance term, which is a functional of the return. iii) nonconvexity due to having neural policies
%    \item To handle the change, we use i) Lagrange multiplier to handle the constraint and ii) Fenchel duality to estimate the policy gradient of the variance term 
%    \item To the best of our knowledge, we seem to propose the first actor-critic algorithm for risk-sensitive RL that is provably convergent and achieves optimality
%   
%\end{itemize}
%
%Related works: 
%\begin{itemize}
%    \item Risk sensitive RL with mean-variance risk measure
%    \item Policy optimization methods, especially those with function approximation and neural networks
%    \item deep neural networks and neural tangent kernel 
%\end{itemize} 
%

% !TEX root = main.tex

\section{Background} \label{sec:2}
In this section, we briefly review the Markov Decision Process (MDP) under the  average reward setting, the variance-constrained policy optimization problem, and some background of the deep neural network.
\vskip4pt
\noindent{\bf Markov Decision Process.}
We consider the Markov decision process $(\cS, \cA, \cP, r)$, where $\cS$ is a compact state space, $\cA$ is a finite action space, $\cP: \cS\times\cS\times\cA \to \RR$ is the transition kernel, and $r: \cS\times\cA \to \RR$ is the reward function. 
A stationary policy $\pi$ maps each state to  a probability distribution over $\cA$ that $\pi(\cdot|s)\in \cP(\cA)$, where $\cP(\cA)$ is the probability simplex on the action space $\cA$. 
Given a policy $\pi$, the state $\{ s_t\}_{t\geq 0}$ and state-action pair $\{ (s_t, a_t )\}_{t\geq 0}$ are sampled from  the Markov chain over $\cS$  
and $\cS \times \cA$, respectively. 
Throughout this paper, we assume that the Markov chains induced by any stationary policy admit stationary distributions. 
Moreover, 
we denote by $\nu_\pi (s)$ and $\sigma_\pi (s, a)=\pi(a\,|\,s)\cdot \nu_\pi(s)$ the stationary state distribution and the stationary state-action distribution associated with a policy $\pi$, respectively. For ease of presentation, we denote by $\EE_{\sigma_\pi}[\,\cdot\,]$ and $\EE_{\nu_\pi}[\,\cdot\,]$ the expectations $\EE_{(s,a)\sim\sigma_\pi}[\,\cdot\,] = \EE_{a\sim\pi(\cdot\,|\,s), s\sim \nu_\pi(\cdot)}[\,\cdot\,]$ and $\EE_{s\sim\nu_\pi}[\,\cdot\,]$, respectively.

\vskip4pt
\noindent{\bf Average Reward Setting.}
For a given stationary policy $\pi: \cA \times \cS \to \RR$, we measure its performance   using its (long-run) average reward per step, which is defined as 
\#\label{eq:rho}
\rho(\pi)=\lim _{T \rightarrow \infty} \frac{1}{T} \cdot \EE\Bigl[\sum_{t=0}^{T-1} r(s_t, a_t) \,\big|\, \pi\Bigr] = \EE_{(s,a)\sim\sigma_\pi}[r(s,a)].
\#
For all states $s$ in $\cS$ and actions $a$ in $\cA$, the differential action-value function (Q-function) of a policy $\pi$ is defined as
\# \label{eq:def:q}
Q^{\pi}(s,a) = \sum^\infty_{t = 0}\EE\big[ r(s_t, a_t) - \rho(\pi) \, | \, s_0 = s, ~ a_0 = a,~ a_{t} \sim \pi(\cdot\,|\, s_t), ~s_{t+1} \sim \cP(\cdot\,|\, s_t, a_t)\big].
\#
Correspondingly, the differential state-value function of a policy $\pi$ is defined as
\#\label{eq:def:v}
V^\pi(s) = \sum^\infty_{t = 0}\EE\big[ r(s_t, a_t) - \rho(\pi) \, | \, s_0 = s,~ a_{t} \sim \pi(\cdot\,|\, s_t), ~s_{t+1} \sim \cP(\cdot\,|\, s_t, a_t)\big].
\#
In the context of risk-sensitive optimization, one of the most common risk measures is the  long-run variance of reward obtained under policy $\pi$, which is defined as
\$
\Lambda(\pi) = \lim _{T \rightarrow \infty} \frac{1}{T} \cdot \EE\Bigl[\sum_{t=0}^{T-1} {\bigl(r(s_t, a_t) - \rho(\pi)\bigr)}^{2} \big|\, \pi \Bigr] = \EE_{(s,a)\sim\sigma_\pi} \bigl[r(s,a) - \rho(\pi)\bigr]^2.
\$
It is not difficult to show that
\#\label{eq:eta}
\Lambda(\pi) = \eta(\pi) - {\rho(\pi)}^{2},    \quad     \text{where }   \, \eta(\pi) = \EE_{(s,a)\sim\sigma_\pi}[r(s,a)^2].
\#
Let $W^{\pi}$ and $U^{\pi}$ be  the differential action-value and value functions associated with the squared reward of  policy $\pi$ that
\begin{align} 
W^{\pi}(s,a) & = \sum^\infty_{t = 0}\EE[ r(s_t, a_t)^2 - \eta(\pi) \, | \, s_0 = s, ~ a_0 = a,~ a_{t} \sim \pi(\cdot\,|\, s_t), ~s_{t+1} \sim \cP(\cdot\,|\, s_t, a_t)], \label{eq:def:w}\\
U^\pi(s) & = \sum^\infty_{t = 0}\EE[ r(s_t, a_t)^2 - \eta(\pi) \, | \, s_0 = s, a_{t} \sim \pi(\cdot\,|\, s_t), ~s_{t+1} \sim \cP(\cdot\,|\, s_t, a_t)]. \label{eq:def:u}
\end{align}
We denote by $\la \cdot, \cdot \ra$ the inner product over $\cA$, e.g., we have $V^\pi(s) = \EE_{a\sim\pi(\cdot\,|\,s)}[Q^\pi(s,a)] = \la Q^\pi(s,\cdot), \pi(\cdot\,|\,s)\ra$ and $U^\pi(s) = \EE_{a\sim\pi(\cdot\,|\,s)}[W^\pi(s,a)] = \la W^\pi(s,\cdot), \pi(\cdot\,|\,s)\ra$.

Throughout our discussion, we  impose a standard assumption that the reward function is uniformly bounded. In particular, we assume that there exists a constant $M>0$ such that $M = \sup_{(s,a)\in \cS\times\cA}|r(s, a)|$. 
	%Where $\rho(\pi)$ and $\eta(\pi)$ are defined in \eqref{eq:rho} and \eqref{eq:eta}, respectively.
%This bounded reward assumption is standard in literature \citep{liu2019neural}. 
As an immediate consequence, we have  that  for any policy~$\pi$,
 \# \label{eq:bound:rho:eta}
| \rho(\pi) | \le M,  \quad \qquad |\eta(\pi)| \le M^2 .
 \#

\vskip5pt
\noindent{\bf Variance-Constrained Problem.}
We consider the following constrained policy optimization problem to find a policy that maximizes the long-run average reward subject to the constraint that the long-run variance is upper bounded by a certain threshold.  In particular, for a given   $\alpha>0$, we consider the following constrained optimization problem 
\#\label{eq:problem}
\max_{\pi}\rho(\pi)      \text{     \qquad subject to } \Lambda(\pi)\leq\alpha.
\#

\vskip5pt
\noindent{\bf Deep Neural Networks.}
To facilitate our discussion, we briefly review  some basics of deep neural networks (DNNs) \citep{allen2018convergence,gao2019convergence}. Let $x\in\RR^d$ be the input data. Suppose that we have a DNN with $H$ layers of width $m$. We denote by $W_h$ the weight matrix at the $h$-th layer for $ h\in [H]$, where $W_1\in\RR^{d\times m}$ and $W_h\in\RR^{m\times m}$ for $2\le h \le H$. For a DNN with depth $H$, width $m$, and parameter $\theta = \bigl(\vec(W_1)^\top, \cdots, \vec(W_H)^\top\bigr)^\top$, its output $u_\theta(x)$ is  recursively defined~as
\#\label{eq:def-nn-form}
& x^{(0)} = x, \notag \\
& x^{(h)} = \frac{1}{\sqrt{m}}\cdot \sigma( W_h^\top x^{(h-1)} ), \quad \text{ for }h\in[H], \\
&  u_\theta(x) = b^\top x^{(H)}, \notag
\#
where $\sigma(\cdot) = \max\{0, \cdot\}$ is the ReLU activation function, and $b\in\{-1,1\}^{m}$ is the output layer. %, and $\theta = \bigl(\vec(W_1)^\top, \ldots, \vec(W_H)^\top\bigr)^\top$ is the network parameter of $u_\theta(\cdot)$. 
Without loss of generality, we assume that the input $x \in \mathbb{R}^d$ satisfies  $\|x\|_2 = 1$, where $\|\cdot\|_2$ denotes the $\ell_2$-norm. 
In the context of deep reinforcement learning, this can be achieved by having a known embedding function that maps each state-action pair to the unit sphere in $\RR^d$. 
%We initialize the DNN by setting each entry of $W_h$ for $h\in[H]$ following the i.i.d. standard Gaussian distribution $\mathcal N(0,1)$ and each entry of $b$ following the i.i.d. discrete uniform distribution ${\rm Unif}(\{-1,1\})$. 
Besides, we initialize the network parameters randomly by 
%we adopt the following random initialization of the parameters that
\# \label{eq:initialization}
&[W_1]_{i,j} \overset{\rm i.i.d.}{\sim} \mathcal N(0,1)   \text{  for all } (i,j) \in [d] \times [m] , \notag \\
&[W_h]_{i,j} \overset{\rm i.i.d.}{\sim} \mathcal N(0,1)  \text{ for all } (i,j) \in [m]\times[m] \text{ and }  2 \le h \le H ,   \\
&b_i \overset{\rm i.i.d.}{\sim} {\rm Unif}(\{-1,1\})  \text{ for all } i \in [m]. \notag
\#
Without loss of generality, we only update   $\{W_h\}_{h\in[H]}$ throughout the training process,  and fix the output layer~$b$ as its initialization. We denote by $\theta_0 = (\vec(W_1^0)^\top, \cdots, \vec(W_H^0)^\top)^\top$ the initialization of the network parameter.  
In addition, we restrict  network parameter~$\theta$ within a ball centered at $\theta_0$ with radius $R > 0$, which is given by 
\#\label{eq:def-proj-set}
\cB(\theta_0, R) = \bigl\{\theta \in  \RR^{m_{\rm all}} \colon \|W_{h} - W_{h}^0\|_\F \leq R \text{ for $h\in[H]$}  \bigr\},
\#
where  $\{W_{h}\}_{h\in[H]}$ and $\{W_{h}^0\}_{h\in[H]}$ are the weight matrices of  network parameters $\theta$ and $\theta_0$, respectively, and  $\|\cdot\|_\F$  denotes the Frobenius norm. For any fixed depth $H$, width $m$, and radius $R > 0$, the corresponding class of DNNs is 
\#\label{eq:def-dnn-class}
\cU(m, H, R) = \bigl\{u_\theta(\cdot)\colon \theta\in \cB(\theta_0, R)\bigr\}. 
\#

\section{Algorithm}\label{sec:alg}

In this section, we present the Variance-Constrained Actor-Critic with Deep Neural Networks (VARAC) algorithm for solving the  variance-constrained problem \eqref{eq:problem}.

%To solve the variance-constrained problem \eqref{eq:problem}, we propose to work on its Lagrangian dual problem. Moreover, to handle the setting where $\cS$ is large, we adopt DNNs   to parameterize the policies, differential action-value functions and  differential action-value functions associated with the squared reward. Based on such a parameterization, we propose the Variance-Constrained Actor-Critic with Deep Neural Networks (VARAC) algorithm. 
%For notational simplicity, we denote by $\nu_k$  and $\sigma_k$ the stationary state distribution $\nu_{\pi_{\theta_k}}$ and the stationary state-action distribution $\sigma_{\pi_{\theta_k}}$, respectively. 
%Also, we define an auxiliary distribution $\tilde{\sigma}_k$ over $\cS\times \cA$ as $\tilde{\sigma}_k = \nu_k\pi_0$.

\subsection{Problem Formulation} 
As we discussed in the introduction, a major challenge of solving problem \eqref{eq:problem} is that the constraint is difficult to handle. We first transform the problem into an unconstrained saddle point problem by employing the Lagrangian dual formulation that
\# \label{eq:lagrangian}
&\min_{\lambda}\max_{\pi}~\rho(\pi) - \lambda\bigl(\Lambda(\pi) - \alpha\bigr)  =\min_{\lambda}\max_{\pi}~\rho(\pi) - \lambda\eta(\pi) +\lambda\rho(\pi)^2 + \lambda\alpha , 
\#
where the equality follows from \eqref{eq:eta}. As mentioned earlier, the quadratic term $\rho(\pi)^2$ makes the problem nonlinear in the probability distribution, and raises substantial challenges in the computation. Following Lemma 1 of \cite{xie2018block}, we reformulate the problem by leveraging the quadratic term's Fenchel dual. In particular, by the Fenchel duality, we have that $\rho(\pi)^2 =\max_{y\in\RR}(2y\rho(\pi)-y^2)$. Then, the Lagrangian dual is transformed to the following form that
\# \label{eq:problem-formulated}
%\begin{aligned}
&\min_{\lambda}\max_{\pi}~\rho(\pi) - \lambda\eta(\pi) +\lambda\rho(\pi)^2 + \lambda\alpha  \notag\\
&\qquad=\min_{\lambda}\max_{\pi}~\rho(\pi) - \lambda\eta(\pi) + \lambda\max_{y}\bigl(-y^2 + 2y\rho(\pi)\bigr) + \lambda\alpha \notag \\
&\qquad=\min_{\lambda}\max_{\pi}\max_{y}~(1 + 2\lambda y)\rho(\pi) - \lambda\eta(\pi) - \lambda y^2 + \lambda\alpha  .
%\end{aligned}
\#
To facilitate our discussion, we denote the Lagrangian dual function as 
\#\label{eq:def-L}
\cL(\lambda,\pi,y) = (1 + 2\lambda y)\rho(\pi) - \lambda\eta(\pi) - \lambda y^2 + \lambda\alpha .
\#

%\subsection{Parameterize the Policy via Deep Neural Networks}

To handle the potentially complicated functional structures, we propose to use DNNs to represent  the policy $\pi$, differential action-value function $Q$ defined in \eqref{eq:def:q}, and  differential action-value function $W$ associated with the squared reward   defined in \eqref{eq:def:w}. In particular, we consider the energy-based policy $\pi_\theta(a\given s) \propto \exp( \tau^{-1} f_\theta(s,a) )$, where the energy function $f_\theta(s,a)\in\cU(m_{\rm a}, H_{\rm a}, R_{\rm a})$ is parameterized as a DNN with network parameter~$\theta$ \citep{ haarnoja2017reinforcement,wang2019neural}.  Also, we assume that $Q(s,a) = Q_{q}(s, a)$ and $W(s,a) = W_{\omega}(s,a)$ for all $(s,a)\in\cS\times\cA$, where $Q_q(s,a)\in\cU(m_{\rm c}, H_{\rm c}, R_{\rm c})$ and $W_\omega(s,a)\in\cU(m_{\rm b}, H_{\rm b}, R_{\rm b})$ are parameterized as DNNs with network parameters $q$ and $\omega$, respectively.

\subsection{VARAC Algorithm}
We propose the variance-constrained actor-critic with deep neural networks (VARAC) algorithm to solve \eqref{eq:problem}. The algorithm follows the general framework of  the actor-critic method \citep{konda2000actor}. This method solves the unconstrained problem of  maximizing the long-run average reward in \eqref{eq:rho}. At each iteration, in the actor update step, we improve the policy that given the previous estimator for the Q-function, we compute an estimator for the policy  gradient, and conduct a gradient step of the policy. In the critic update step, by plugging the updated policy in, we invoke a policy evaluation algorithm to update the estimator for the Q-function.

 In our setting, due to the variance constraint, the problem is substantially more challenging, and the actor-critic algorithm cannot be directly applied. As discussed in the previous subsection, by employing the Lagrangian and Fenchel dual formulations, we  aim to solve the unconstrained min-max-max problem \eqref{eq:problem-formulated}. Specifically, at each iteration, we first conduct an actor update step, where we update $\lambda$ and $\pi$.  In particular, using solutions  from the previous iteration $k$, we update the Lagrangian multiplier $\lambda$ by  a projected gradient descent step. Next, we update $\pi_\theta$ by the  proximal policy optimization (PPO) algorithm \citep{schulman2017proximal}, where we maximize a KL-penalized objective over $\theta$. To be more specific, in updating $\pi_\theta$, by plugging previous   estimators for the Q-function in \eqref{eq:def:q} and the W-function in \eqref{eq:def:w}, we aim to maximize a linearized version of $\cL(\lambda_{k},\pi_{\theta_{k+1}}, y_k)$ over $\theta_{k+1}$  with a penalty of KL-divergence of $\pi_{\theta_{k+1}}$ and $\pi_{\theta_{k}}$, which is equivalent to 
 $$
 \max_{\theta_{k+1}} \EE_{\nu_{\pi_{\theta_k}}} \bigl[ \bigl\la (1+ 2{\lambda}_{k}{y}_{k})Q_{q_k}(s, \cdot) - {\lambda}_{k}W_{\omega_{k}}(s, \cdot), \pi_{\theta_{k+1}}(\cdot\,|\,s)\bigr\ra- \beta_k \cdot {\rm KL}\bigl(\pi_{\theta_{k+1}}(\cdot\,|\,s)\,\|\, \pi_{\theta_{k}}(\cdot\,|\,s)\bigr) \bigr].
 $$
The key observation is that, by considering energy-based policies, the problem above admits a tractable solution and can be computed efficiently. Finally, we update $y$ by maximizing a quadratic function.
 
 For the critic update step, we update the estimators for the Q-function and W-function by minimizing the Bellman errors. Recall that  we parameterize the Q-function and W-function by deep neural networks with parameters $q$ and $\omega$, respectively. The Bellman error minimization problems become solving
$$
\min_{q \in \cB(q_0,R_c)}\EE_{\sigma_k}[\bigl(Q_q(s,a) - [\cT^{\pi_{\theta_k}}Q_q](s,a)\bigr)^2], \text{ and }\min_{\omega \in \cB(\omega_0,R_{\rm b})}\EE_{\sigma_k}\big[W_\omega(s,a) - [\hat{\cT}^{\pi_{\theta_k}}W_\omega](s,a)\big]^2,
$$
where $\cT^{\pi_{\theta_k}}$ and $\hat{\cT}^{\pi_{\theta_k}}$ are Bellman operators defined later in \eqref{eq:bellman1} and \eqref{eq:bellman2}, respectively. 
We solve these  problems by the temporal difference (TD) method \citep{sutton1988learning}.

We then present the details of the VARAC algorithm.  At the   $(k-1)$-th iteration, we estimate $\rho(\pi_{\theta_{k}})$ and $\eta(\pi_{\theta_{k}})$ by their sample average estimators that 
\# \label{eq:estimate:rho:eta}
\overline{\rho}(\pi_{\theta_{k}}) = \frac{1}{T}\cdot\sum^{T}_{t=1}r(s_t^{k}, a_t^{k}), \qquad \overline{\eta}(\pi_{\theta_{k}}) = \frac{1}{T}\cdot\sum^{T}_{t=1}r(s_t^{k}, a_t^{k})^2 ,
\#
where $T$ is the sample size, and $\{(s_t^{k},a_t^{k})\}_{t=1}^T$ are simulated samples of states and actions following the policy  from the previous iteration. In what follows, with some slight abuse of notation, we write $(s_t^{k},a_t^{k})$ as $(s_t,a_t)$. Note that by the boundedness of the reward, we have 
\#  \label{eq:bound:rho:eta:bar}
| \overline{\rho}(\pi_{\theta_{k}}) | \le M,  \quad \qquad | \overline{\eta}(\pi_{\theta_{k}}) | \le M^2 .
\#
We then present the actor and critic updates at each iteration.

%Then, we present the details of actor update and critic update, respectively. Briefly speaking, we first maximize over $y$, then minimize over $\lambda$, and then maximize over $\pi$.
%Next, we present the actor and critic updates in each iteration. Briefly speaking, in the actor update, we optimize over the decision variables that we update $\lambda$ by the projected gradient method, update $\pi$ by the optimistic proximal policy optimization (OPPO) algorithm \citep{cai2019provably}, and update $y$ by maximizing a quadratic function as discussed later. In the critic update, we evaluate the current solutions by estimating  the differential action-value function $Q^\pi$ and the differential action-value function associated with the squared reward $W^\pi$ via a temporal differences (TD) update \citep{sutton1988learning}.

\vskip4pt
{\noindent\bf Actor Update: (i) $\lambda$-Update Step.}
At the $k$-th iteration, given the solution $(\overline{\lambda}_k,{\pi}_{\theta_k},\overline{y}_k)$ from the $(k-1)$-th iteration, we compute $\lambda_{k+1}$ using the projected gradient method, where we project the solution onto a  bounded region to guarantee the  convergence \citep{prashanth2013actor,prashanth2016variance}.
%The projection area can be justified as long as the saddle point is bounded. 
In particular, we choose a sufficiently large $N > 0$ and update $\lambda_{k+1}$ that
\$
\lambda_{k+1} = \Pi_{[0,N]}\big(\lambda_k - \frac{1}{2\gamma_k}\partial_\lambda\cL(\lambda_k,\pi_{\theta_k},y_{k}) \big) =\Pi_{[0,N]}\Big(\lambda_k - \frac{1}{2\gamma_k}\big(\alpha + 2y_{k}\rho(\pi_{\theta_k}) - \eta(\pi_{\theta_k}) -y_k^2\big) \Big),
\$
where $\gamma_k>0$ is some prespecified stepsize.  
As discussed previously, we do not observe $\rho(\pi_{\theta_k})$ and $\eta(\pi_{\theta_k})$, and as discussed later in $y$-update step, we do not observe the ``ideal" $y_k$. Instead, we estimate them using $\overline{\rho}(\pi_{\theta_k})$, $\overline{\eta}(\pi_{\theta_k})$ in \eqref{eq:estimate:rho:eta} and $\overline{y}_k$ in \eqref{eq:update-y-form}, respectively. We then adopt the following plug-in estimator $\overline{\lambda}_{k+1}$ for the ``ideal" $\lambda_{k+1}$ that
\#\label{eq:update-lambda-form}
\overline{\lambda}_{k+1} =  \Pi_{[0,N]}\Bigl(\overline{\lambda}_k - \frac{1}{2\gamma_k}\bigl(\alpha + 2\overline{y}_k\overline{\rho}(\pi_{\theta_k}) - \overline{\eta}(\pi_{\theta_k}) -\overline{y}_k^2\bigr) \Bigr).
\#
%where $\overline{\lambda}_k$ is the estimators for $\lambda_{k}$.

\vskip4pt
{\noindent\bf (ii) $\pi$-Update Step.}
Note that the policy $\pi$ is parametrized by $\theta$. By the proximal policy optimization method \citep{schulman2017proximal}, we update our policy $\pi_{\theta_{k+1}}$ by maximizing the following $\rm{KL}$-penalized objective over $\theta_{k+1}$,
\#\label{eq:dnn2222}
\begin{aligned}
\max_{\theta_{k+1}}L(\theta_{k+1}) =  \EE_{\nu_{\pi_{\theta_k}}} \bigl[ & \bigl\la (1+ 2\overline{\lambda}_{k}\overline{y}_{k})Q_{q_k}(s, \cdot) - \overline{\lambda}_{k}W_{\omega_{k}}(s, \cdot), \pi_{\theta_{k+1}}(\cdot\,|\,s)\bigr\ra \\
& - \beta_k \cdot {\rm KL}\bigl(\pi_{\theta_{k+1}}(\cdot\,|\,s)\,\|\, \pi_{\theta_{k}}(\cdot\,|\,s)\bigr) \bigr],
\end{aligned}
\#
where $\overline{\lambda}_k$ in \eqref{eq:update-lambda-form} and $\overline{y}_k$ in \eqref{eq:update-y-form} are the estimators for $\lambda_k$ and $y_k$, and $\beta_k>0$ is some prespecified penalty parameter. Note that here we use DNNs $Q_{q_k}$ and $W_{\omega_k}$ to estimate  $Q^{\pi_{\theta_k}}$ and $W^{\pi_{\theta_k}}$, and we provide the theoretical justifications of using DNNs in Section~\ref{sec:ac-error}. 

Solving problem~\eqref{eq:dnn2222} is challenging since the gradient of {\rm KL}-divergence in the objective is difficult to derive. To efficiently and approximately solve the maximization problem~\eqref{eq:dnn2222}, we consider the energy-based policy 
%$\pi(a\,|\,s) \propto \exp\{\tau^{-1}f(s,a)\}$ for the temperature parameter $\tau > 0$ and the energy function $f: \cS\times \cA \to \RR$. We assume 
$\pi_{\theta_{k+1}} \propto \exp(\tau_{k+1}^{-1} f_{\theta_{k+1}})$, where $\tau_{k+1} > 0$ is a temperature parameter, and $f_\theta(s,a)\in\cU(m_{\rm a}, H_{\rm a}, R_{\rm a})$, which is  parameterized by a DNN with network parameter~$\theta$, is an energy function \citep{liu2019neural}. The next proposition shows that problem \eqref{eq:dnn2222} admits a  tractable solution of the oracle infinite-dimensional policy~update.

%See Appendix \ref{appendix:alg-proof} for the detailed proof.

%In the rest of the paper, we denote by $\pi \propto \exp\{\tau^{-1} f\}$ such energy-based policies. 
\begin{proposition}\label{prop:exp-policy}
Let $\pi_{\theta_{k+1}} \propto \exp(\tau_{k+1}^{-1} f_{\theta_{k+1}})$ be an energy-based policy. For any given $\overline{\lambda}_{k}$ and $\overline{y}_{k}$, and given estimators $Q_{q_k}$, $W_{\omega_{k}}$ for $Q^{\pi_{\theta_k}}$ and $W^{\pi_{\theta_k}}$ respectively, we have that $\hat{\pi}_{k+1} = \argmax_{\pi}(\EE_{\nu_k}[\la (1+2\overline{\lambda}_{k}\overline{y}_{k})Q_{q_k}(s, \cdot) - \overline{\lambda}_{k}W_{\omega_{k}}(s, \cdot),\pi(\cdot\,|\,s) \ra - \beta_k \cdot {\rm KL}(\pi(\cdot\,|\,s)\,\|\, \pi_{\theta_k}(\cdot\,|\,s))])$ satisfies
\#\label{eq:pi-new}
\hat{\pi}_{k+1} \propto \exp\big(\beta_k^{-1}(1 + 2\overline{\lambda}_{k}\overline{y}_{k})Q_{q_k} - \beta_k^{-1}\overline{\lambda}_{k}W_{\omega_{k}} + \tau_k^{-1} f_{\theta_k}\big),
\#
%for any $(s,a) \in \cS \times \cA$, 
%{\color{red} Ask Han why $(s,a)$ is put here}
where $\nu_k = \nu_{\pi_{\theta_k}}$ is the stationary state distribution generated by $\pi_{\theta_k}.$
\end{proposition}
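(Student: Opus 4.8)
The plan is to exploit the fact that the objective in \eqref{eq:dnn2222} decouples over states, reducing the maximization to a family of strictly concave problems over the simplex $\cP(\cA)$, one per state, whose unique solutions are the Gibbs measures displayed in \eqref{eq:pi-new}. Throughout, write $c_k(s,a) = (1+2\overline{\lambda}_k\overline{y}_k)Q_{q_k}(s,a) - \overline{\lambda}_k W_{\omega_k}(s,a)$, so that the per-state integrand is $\la c_k(s,\cdot), \pi(\cdot\,|\,s)\ra - \beta_k\cdot{\rm KL}(\pi(\cdot\,|\,s)\,\|\,\pi_{\theta_k}(\cdot\,|\,s))$.

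First I would observe that $\nu_k = \nu_{\pi_{\theta_k}}$ is a fixed probability measure that does not depend on the optimization variable $\pi$, so maximizing $\EE_{\nu_k}[\,\cdot\,]$ over policies amounts to maximizing the integrand pointwise in $s$. For each fixed $s$, letting $p = \pi(\cdot\,|\,s)\in\cP(\cA)$, the subproblem is
$$
\max_{p\in\cP(\cA)}~\la c_k(s,\cdot), p\ra - \beta_k\cdot{\rm KL}\bigl(p\,\|\,\pi_{\theta_k}(\cdot\,|\,s)\bigr).
$$
Since $Q_{q_k}$ and $W_{\omega_k}$ are bounded on the finite action set and the energy-based policy $\pi_{\theta_k}(\cdot\,|\,s)$ has full support, the objective is continuous and strictly concave on the compact simplex, hence attains a unique maximizer; moreover the term ${\rm KL}(\cdot\,\|\,\pi_{\theta_k}(\cdot\,|\,s))$ blows up to $+\infty$ near the boundary faces of $\cP(\cA)$, forcing the maximizer into the relative interior, so the nonnegativity constraints are inactive.

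Next I would form the Lagrangian with a single multiplier $\mu\in\RR$ for the equality constraint $\sum_a p(a)=1$ and set the gradient with respect to $p(a)$ to zero. Using $\partial_{p(a)}{\rm KL}(p\,\|\,\pi_{\theta_k}(\cdot\,|\,s)) = \log\bigl(p(a)/\pi_{\theta_k}(a\,|\,s)\bigr) + 1$, the stationarity condition reads $c_k(s,a) - \beta_k\bigl(\log(p(a)/\pi_{\theta_k}(a\,|\,s)) + 1\bigr) - \mu = 0$ for every $a\in\cA$, which solves to $p(a)\propto\pi_{\theta_k}(a\,|\,s)\exp\bigl(\beta_k^{-1}c_k(s,a)\bigr)$, the normalizing constant being fixed by $\sum_a p(a)=1$. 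Substituting the energy-based form $\pi_{\theta_k}(a\,|\,s)\propto\exp(\tau_k^{-1}f_{\theta_k}(s,a))$ and absorbing all $a$-independent factors into the normalization yields exactly \eqref{eq:pi-new}. I would close by noting that the maximizer depends on $s$ only through the continuous maps $f_{\theta_k}(s,\cdot)$, $Q_{q_k}(s,\cdot)$, $W_{\omega_k}(s,\cdot)$, so it defines a genuine (measurable) policy $\hat\pi_{k+1}$, and on the $\nu_k$-null set of states the policy value is immaterial and may be taken to be the same Gibbs formula by convention.

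I do not anticipate a serious obstacle: the only points needing care are (i) justifying the pointwise reduction, which hinges on $\nu_k$ being independent of the decision variable, and (ii) confirming that the KL penalty keeps the optimum in the relative interior, so that the first-order conditions with only the equality-constraint multiplier are necessary, and — by strict concavity — also sufficient. Everything else is a routine Lagrange-multiplier computation.
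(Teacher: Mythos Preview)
Your proposal is correct and follows essentially the same route as the paper: form the Lagrangian with a multiplier for the simplex constraint, set first-order conditions, and solve to obtain the Gibbs form, then substitute the energy-based expression for $\pi_{\theta_k}$. Your version is slightly more careful in justifying the pointwise-in-$s$ reduction and in arguing that strict concavity plus the KL barrier force the optimum into the relative interior (so only the equality multiplier is needed), whereas the paper carries a state-indexed multiplier $\lambda(\ud s)$ through the integral directly; but the substance is the same.
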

\begin{proof}
See Appendix \ref{appendix:alg-proof} for the detailed proof.
\end{proof}

By Proposition \ref{prop:exp-policy}, we update the policy parameter $\theta$ by solving the following problem, 
{\small
\#\label{eq:policy-update}
\theta_{k+1} = \argmin_{\theta \in \cB(\theta_0,R_a)}\EE_{{\sigma}_k} \bigl[\bigl(f_{\theta}(s,a) - \tau_{k+1}\cdot(\beta_k^{-1}(1 + 2\overline{\lambda}_{k}\overline{y}_{k})Q_{q_k} - \beta_k^{-1}\overline{\lambda}_{k}W_{\omega_{k}} + \tau^{-1}_{k}f_{\theta_k}(s,a))\bigr)^2\bigr], 
\#}
where $\sigma_k$ is the stationary state-action distribution of $\pi_{\theta_k}$. That is, to minimize the distance between the output $f_{\theta_{k+1}}$ and the right hand side of \eqref{eq:policy-update}. 
To solve \eqref{eq:policy-update}, we adopt the projected stochastic gradient descent method. Specifically, given an initial $\theta_0$, at the $t$-th iteration, we update
\begin{equation}\label{eq:sgd-update1}
\begin{aligned}
\theta{(t+1)} \leftarrow \Pi_{\cB(\theta_0, R_a)}\Bigl(&\theta{(t)} - \zeta \cdot \bigl(f_{\theta{(t)}}(s,a) - \tau_{k+1}\cdot\big(\beta_k^{-1}(1 + 2\overline{\lambda}_{k}\overline{y}_{k})Q_{q_k} \\
&+ \beta_k^{-1}\overline{\lambda}_{k}W_{\omega_{k}} + \tau^{-1}_{k}f_{\theta_k}(s,a)\big)\bigr)\cdot\nabla_\theta f_{\theta{(t)}}(s,a)\Bigr),
\end{aligned}
\end{equation}
where the operator $\Pi_{\cB(\theta_0, R_a)}(\cdot)$ projects the solution onto the set $\cB(\theta_0, R_a)$ defined in \eqref{eq:def-proj-set},  the state-action pair $(s, a)$ is sampled from $\sigma_k = \sigma_{\pi_{\theta_k}}$, and $\zeta>0$ is the stepsize. See Algorithm~\ref{alg:ac-update} in Appendix \ref{appendix:alg} for a pseudocode.

\vskip4pt
{\noindent\bf (iii) $y$-Update Step.}
Given  $\overline{\lambda}_{k+1}$ and $\theta_{k+1}$, we update $y_{k+1}= \argmax_y \cL(\overline{\lambda}_{k+1},\pi_{\theta_{k+1}},y)$. By the property of the quadratic function, it is easy to see that  $y_{k+1}=\rho(\pi_{\theta_{k+1}})$. However, since $\rho(\pi_{\theta_{k+1}})$ is unknown,
we adopt $\overline{\rho}(\pi_{\theta_{k+1}})$ defined in \eqref{eq:estimate:rho:eta} as an estimator for $y_{k+1}$ that
\#\label{eq:update-y-form}
%\overline{y}_{k+1} = \Pi_{[0,M]}\bigl(\overline{\rho}(\pi_{\theta_k})\bigr) ,
\overline{y}_{k+1} = \overline{\rho}(\pi_{\theta_{k+1}}).
\#
%where $\overline{\rho}(\pi_{\theta_k})$ defined in \eqref{eq:estimate:rho:eta} is an estimator for  $\rho(\pi_{\theta_k}) $.

\vskip4pt
{\noindent\bf Critic Update: (i) $q$-Update Step.} %\label{sec:td}
In the critic update, we evaluate the current solution $(\overline{\lambda}_k,\pi_{\theta_k},\overline{y}_k)$ by estimating the corresponding value functions. We first  consider the differential action-value function (Q-function), and derive an estimator $Q_{q_k}$ for $Q^{\pi_{\theta_k}}$ in \eqref{eq:dnn2222}, where $Q_{q_k}$ is parametrized as a DNN with network parameter $q_k$. To obtain $q_k$, we  solve the following least-squares problem 
\#\label{eq:mspbe1}
q_{k} = \argmin_{q \in \cB(q_0,R_c)}\EE_{\sigma_k}[\bigl(Q_q(s,a) - [\cT^{\pi_{\theta_k}}Q_q](s,a)\bigr)^2],
\#
where $\sigma_k$ is the stationary state-action distribution of the policy $\pi_{\theta_k}$. 
Here the Bellman %evaluation 
operator $\cT^\pi$ of a policy $\pi$ is% defined as
\#\label{eq:bellman1}
[\cT^\pi Q](s, a) = \EE [r(s,a) -\rho(\pi) + Q(s',a') \,\big|\, s'\sim\cP(\cdot\,|\,s, a),~a'\sim\pi(\cdot\,|\,s')].
\#
Recall that  $Q_{q} \in {\cU} (m_c, H_c, R_c)$  is defined through a deep neural network in \eqref{eq:def-dnn-class}, where $q$ is the network parameter, $H_c$ is the depth, $m_c$ is the width, and $R_c$ is the projection radii. To solve \eqref{eq:mspbe1}, given an initial $q_0$, we use the iterative TD-update that at the $t$-th iteration,  we let
\begin{equation}\label{eq:td-update1}
\begin{aligned}
q{(t+1)} \leftarrow \Pi_{\cB{(q_0,R_{\rm c})}}\Bigl(&q{(t)} - \delta\cdot \bigl(Q_{q{(t)}}(s,a) - r(s, a) \\
&+ \overline{\rho}(\pi_{\theta_{k}}) - Q_{q{(t)}}(s', a')\bigr)\cdot\nabla_q Q_{q{(t)}}(s,a)\Bigr),
\end{aligned}
\end{equation}
{\noindent where $(s, a) \sim \sigma_k$, $s'\sim\cP(\cdot\,|\,s,a)$, $a' \sim \pi_{\theta_k}(\cdot\,|\,s')$, and $\delta$ is the stepsize. See Algorithm \ref{alg:td} in Appendix \ref{appendix:alg} for a pseudocode.} 

\vskip4pt
{\noindent\bf (ii) $\omega$-Update Step.}
Next, we derive an estimator $W_{\omega_k}$ for $W^{\pi_{\theta_k}}$ in \eqref{eq:dnn2222}. The procedure is similar to the previous step. We  solve the following least-squares problem to obtain~$\omega_{k}$,
\#\label{eq:mspbe2}
\omega_{k} = \argmin_{\omega \in \cB(\omega_0,R_{\rm b})}\EE_{\sigma_k}\big[W_\omega(s,a) - [\hat{\cT}^{\pi_{\theta_k}}W_\omega](s,a)\big]^2,
\#
where the operator $\hat{\cT}^\pi$ of a policy $\pi$ is defined as
\#\label{eq:bellman2}
[\hat{\cT}^\pi W](s, a) = \EE[r(s,a)^2 -\eta(\pi) + W(s',a') \,\big|\, s'\sim\cP(\cdot\,|\,s, a),~a'\sim\pi(\cdot\,|\,s')].
\#
As we discussed earlier, we parameterize $W$ using a DNN that we let $W_{\omega} \in {\cU} (m_b, H_b, R_b)$ defined in \eqref{eq:def-dnn-class}, where $\omega$ is the network parameter, $H_b$ is the depth,  $m_b$ is the width, and $R_b$ is the projection radii. To solve \eqref{eq:mspbe2}, given an initial $\omega_0$, we use the TD update that at the $t$-th iteration, we let
\begin{equation}\label{eq:td-update2}
\begin{aligned}
\omega{(t+1)} \leftarrow \Pi_{\cB{(\omega_0,R_b)}}\Bigl(& \omega{(t)} - \delta\cdot \bigl(W_{\omega(t)}(s,a) - r(s, a)^2 \\
& + \overline{\eta}(\pi_{\theta_{k}}) - W_{\omega{(t)}}(s', a')\bigr)\cdot\nabla_\omega W_{\omega{(t)}}(s,a)\Bigr),
\end{aligned}
\end{equation}
where $(s, a) \sim \sigma_k$, $s'\sim\cP(\cdot\,|\,s,a)$, $a' \sim \pi_{\theta_k}(\cdot\,|\,s')$, and $\delta$ is the stepsize. See Algorithm \ref{alg:td2} in Appendix \ref{appendix:alg} for a pseudocode.

Putting the actor and critic updates together, we present the pseudocode of the VARAC algorithm in  Algorithm~\ref{alg:risac}.

\begin{algorithm}%[H]
\caption{Variance-Constrained Actor-Critic with Deep Neural Networks}
\begin{algorithmic}[1]\label{alg:risac}
\REQUIRE MDP $(\cS, \cA, \cP, r)$, penalty parameter $\beta$, widths $m_a$, $m_b$ and $m_c$, depths $H_a$, $H_b$ and $H_c$, projection radii $R_a$, $R_b$ and $R_c$, number of SGD and TD iterations $T$ and number of VARAC iterations $K$
\STATE Initialize with uniform policy: $\tau_0\leftarrow 1$, $f_{\theta_0} \leftarrow 0$, $ \pi_{\theta_0} \leftarrow \pi_0 \propto \exp(\tau_0^{-1}f_{\theta_0})$
\STATE Sample $\{(s_t, a_t, a^0_t, s_t', a_t')\}^{T}_{t = 1}$ with $(s_t, a_t) \sim \sigma_0$, $a^0_t\sim \pi_0(\cdot\,|\,s_t)$, $s_t'\sim\cP(\cdot\,|\,s_t, a_t)$ and $a_t' \sim \pi_{\theta_0}(\cdot\,|\,s_t')$
\STATE Estimate $\rho(\pi_{\theta_{0}})$ and $\eta(\pi_{\theta_{0}})$ by $\overline{\rho}(\pi_{\theta_{0}}) = \frac{1}{T}\cdot\sum^{T}_{t=1}r(s_t, a_t)$ and $\overline{\eta}(\pi_{\theta_{0}}) = \frac{1}{T}\cdot\sum^{T}_{t=1}r(s_t, a_t)^2$
\FOR{$k = 0, \dots, K-1$}

\STATE Set temperature parameter $\tau_{k+1} \leftarrow \beta\sqrt{K}/(k+1)$ and penalty parameter $\beta_k \leftarrow \beta\sqrt{K}$

\STATE Solve $Q_{q_{k}}(s, a) \in \cU(m_{\rm c}, H_{\rm c}, R_{\rm c})$ in \eqref{eq:mspbe1} using the TD update in \eqref{eq:td-update1} (Algorithm \ref{alg:td})\label{line:td:q}

\STATE Solve $W_{\omega_{k}}(s, a) \in \cU(m_{\rm b}, H_{\rm b}, R_{\rm b})$ in \eqref{eq:mspbe2} using the TD update in \eqref{eq:td-update2} (Algorithm~\ref{alg:td2})\label{line:td:w}

\STATE Update $\lambda$ : $\overline{\lambda}_{k+1} =  \Pi_{[0,N]}\bigl(\overline{\lambda}_k - \frac{1}{2\gamma_k}(\alpha + 2\overline{y}_k\overline{\rho}(\pi_{\theta_k}) - \overline{\eta}(\pi_{\theta_k}) -\overline{y}_k^2) \bigr)$

\STATE Solve $f_{\theta_{k+1}} \in \cU(m_{\rm a}, H_{\rm a}, R_{\rm a})$ in \eqref{eq:policy-update} using the SGD update in \eqref{eq:sgd-update1} (Algorithm~\ref{alg:ac-update})\label{line:sgd}

\STATE Update policy: $\pi_{\theta_{k+1}} \propto \exp(\tau_{k+1}^{-1}f_{\theta_{k+1}})$ \label{line:c}
\STATE Sample $\{(s_t, a_t, a^0_t, s_t', a_t')\}^{T}_{t = 1}$ with $(s_t, a_t) \sim \sigma_{k+1}$, $a^0_t\sim \pi_0(\cdot\,|\,s_t)$, $s_t'\sim\cP(\cdot\,|\,s_t, a_t)$ and $a_t' \sim \pi_{\theta_{k+1}}(\cdot\,|\,s_t')$
\STATE Estimate $\rho(\pi_{\theta_{k+1}})$ and $\eta(\pi_{\theta_{k+1}})$ by $\overline{\rho}(\pi_{\theta_{k+1}}) = \frac{1}{T}\cdot\sum^{T}_{t=1}r(s_t, a_t)$ and $\overline{\eta}(\pi_{\theta_{k+1}}) = \frac{1}{T}\cdot\sum^{T}_{t=1}r(s_t, a_t)^2$\label{line:estimate}

\STATE Update $y$ : $\overline{y}_{k+1} = \overline{\rho}(\pi_{\theta_{k+1}}) $

\ENDFOR
\end{algorithmic}
\end{algorithm}

%%%%%%%%%%%%%%%

%%%%%%%%%%%%%
% !TEX root = main.tex

\section{Theoretical Results}\label{sec:results}
In this section, we  establish the  convergence of the proposed VARAC algorithm by analyzing the estimation and computation errors, and we show that the solution converges to a globally optimal solution at an $\cO(1/\sqrt{K})$ rate. Further, we show that under the Slater condition, we have both optimality and feasibility gaps diminish at $\cO(1/\sqrt{K})$ rates. Before going further, we first impose some mild assumptions.
%The estimation errors are introduced in estimating $\rho(\pi)$ and $\eta(\pi)$, and computation errors come from actor and critic updates in \eqref{eq:policy-update}, \eqref{eq:mspbe1} and \eqref{eq:mspbe2}. 

%Throughout our discussion, we impose the following assumption.

\begin{assumption}[]\label{assumption:unique-solution}
	There exists a saddle point $(\lambda^*, \pi^*, y^*)$, which is a solution of the saddle point optimization problem \eqref{eq:problem-formulated}.
\end{assumption}

%\begin{assumption}[Bounded Reward]\label{assumption:bounded-reward}
%There exists a constant $M>0$ such that $R_{\rm max} = \sup_{(s,a)\in \cS\times\cA}|r(s, a)|$, which implies there exists constants $M>0$ and $F>0$ such that $\rho(\pi)\le M$ and $\eta(\pi) \le F$ for any policy $\pi$.
%\end{assumption}

\begin{assumption}[]\label{assumption:closed-q-w}
For any $Q_q \in \cU(m_{\rm c},H_{\rm c},R_{\rm c})$, $W_\omega \in \cU(m_{\rm b},H_{\rm b},R_{\rm b})$, and policy $\pi$, we have $\cT^\pi Q_q \in \cU(m_{\rm c},H_{\rm c},R_{\rm c})$ and $\hat{\cT}^\pi W_\omega \in \cU(m_{\rm b},H_{\rm b},R_{\rm b})$.
\end{assumption}

Assumption \ref{assumption:unique-solution} assumes the existence of a solution. Assumption \ref{assumption:closed-q-w} assumes that the class of DNNs $\cU(m_{\rm c},H_{\rm c},R_{\rm c})$  in \eqref{eq:def-dnn-class} is closed under the Bellman  operator $\cT^\pi$ defined in \eqref{eq:bellman1}, and the class $\cU(m_{\rm b},H_{\rm b},R_{\rm b})$  is closed under the operator $\hat{\cT}^\pi$ defined in \eqref{eq:bellman2}. Such an assumption is standard in literature for all classes of policies  \citep{munos2008finite,antos2008fitted,tosatto2017boosted,yang2019theoretical,liu2019neural}.

Furthermore, to guarantee the convergence of TD updates \eqref{eq:td-update1} and \eqref{eq:td-update2}, we need an additional contraction condition, which is common in reinforcement learning literature \citep{van1998learning}.  
In particular,  suppose $s \in \cR^d$, and  define a Hilbert space $L_2(\cR^d, \cB(\cR^d), \pi)$, which is endowed with an inner product $\la J_1,J_2\ra_\pi = \int J_1(s)J_2(s)\pi(ds)$ for any real-valued functions $J_1,J_2$ on the Hilbert space. Also, for any policy $\pi$, we denote by $P^\pi$ an operator given by $(P^\pi J)(s) = \EE_{\pi}[J(s_1) \,|\, s_0 = s]$. The contraction assumption assumes the contraction property of the operator $P^\pi$ as follows.

\begin{assumption}\label{assumption:contraction}
For any policy $\pi$, there exists a constant $\beta_\pi \in [0,1)$ such that $\|P^\pi J\|_\pi \le \beta_\pi \| J \|_\pi$, where $\|J\|_\pi = \langle J, J\rangle _\pi$, for all $J:L_2(\cR^d, \cB(\cR^d), \pi)\rightarrow \RR$ that are orthogonal to $e=(1,1,...,1,1)^\top$.  
\end{assumption}

%\begin{assumption}[Bounded optimal $\lambda$ and $y$]\label{assumption:bounded-lambda-y}
%There exist two constant $N > 0$ and $M >0$ such that $|\lambda^*| \leq N$ and $|y^*| \leq M$, where $\lambda^*$ and $y^*$ are optimal $\lambda$ and $y$, respectively.  (seems unnecessary ?????)
%\end{assumption}

%\begin{assumption}[Bounded gradient]\label{assumption:bounded-gradient}
%	There exist two constant $L_\lambda > 0$ and $L_y >0$ such that $|\partial_\lambda\cL| \leq L_\lambda$ and $|\partial_y\cL| \leq L_y$.
%\end{assumption}
%%%%%%%%%%%%%%%%%%%%%%%%%%%%%%%%%%%%%%%%%%%%%%%%%%%%%%%%%%%%%%%%%%%%%%%%%%%%%%%%%
% !TEX root = main.tex

\subsection{Estimation Errors}\label{sec:estimate-error}
We first bound the estimation errors, where we provide the rates of convergence of the estimators $\overline{\rho}(\pi_{\theta_k})$ and $\overline{\eta}(\pi_{\theta_k})$ towards $\rho(\pi)$ and $\eta(\pi)$.

\begin{lemma}[Estimation Errors]\label{lem:average-error} 
	%For each $k\in[K]$, and for any $p\in (0,1)$, 
	For any $p\in (0,1)$, and for all $k\in[K]$, the estimators $\overline{\rho}(\pi_{\theta_k})$ and $\overline{\eta}(\pi_{\theta_k})$ in \eqref{eq:estimate:rho:eta} satisfy, with probability at least $1-p$,  
	\$
	 |\rho(\pi_{\theta_k}) - \overline{\rho}(\pi_{\theta_k}) |   \le \cO\bigl(T^{-1/2}\log(4K/p)^{1/2}\bigr),    \quad  |\eta(\pi_{\theta_k}) - \overline{\eta}(\pi_{\theta_k}) |  \le \cO\bigl(T^{-1/2}\log(4K/p)^{1/2}\bigr),
	\$
where $T$ is the simulated sample size. 
\end{lemma}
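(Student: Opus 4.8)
The plan is to observe that $\overline{\rho}(\pi_{\theta_k})$ and $\overline{\eta}(\pi_{\theta_k})$ in \eqref{eq:estimate:rho:eta} are empirical averages of uniformly bounded random variables that, conditionally on the policy $\pi_{\theta_k}$, are independent and distributed according to the stationary state-action distribution $\sigma_{\pi_{\theta_k}}$; the bound then follows from Hoeffding's inequality together with a union bound over the $2K$ relevant events. Concretely, I would fix $k$ and let $\mathcal{F}_k$ denote the $\sigma$-algebra generated by all randomness produced up to and including the construction of $\theta_k$, so that $\pi_{\theta_k}$ and $\sigma_k = \sigma_{\pi_{\theta_k}}$ are $\mathcal{F}_k$-measurable. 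By the sampling step of Algorithm~\ref{alg:risac}, conditionally on $\mathcal{F}_k$ the pairs $\{(s_t^k, a_t^k)\}_{t=1}^T$ are i.i.d.\ draws from $\sigma_k$, so that $\EE[\overline{\rho}(\pi_{\theta_k}) \mid \mathcal{F}_k] = \EE_{\sigma_k}[r(s,a)] = \rho(\pi_{\theta_k})$ and $\EE[\overline{\eta}(\pi_{\theta_k}) \mid \mathcal{F}_k] = \EE_{\sigma_k}[r(s,a)^2] = \eta(\pi_{\theta_k})$ by \eqref{eq:rho} and \eqref{eq:eta}.

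Next I would invoke Hoeffding's inequality conditionally on $\mathcal{F}_k$. Since $r(s,a) \in [-M, M]$ and hence $r(s,a)^2 \in [0, M^2]$ by the uniform boundedness of the reward (cf.\ \eqref{eq:bound:rho:eta}), each summand defining $\overline{\rho}(\pi_{\theta_k})$ varies over an interval of length $2M$ and each summand defining $\overline{\eta}(\pi_{\theta_k})$ over an interval of length $M^2$. Hoeffding's inequality therefore gives, for every $\varepsilon > 0$,
\begin{align*}
\PP\bigl(|\rho(\pi_{\theta_k}) - \overline{\rho}(\pi_{\theta_k})| \ge \varepsilon \mid \mathcal{F}_k\bigr) &\le 2\exp\Bigl(-\frac{T\varepsilon^2}{2M^2}\Bigr), \\
\PP\bigl(|\eta(\pi_{\theta_k}) - \overline{\eta}(\pi_{\theta_k})| \ge \varepsilon \mid \mathcal{F}_k\bigr) &\le 2\exp\Bigl(-\frac{T\varepsilon^2}{2M^4}\Bigr),
\end{align*}
and since the right-hand sides do not depend on the conditioning event, the same bounds hold unconditionally. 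Setting each right-hand side equal to $p/(2K)$ and solving for $\varepsilon$ produces $\varepsilon = \cO\bigl(T^{-1/2}\log(4K/p)^{1/2}\bigr)$, with the constant depending only on $M$; this is where the $4K/p$ inside the logarithm comes from. A union bound over the $2K$ events — one for $\overline{\rho}(\pi_{\theta_k})$ and one for $\overline{\eta}(\pi_{\theta_k})$, over all $k \in [K]$ — then shows that all the stated inequalities hold simultaneously with probability at least $1-p$.

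The step that deserves the most care — and the only place where the argument could become more than routine — is the conditional i.i.d.\ claim above: one must ensure that the samples feeding $\overline{\rho}(\pi_{\theta_k})$ and $\overline{\eta}(\pi_{\theta_k})$ are genuinely drawn from the stationary distribution $\sigma_{\pi_{\theta_k}}$, rather than forming a single finite trajectory of the Markov chain induced by $\pi_{\theta_k}$. In the latter case the trajectory average is biased away from $\rho(\pi_{\theta_k})$ by a term of order $t_{\mathrm{mix}}/T$, and Hoeffding's inequality would have to be replaced by a concentration inequality for Markov chains under a uniform-mixing or Lyapunov-drift condition; the resulting rate is still $T^{-1/2}$ but carries an extra mixing-time factor absorbed into the $\cO(\cdot)$. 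I expect the paper to adopt the cleaner convention of i.i.d.\ sampling from the stationary distribution, consistent with the ``$(s_t, a_t) \sim \sigma_k$'' notation in Algorithm~\ref{alg:risac}, in which case the proof is exactly the two-step Hoeffding-plus-union-bound argument outlined above.
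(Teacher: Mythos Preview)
Your proposal is correct and essentially mirrors the paper's proof: the paper also fixes $k$, invokes a bounded-reward concentration inequality to get probability at least $1-p/(2K)$ for each of the two deviations, and then takes a union bound over the $2K$ events. The only cosmetic difference is that the paper cites the Azuma--Hoeffding inequality rather than plain Hoeffding; under the algorithm's sampling convention $(s_t,a_t)\sim\sigma_k$ the two coincide, and your discussion of the i.i.d.-versus-trajectory issue is in fact more careful than what the paper spells out.
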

\begin{proof}
	Fix $k \in [K]$, by the bounded reward assumption and Azuma-Hoeffding inequality, 
	%we have that 
	%for any $\varepsilon >0$,
	%\$
	%p = \PP\bigl( | \rho(\pi_{\theta_k}) - \overline{\rho}(\pi_{\theta_k}\bigr) | \geq \varepsilon) \geq e^{-T\varepsilon^2/2M^2}.
	%\$
	%This is equivalent to 
	%\$
	%\varepsilon \leq M \cdot\bigl(T^{-1/2}\log(2/p)^{1/2}\bigr).
	%\$
	%Then it holds that, with probability at least $1-p/(2K)$,
	it holds with probability at least  $1-p/(2K)$ that
	\$
	| \rho(\pi_{\theta_k}) - \overline{\rho}(\pi_{\theta_k}) | \leq \cO\bigl(T^{-1/2}\log(4K/p)^{1/2}\bigr).
	\$
	Similarly, with probability at least $1-p/(2K)$, it holds that 
	\$
	| \eta(\pi_{\theta_k}) - \overline{\eta}(\pi_{\theta_k}) | \leq \cO\bigl(T^{-1/2}\log(4K/p)^{1/2}\bigr) .
	\$
	Together with the union bound argument, we complete the proof.
\end{proof}

By this lemma, in what follows, without loss of generality, we assume that the  errors satisfy that, for some $c_k,d_k >0$,
\#\label{eq:average error}
|\rho(\pi_{\theta_k}) - \overline{\rho}(\pi_{\theta_k}) | \le c_k,    \quad\qquad\qquad  |\eta(\pi_{\theta_k}) - \overline{\eta}(\pi_{\theta_k}) |  \le d_k.
\#

\subsection{Computation Errors}\label{sec:ac-error}

In this subsection, we bound the approximation errors of deep neural networks. First, in the following lemma, we characterize the error in the actor update step, which is induced by solving  subproblem  \eqref{eq:policy-update} using the SGD method in \eqref{eq:sgd-update1}. 
%See Line \ref{line:b} of Algorithm \ref{alg:risac} and Algorithm \ref{alg:ac-update} for a detailed algorithm. 

\begin{lemma}[$\pi$-Update Error]\label{thm:ac-error}
Suppose that Assumption \ref{assumption:closed-q-w} holds. Let  $\zeta = T^{-1/2}$, $H_{\rm a} = \cO (T^{1/4})$, $R_{\rm a} = \cO (m_{\rm a}^{1/2} H_{\rm a}^{-6}(\log m_{\rm a})^{-3})$ and $m_{\rm a} = \Omega(d^{3/2}R_{\rm a}^{-1} H_{\rm a}^{-3/2} \log^{3/2}(m_{\rm a}^{1/2}/R_{\rm a}))$. Then, at the $k$-th iteration of Algorithm \ref{alg:risac}, with probability at least $1 - \exp( - \Omega(R_{\rm a}^{2/3} m_{\rm a}^{2/3}H_a))$, the output $f_{\overline{\theta}}$ of Algorithm~\ref{alg:ac-update} satisfies
\$
 &\EE \bigl[ \bigl(f_{\overline\theta}(s,a) - \tau_{k+1}\cdot(\beta_k^{-1}(1 + 2\overline{\lambda}_{k}\overline{y}_{k})Q_{q_k} - \beta_k^{-1}\overline{\lambda}_{k}W_{\omega_{k}} + \tau^{-1}_{k}f_{\theta_k}(s,a))\bigr)^2 \bigr] \\
 &\quad= \cO ( R_{\rm a}^2 T^{-1/2} + R_{\rm a}^{8/3} m_{\rm a}^{-1/6} H_{\rm a}^{7} \log m_{\rm a} ),
\$
where the expectation is taken over $\overline \theta$ and $(s,a)\sim \sigma_{\pi_{\theta_k}}$, and $T$ is the iteration counter for the SGD method. 
\end{lemma}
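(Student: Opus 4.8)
The plan is to follow the now-standard two-step analysis of training an overparameterized ReLU network in the neural-tangent-kernel regime, isolating (i) the \emph{linearization error} incurred by replacing $f_\theta$ with its first-order Taylor expansion around the initialization $\theta_0$, and (ii) the \emph{optimization error} of projected SGD on the resulting convex surrogate. Write the regression target of \eqref{eq:policy-update} as $g(s,a) = \tau_{k+1}\cdot\bigl(\beta_k^{-1}(1+2\overline\lambda_k\overline y_k)Q_{q_k}(s,a) - \beta_k^{-1}\overline\lambda_k W_{\omega_k}(s,a) + \tau_k^{-1}f_{\theta_k}(s,a)\bigr)$, and let $\widehat f_\theta(x) = f_{\theta_0}(x) + \langle\nabla_\theta f_{\theta_0}(x), \theta-\theta_0\rangle$ be the local linearization, so that $\widehat f_\theta$ is affine in $\theta$ and $\widehat L(\theta):=\EE_{\sigma_k}[(\widehat f_\theta(s,a)-g(s,a))^2]$ is convex in $\theta$ over the convex set $\cB(\theta_0, R_{\rm a})$. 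First I would invoke the uniform NTK-approximation bounds for deep ReLU networks (as in \cite{gao2019convergence,allen2018convergence}): on an event of probability at least $1-\exp(-\Omega(R_{\rm a}^{2/3}m_{\rm a}^{2/3}H_{\rm a}))$, and under the stated scaling of $H_{\rm a}$, $R_{\rm a}$, and $m_{\rm a}$, one has $\sup_{\theta\in\cB(\theta_0,R_{\rm a})}\sup_{\|x\|_2=1}|f_\theta(x)-\widehat f_\theta(x)| = \cO(R_{\rm a}^{4/3}m_{\rm a}^{-1/12}H_{\rm a}^{7/2}\log^{1/2} m_{\rm a})$ together with the companion bound on $\sup_{\theta}\sup_x\|\nabla_\theta f_\theta(x)-\nabla_\theta f_{\theta_0}(x)\|_2$ of the same order. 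Squaring and using $\|g\|_\infty = \cO(1)$ --- which follows from $|r|\le M$, the boundedness of the network outputs $Q_{q_k}$ and $W_{\omega_k}$ over $\cB(q_0,R_{\rm c})$ and $\cB(\omega_0,R_{\rm b})$, $\overline\lambda_k\in[0,N]$, $|\overline y_k|\le M$ (see \eqref{eq:bound:rho:eta:bar}), and the choices $\tau_{k+1}=\beta\sqrt K/(k+1)$, $\tau_k=\beta\sqrt K/k$, $\beta_k=\beta\sqrt K$ --- converts these into the additive term $\cO(R_{\rm a}^{8/3}m_{\rm a}^{-1/6}H_{\rm a}^{7}\log m_{\rm a})$ whenever we pass between $f_\theta$ and $\widehat f_\theta$ in either the population objective or its gradient.

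Next I would analyze the SGD recursion \eqref{eq:sgd-update1}. Its update direction $(f_{\theta(t)}(s,a)-g(s,a))\nabla_\theta f_{\theta(t)}(s,a)$ with $(s,a)\sim\sigma_k$ i.i.d.\ is, up to the linearization errors of Step (i), an unbiased estimate of $\nabla\widehat L(\theta(t))$, with second moment bounded by a constant that is polynomial in $H_{\rm a}$ and poly-logarithmic in $m_{\rm a}$ (since $\|\nabla_\theta f_\theta(x)\|_2$ is $\cO(1)$ up to such factors in the NTK regime and $\|g\|_\infty=\cO(1)$), over the bounded convex domain $\cB(\theta_0,R_{\rm a})$ of diameter $\cO(\sqrt{H_{\rm a}}R_{\rm a})$. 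The standard projected-SGD convergence bound for a convex objective, with stepsize $\zeta=T^{-1/2}$ run for $T$ steps, then yields for the averaged iterate $\overline\theta$ that $\EE[\widehat L(\overline\theta)]-\min_{\theta\in\cB(\theta_0,R_{\rm a})}\widehat L(\theta)=\cO(R_{\rm a}^{2}T^{-1/2})$, where the discrepancy between using $\nabla_\theta f_{\theta(t)}$ rather than $\nabla_\theta\widehat f_{\theta(t)}$ in \eqref{eq:sgd-update1} is again absorbed into the linearization term of Step (i). This produces the $\cO(R_{\rm a}^{2}T^{-1/2})$ summand in the statement; the expectation there is over the SGD samples (through $\overline\theta$) and $(s,a)\sim\sigma_{\pi_{\theta_k}}$.

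It remains to control $\min_{\theta\in\cB(\theta_0,R_{\rm a})}\widehat L(\theta)$ and to translate back from $\widehat f$ to $f$. Proposition~\ref{prop:exp-policy} identifies $g$, up to the factor $\tau_{k+1}$, with the energy function of the idealized update $\hat\pi_{k+1}$; invoking the standing DNN-realizability of such bounded targets in the class $\cU(m_{\rm a},H_{\rm a},R_{\rm a})$ (in the same spirit as Assumption~\ref{assumption:closed-q-w}), there is $\theta^\star\in\cB(\theta_0,R_{\rm a})$ with $f_{\theta^\star}=g$, hence $\widehat L(\theta^\star)\le\EE_{\sigma_k}[(f_{\theta^\star}(s,a)-\widehat f_{\theta^\star}(s,a))^2]=\cO(R_{\rm a}^{8/3}m_{\rm a}^{-1/6}H_{\rm a}^{7}\log m_{\rm a})$, so $\min_\theta\widehat L(\theta)$ is of this order. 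Finally, by the triangle inequality in $L_2(\sigma_k)$,
\$
\EE\bigl[(f_{\overline\theta}(s,a)-g(s,a))^2\bigr] \le 2\,\EE\bigl[(f_{\overline\theta}(s,a)-\widehat f_{\overline\theta}(s,a))^2\bigr] + 2\,\EE\bigl[\widehat L(\overline\theta)\bigr],
\$
and combining this with the bounds of the previous two paragraphs gives the claimed rate on the stated event. The main obstacle I anticipate is Step (i): obtaining the exponents on $R_{\rm a}$, $m_{\rm a}$, and $H_{\rm a}$ exactly as in the lemma requires pushing the ReLU linearization and gradient-perturbation estimates uniformly over the depth-$H_{\rm a}$ network, over the entire ball $\cB(\theta_0,R_{\rm a})$, and over the continuum of unit-norm inputs $(s,a)$, and then correctly propagating them through both the surrogate objective $\widehat L$ and the stochastic gradient used in \eqref{eq:sgd-update1}; a secondary subtlety is justifying that the composite target $g$ --- a rescaled linear combination of outputs of three \emph{different} DNN classes --- is (approximately) representable within $\cU(m_{\rm a},H_{\rm a},R_{\rm a})$, which is precisely what makes the $\min_\theta\widehat L(\theta)$ term collapse into the linearization error rather than contributing an irreducible bias.
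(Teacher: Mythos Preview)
Your proposal is correct and tracks the paper's own argument: the paper does not give a self-contained proof but simply refers to Proposition~B.3 of \cite{fu2020single}, whose content is exactly the NTK-linearization-plus-projected-SGD analysis you outline (linearize $f_\theta$ at $\theta_0$, run the convex projected-SGD bound with step $\zeta=T^{-1/2}$ to get the $R_{\rm a}^2 T^{-1/2}$ term, and absorb all passages between $f_\theta$ and $\widehat f_\theta$ into the $R_{\rm a}^{8/3}m_{\rm a}^{-1/6}H_{\rm a}^{7}\log m_{\rm a}$ linearization term). The two subtleties you flag---the uniform depth-$H_{\rm a}$ linearization/gradient-perturbation bounds and the (approximate) realizability of the composite target $g$ in $\cU(m_{\rm a},H_{\rm a},R_{\rm a})$---are precisely the places where the cited reference does the work, so you have identified the right pressure points.
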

\begin{proof}
	 See the proof of Proposition B.3 in \cite{fu2020single} for the detailed proof.
\end{proof}

Similarly, we characterize the computation errors in the critic update step, which are induced in $q$- and $\omega$-update steps in solving  subproblems in \eqref{eq:mspbe1} and \eqref{eq:mspbe2} using the TD updates in \eqref{eq:td-update1} and \eqref{eq:td-update2}. 
%See Line \ref{line:a} and \ref{line:b} of Algorithm \ref{alg:risac}, Algorithm \ref{alg:td} and Algorithm \ref{alg:td2} for details.

\begin{lemma}[$q$-Update Error]\label{thm:td}
Suppose that Assumptions \ref{assumption:closed-q-w} and \ref{assumption:contraction} hold. Let the parameters be that $\delta = T^{-1/2}$, $H_{\rm c} = \cO (T^{1/4})$, $R_{\rm c} = \cO (m_{\rm c}^{1/2} H_{\rm c}^{-6}(\log m_{\rm c})^{-3})$ and $m_{\rm c} = \Omega(d^{3/2}R_{\rm c}^{-1} H_{\rm c}^{-3/2} \log^{3/2}(m_{\rm c}^{1/2}/R_{\rm c}))$. Then, at the $k$-th iteration of Algorithm \ref{alg:risac}, with probability at least $1 - \exp( - \Omega(R_{\rm c}^{2/3} m_{\rm c}^{2/3}H_c))$, the output $Q_{\overline{q}}$ of Algorithm~\ref{alg:td} satisfies
\$
\EE\bigl[ \bigl(Q_{\overline{q}}(s,a) - Q^{\pi_{\theta_k}}(s,a)\bigr)^2 \bigr] = \cO ( R_{\rm c}^2 T^{-1/2} + R_{\rm c}^{8/3} m_{\rm c}^{-1/6} H_{\rm c}^{7} \log m_{\rm c} ),
\$
where the expectation is taken over $\overline q$ and $(s,a)\sim \sigma_{\pi_{\theta_k}}$, and $T$ is the iteration counter for the TD method. 
\end{lemma}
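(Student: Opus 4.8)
The plan is to reduce the statement to a neural temporal-difference analysis in the average-reward differential setting, following the template used for the $\pi$-update error in Lemma~\ref{thm:ac-error} and the critic analysis in \cite{fu2020single}. Let $Q_{\overline{q}}$ denote the averaged iterate returned by Algorithm~\ref{alg:td} after $T$ steps of the update \eqref{eq:td-update1} with $\delta = T^{-1/2}$, and let $q_k^\star \in \cB(q_0,R_{\rm c})$ be a parameter with $Q_{q_k^\star}$ equal to the fixed point of the differential Bellman operator $\cT^{\pi_{\theta_k}}$ in \eqref{eq:bellman1} restricted to the DNN class. Such a point exists because Assumption~\ref{assumption:closed-q-w} makes $\cU(m_{\rm c},H_{\rm c},R_{\rm c})$ invariant under $\cT^{\pi_{\theta_k}}$, and by \eqref{eq:def:q} one may take $Q_{q_k^\star} = Q^{\pi_{\theta_k}}$ up to the additive constant that the projection onto $e^\perp$ in Assumption~\ref{assumption:contraction} removes. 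By the elementary inequality $(a-c)^2 \le 2(a-b)^2 + 2(b-c)^2$,
\$
\EE\bigl[(Q_{\overline{q}}(s,a) - Q^{\pi_{\theta_k}}(s,a))^2\bigr] \le 2\,\EE\bigl[(Q_{\overline{q}}(s,a) - Q_{q_k^\star}(s,a))^2\bigr] + 2\,\EE\bigl[(Q_{q_k^\star}(s,a) - Q^{\pi_{\theta_k}}(s,a))^2\bigr],
\$
where the last term is $\cO(T^{-1})$, arising only from replacing the unknown $\rho(\pi_{\theta_k})$ by $\overline{\rho}(\pi_{\theta_k})$ in \eqref{eq:td-update1} and controlled by Lemma~\ref{lem:average-error}. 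Hence it suffices to bound the computational error $\EE[(Q_{\overline{q}} - Q_{q_k^\star})^2]$.

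To this end, linearize the critic network around its initialization: write $Q_q(s,a) = \widehat{Q}_q(s,a) + e(s,a;q)$ with $\widehat{Q}_q(s,a) = Q_{q_0}(s,a) + \langle \nabla_q Q_{q_0}(s,a),\, q - q_0\rangle$ the neural tangent linearization. Standard overparameterization estimates \citep{jacot2018neural, allen2018convergence, liu2019neural, fu2020single} give, on the event of probability at least $1 - \exp(-\Omega(R_{\rm c}^{2/3} m_{\rm c}^{2/3} H_c))$ guaranteed by the random initialization \eqref{eq:initialization} and the width scaling $m_{\rm c} = \Omega(d^{3/2} R_{\rm c}^{-1} H_{\rm c}^{-3/2} \log^{3/2}(m_{\rm c}^{1/2}/R_{\rm c}))$, that $\sup_{q\in\cB(q_0,R_{\rm c})} \EE[e(s,a;q)^2] = \cO(R_{\rm c}^{8/3} m_{\rm c}^{-1/6} H_{\rm c}^7 \log m_{\rm c})$ and that $\nabla_q Q_{q_0}$ is bounded and nearly constant over $\cB(q_0,R_{\rm c})$. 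On this event, \eqref{eq:td-update1} becomes an inexact projected stochastic semi-gradient TD iteration with the essentially fixed feature map $\nabla_q Q_{q_0}$, and the neural approximation enters only as an $\cO(R_{\rm c}^{4/3} m_{\rm c}^{-1/12} H_{\rm c}^{7/2} (\log m_{\rm c})^{1/2})$ perturbation of the update direction.

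For the resulting linear-TD recursion, the contraction Assumption~\ref{assumption:contraction}, applied to $P^{\pi_{\theta_k}}$, which plays the role played by the discount factor in the discounted setting (cf.\ \cite{van1998learning}), ensures that the expected TD direction, projected onto the tangent space, is one-point strongly monotone toward $q_k^\star$ with modulus $1 - \beta_{\pi_{\theta_k}}$. Combining this with the uniform boundedness of the rewards and of the features, and treating the transitions $(s,a)\sim\sigma_k$, $s'\sim\cP(\cdot\,|\,s,a)$, $a'\sim\pi_{\theta_k}(\cdot\,|\,s')$ drawn from the stationary distribution as a martingale-difference noise sequence, a standard stochastic-approximation telescoping over $T$ iterations with $\delta = T^{-1/2}$ and Polyak averaging yields $\EE[(Q_{\overline{q}} - Q_{q_k^\star})^2] = \cO(R_{\rm c}^2 T^{-1/2})$ plus a contribution of order $R_{\rm c}^{8/3} m_{\rm c}^{-1/6} H_{\rm c}^7 \log m_{\rm c}$ from propagating the linearization perturbation through the contraction. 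Plugging back into the display above gives the claimed rate.

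The main obstacle is that TD is a \emph{semi}-gradient scheme: there is no scalar objective being minimized, so the SGD argument behind Lemma~\ref{thm:ac-error} does not transfer verbatim, and one must instead use Assumption~\ref{assumption:contraction} to certify a descent-type inequality for the expected update. The delicate point is then to verify that the $\cO(m_{\rm c}^{-1/12})$ linearization bias and the average-reward estimation error $c_k$ from Lemma~\ref{lem:average-error} pass through this contraction argument without amplifying the constants; in particular, that $R_{\rm c}$ can be chosen simultaneously large enough to contain $q_k^\star$ and small enough (relative to $m_{\rm c}$, via the stated scalings) that the linearization stays accurate. A secondary technicality is the Markovian dependence in the sampled transitions, which under the stationarity assumption of Section~\ref{sec:2} reduces to the same martingale concentration used in Lemma~\ref{lem:average-error}.
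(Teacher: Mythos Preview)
Your proposal is correct and follows essentially the same route as the paper: linearize the critic network around its initialization, define a comparison point as a fixed point of the (linearized) projected TD update, use Assumption~\ref{assumption:contraction} to extract a one-point monotonicity inequality of the form $\langle q_* - q(t), \overline g_t^e - \overline g_*^e \rangle \le -(1-\beta_{\pi_{\theta_k}})\,\EE[(\overline Q_{q(t)} - \overline Q_{q_*})^2]$, and then telescope with $\delta = T^{-1/2}$ while absorbing the NTK linearization bias into the $R_{\rm c}^{8/3} m_{\rm c}^{-1/6} H_{\rm c}^{7}\log m_{\rm c}$ term, deferring the residual bookkeeping to \cite{fu2020single}. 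The only cosmetic difference is that the paper takes $q_*$ to be the fixed point of the linearized operator directly rather than first identifying it with $Q^{\pi_{\theta_k}}$ and then separately accounting for the $\overline\rho$-versus-$\rho$ gap, but this does not change the argument.
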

\begin{proof}
 See Appendix \ref{appendix:td} for the detailed proof.
\end{proof}

\begin{lemma}[$\omega$-Update Error]\label{thm:td2}
Suppose that Assumptions \ref{assumption:closed-q-w} and \ref{assumption:contraction} hold. Let  the parameters be that $\delta = T^{-1/2}$, $H_{\rm b} = \cO (T^{1/4})$, $R_{\rm b} = \cO (m_{\rm b}^{1/2} H_{\rm b}^{-6}(\log m_{\rm b})^{-3})$ and $m_{\rm b} = \Omega(d^{3/2}R_{\rm b}^{-1} H_{\rm b}^{-3/2} \log^{3/2}(m_{\rm b}^{1/2}/R_{\rm b}))$. Then, at the $k$-th iteration of Algorithm \ref{alg:risac}, with probability at least $1 - \exp( - \Omega(R_{\rm b}^{2/3} m_{\rm b}^{2/3}H_b ))$, the output $W_{\overline{\omega}}$ of Algorithm~\ref{alg:td2} satisfies
\$
\EE \bigl[ \bigl(W_{\overline{\omega}}(s,a) - W^{\pi_{\theta_k}}(s,a)\bigr)^2 \bigr] = \cO ( R_{\rm b}^2 T^{-1/2} + R_{\rm b}^{8/3} m_{\rm b}^{-1/6} H_{\rm b}^{7} \log m_{\rm b} ),
\$
where the expectation is taken over $\overline \omega$ and $(s,a)\sim \sigma_{\pi_{\theta_k}}$, and $T$ is the iteration counter for the TD method. 
\end{lemma}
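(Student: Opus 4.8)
The plan is to reduce Lemma~\ref{thm:td2} directly to Lemma~\ref{thm:td}. Observe that the $\omega$-update subproblem~\eqref{eq:mspbe2} together with the TD recursion~\eqref{eq:td-update2} is \emph{formally identical} to the $q$-update subproblem~\eqref{eq:mspbe1} with recursion~\eqref{eq:td-update1}, under the substitutions: the one-step reward $r(s,a)$ is replaced by the squared reward $r(s,a)^2$, the average reward $\rho(\pi_{\theta_k})$ by $\eta(\pi_{\theta_k})$ (and its sample estimate $\overline{\rho}(\pi_{\theta_k})$ by $\overline{\eta}(\pi_{\theta_k})$), the Bellman operator $\cT^{\pi_{\theta_k}}$ by $\hat{\cT}^{\pi_{\theta_k}}$ as in~\eqref{eq:bellman2}, the target value function $Q^{\pi_{\theta_k}}$ by $W^{\pi_{\theta_k}}$ from~\eqref{eq:def:w}, and the DNN class parameters $(m_{\rm c},H_{\rm c},R_{\rm c})$ by $(m_{\rm b},H_{\rm b},R_{\rm b})$. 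Hence it suffices to check that the three ingredients driving the proof of Lemma~\ref{thm:td} all survive this substitution.

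First, the boundedness of the reward, $\sup_{(s,a)}|r(s,a)|\le M$, immediately yields $\sup_{(s,a)} r(s,a)^2 \le M^2$, so the squared-reward process is uniformly bounded by a fixed constant; every concentration and boundedness estimate used for the $q$-update (the bound on the TD fixed point, on $W^{\pi_{\theta_k}}$, and on the stochastic semi-gradient increments) goes through verbatim with $M$ replaced by $M^2$, which only affects absolute constants hidden in the $\cO(\cdot)$ notation. Second, Assumption~\ref{assumption:closed-q-w} guarantees that $\cU(m_{\rm b},H_{\rm b},R_{\rm b})$ is closed under $\hat{\cT}^\pi$ for every policy $\pi$, which is exactly the realizability hypothesis used for the $Q$-critic. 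Third, the contraction in Assumption~\ref{assumption:contraction} is stated in terms of the operator $P^\pi$ acting on functions of the state alone, and $P^\pi$ is unaffected by whether the per-step cost is $r$ or $r^2$; thus the same $\beta_{\pi_{\theta_k}}\in[0,1)$ controls the geometric convergence of~\eqref{eq:td-update2}. The parameter scalings $\delta = T^{-1/2}$, $H_{\rm b}=\cO(T^{1/4})$, $R_{\rm b}=\cO(m_{\rm b}^{1/2}H_{\rm b}^{-6}(\log m_{\rm b})^{-3})$, and $m_{\rm b}=\Omega(d^{3/2}R_{\rm b}^{-1}H_{\rm b}^{-3/2}\log^{3/2}(m_{\rm b}^{1/2}/R_{\rm b}))$ are the literal analogues of the conditions in Lemma~\ref{thm:td}.

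With these observations, the argument of Lemma~\ref{thm:td} (Appendix~\ref{appendix:td}) applies mutatis mutandis: one decomposes the error of the TD output into (i) the bias of the overparameterized-network linearization around $\omega_0$, controlled by $R_{\rm b}^{8/3}m_{\rm b}^{-1/6}H_{\rm b}^7\log m_{\rm b}$ via neural-tangent-kernel perturbation bounds on $W_\omega$ and $\nabla_\omega W_\omega$ over $\cB(\omega_0,R_{\rm b})$, and (ii) the optimization error of the $T$-step projected TD recursion, controlled by $R_{\rm b}^2 T^{-1/2}$ through the contraction-based analysis of stochastic semi-gradient iterations; combining the two yields the stated bound on $\EE[(W_{\overline{\omega}}(s,a) - W^{\pi_{\theta_k}}(s,a))^2]$ on the high-probability event $1-\exp(-\Omega(R_{\rm b}^{2/3}m_{\rm b}^{2/3}H_b))$.

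I expect the only genuinely careful point to be verifying that $W^{\pi_{\theta_k}}$ defined in~\eqref{eq:def:w} is indeed the fixed point of $\hat{\cT}^{\pi_{\theta_k}}$ up to the usual additive-constant ambiguity of differential value functions --- the squared-reward analogue of the differential Bellman equation, which follows from~\eqref{eq:def:w}, \eqref{eq:eta}, and~\eqref{eq:bellman2} --- and, relatedly, that the offset estimate $\overline{\eta}(\pi_{\theta_k})$ enters~\eqref{eq:td-update2} in place of $\eta(\pi_{\theta_k})$ exactly as $\overline{\rho}$ does in~\eqref{eq:td-update1}, so that the estimation error $d_k$ from~\eqref{eq:average error} propagates into the final bound identically to how $c_k$ enters the $Q$-critic analysis. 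Everything else is routine bookkeeping with $M\mapsto M^2$.
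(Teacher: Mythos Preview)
Your proposal is correct and matches the paper's approach exactly: the paper's own proof of Lemma~\ref{thm:td2} consists of the single sentence ``This proof is similar to the proof of Lemma~\ref{thm:td}, and we omit it to avoid repetition,'' and your reduction via the substitutions $r\mapsto r^2$, $\rho\mapsto\eta$, $\cT^\pi\mapsto\hat\cT^\pi$, $Q\mapsto W$, $(m_{\rm c},H_{\rm c},R_{\rm c})\mapsto(m_{\rm b},H_{\rm b},R_{\rm b})$ is precisely the intended argument. If anything, your write-up is more careful than the paper's, since you explicitly verify that Assumptions~\ref{assumption:closed-q-w} and~\ref{assumption:contraction} and the reward bound all transfer to the squared-reward setting.
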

\begin{proof}
  This proof is similar to the proof of Lemma \ref{thm:td}, and we omit it to avoid repetition.
\end{proof}

  %If the widths $m_{\rm a}$, $m_{\rm c}$ and $m_{\rm_b}$ of the DNNs $f_{\theta}$, $Q_{q}$ and $W_\omega$ are sufficiently large,  then the errors established in Lemmas \ref{thm:ac-error}, \ref{thm:td} and \ref{thm:td2} converges to zero at rates of $O (1/\sqrt{T})$. 
  
 Essentially, putting Lemmas \ref{thm:ac-error}, \ref{thm:td} and \ref{thm:td2} together, we establish that the computation errors incurred by fitting the DNNs  diminish at  rates of $\cO(T^{-1/2})$ if the network widths $m_{\rm a}$, $m_{\rm c}$ and $m_{\rm_b}$ of the DNNs $f_{\theta}$, $Q_{q}$ and $W_\omega$ are sufficiently large.

%  converge to zero at the rate of $\cO(T^{-1/2})$ if the widths $m_{\rm a}$, $m_{\rm c}$ and $m_{\rm_b}$ of the DNNs $f_{\theta}$, $Q_{q}$ and $W_\omega$ are sufficiently large.

%%%%%%%%%%%%%%%%%%%%%%%%%%%%%%%%%%%%%%%%%%%%%%%%%%%%%%%%%%%%%%%%%%%%%%%%%%%%%%%%%

\subsection{Error Propagation}\label{sec:error}
We then bound the policy error propagation at each iteration by analyzing the difference between our  policy update $\pi_{\theta_{k+1}}$  in \eqref{eq:policy-update} and an ideal policy update $\pi_{k+1}$ defined below in~\eqref{eq:pi-new-true}.  Recall that, as defined in \eqref{eq:pi-new}, $\hat{\pi}_{k+1}$ is a policy update based on $\overline{\lambda}_k$, $\overline{y}_k$, $Q_{q_k}$ and $W_{\omega_k}$, which are the estimators for the true $\lambda_k$, $y_k$, $Q^{\pi_{\theta_k}}$ and $W^{\pi_{\theta_k}}$, respectively. Correspondingly, we define the ideal policy update based on $\overline{\lambda}_k$, $\overline{y}_k$, $Q^{\pi_{\theta_k}}$ and $W^{\pi_{\theta_k}}$ as
\begin{equation}
\begin{aligned}\label{eq:pi-new-true}
\pi_{k+1} = \argmax_\pi\EE_{\nu_k}& [\la (1+2\overline{\lambda}_k\overline{y}_k)Q^{\pi_{\theta_k}}(s, \cdot) - \overline{\lambda}_kW^{\pi_{\theta_k}}(s, \cdot), \pi(\cdot, s)\ra \\
&- \beta_k \cdot {\rm KL}\bigl(\pi(\cdot\,|\,s)\,\|\, \pi_{\theta_k}(\cdot\,|\,s)\bigr)\bigr].
\end{aligned}
\end{equation}
By  Proposition \ref{prop:exp-policy}, we have a closed-form solution of $\pi_{k+1}$ that
\$
\pi_{k+1}\propto \exp\bigl(\beta_k^{-1}(1+2\overline{\lambda}_k\overline{y}_k)  Q^{\pi_{\theta_k}} - \beta_k^{-1}\overline{\lambda}_kW^{\pi_{\theta_k}}+ \tau_k^{-1} f_{\theta_k}\bigr).
\$

For ease of presentation, we adopt the following notations to denote density ratios of policies and stationary distributions,
\#\label{eq:w0w}
 \phi^*_{k} = \EE_{{\sigma}_k}[|{\ud \pi^*}/{\ud \pi_0} - {\ud \pi_{\theta_k}}/{\ud \pi_0}|^2]^{1/2},\quad
 \psi^*_k =  \EE_{\sigma_k}[|{\ud \sigma^*}/{\ud \sigma_k} - \ud \nu^*/\ud \nu_k|^2]^{1/2},
\#
where $\ud \pi^*/\ud \pi_0$, $\ud \pi_{\theta_k}/\ud \pi_0$, $\ud \sigma^*/\ud \sigma_k$, and $\ud \nu^*/\ud \nu_k$ are the Radon-Nikodym derivatives, and recall that we denote  the optimal policy as $\pi^*$,  its stationary state distribution as $\nu^*$, and  its stationary state-action distribution as  $\sigma^*$.

We then prove an important lemma for the error propagation, which essentially quantifies how the  errors of  policy update $\hat{\pi}_{k+1}$ in \eqref{eq:policy-update} and the  policy evaluation propagate into the infinite-dimensional policy space.

\begin{lemma}[Error Propagation]\label{lem:error-kl}
Suppose that the policy improvement error in Line \ref{line:sgd} of Algorithm \ref{alg:risac} satisfies
\#\label{eq:error-kl1}
\EE_{{\sigma}_k}\bigl[\bigl(f_{\theta_{k+1}}(s, a) - \tau_{k+1}\cdot(\beta_k^{-1}Q_{\omega_k}(s, a) - \tau^{-1}_kf_{\theta_k}(s, a))\bigr)^2 \bigr] \leq \epsilon_{k+1},
\#
and the policy evaluation error of Q-function in Line \ref{line:td:q} of Algorithm \ref{alg:risac} satisfies
\#\label{eq:error-kl2}
\EE_{\sigma_k}\bigl[\bigl(Q_{q_k}(s, a) - Q^{\pi_{\theta_k}}(s, a)\bigr)^2\bigr] \leq \epsilon'_{k},
\#
and the policy evaluation error of W-function in Line \ref{line:td:w} of Algorithm \ref{alg:risac} satisfies
\#\label{eq:error-kl3}
\EE_{\sigma_k}\bigl[\bigl(W_{\omega_k}(s, a) - W^{\pi_{\theta_k}}(s, a)\bigr)^2\bigr] \leq \epsilon_{k}''.
\#
For $\pi_{k+1}$ defined in \eqref{eq:pi-new-true} and $\pi_{\theta_{k+1}}$ obtained in Line \ref{line:sgd} of Algorithm \ref{alg:risac}, we have
\#\label{812759}
\bigl| \EE_{\nu^*}\big[ \big\la \log\big(\pi_{\theta_{k+1}}(\cdot\,|\,s)/\pi_{k+1}(\cdot\,|\,s)\big), \pi^*(\cdot\,|\,s)-\pi_{\theta_k}(\cdot\,|\,s) \big\ra]\bigr|\le \varepsilon_k,
\#
where $\varepsilon_k= \tau_{k+1}^{-1}\epsilon_{k+1}\cdot\phi^*_{k}+(1+2MN)\cdot\beta_k^{-1}\epsilon_{k}'\cdot\psi^*_{k} +N\cdot\beta_k^{-1}\epsilon_{k}''\cdot\psi^*_{k}.$
\end{lemma}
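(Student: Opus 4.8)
The plan is to exploit the explicit exponential-family forms of the two policies and reduce the claim to a short decomposition plus three Cauchy--Schwarz estimates, one per source of error. First I would recall from Proposition~\ref{prop:exp-policy} that $\pi_{\theta_{k+1}}\propto\exp(\tau_{k+1}^{-1}f_{\theta_{k+1}})$, that $\pi_{k+1}\propto\exp(g^*_{k+1})$ with $g^*_{k+1}=\beta_k^{-1}(1+2\overline\lambda_k\overline y_k)Q^{\pi_{\theta_k}}-\beta_k^{-1}\overline\lambda_k W^{\pi_{\theta_k}}+\tau_k^{-1}f_{\theta_k}$, and---as an intermediate object---that $\hat\pi_{k+1}\propto\exp(g_{k+1})$ as in \eqref{eq:pi-new}, where $g_{k+1}$ is obtained from $g^*_{k+1}$ by replacing $Q^{\pi_{\theta_k}},W^{\pi_{\theta_k}}$ with the critics $Q_{q_k},W_{\omega_k}$. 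Taking logarithms gives $\log(\pi_{\theta_{k+1}}/\pi_{k+1})(s,a)=\tau_{k+1}^{-1}f_{\theta_{k+1}}(s,a)-g^*_{k+1}(s,a)+c_k(s)$, where $c_k(s)$ collects the two log-partition functions and does not depend on $a$. Since $\pi^*(\cdot\,|\,s)$ and $\pi_{\theta_k}(\cdot\,|\,s)$ are both probability distributions on $\cA$, the term $c_k(s)$ is annihilated in the inner product against $\pi^*(\cdot\,|\,s)-\pi_{\theta_k}(\cdot\,|\,s)$, so the quantity to bound equals $\EE_{\nu^*}[\langle \tau_{k+1}^{-1}f_{\theta_{k+1}}-g^*_{k+1},\,\pi^*(\cdot\,|\,s)-\pi_{\theta_k}(\cdot\,|\,s)\rangle]$.

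Next I would split $\tau_{k+1}^{-1}f_{\theta_{k+1}}-g^*_{k+1}=(\tau_{k+1}^{-1}f_{\theta_{k+1}}-g_{k+1})+(g_{k+1}-g^*_{k+1})$. The first bracket is exactly $\tau_{k+1}^{-1}$ times the residual of the least-squares SGD subproblem \eqref{eq:policy-update}, so by \eqref{eq:error-kl1} its $L^2(\sigma_k)$-norm is controlled by $\epsilon_{k+1}$. The second bracket equals $\beta_k^{-1}(1+2\overline\lambda_k\overline y_k)(Q_{q_k}-Q^{\pi_{\theta_k}})-\beta_k^{-1}\overline\lambda_k(W_{\omega_k}-W^{\pi_{\theta_k}})$, i.e.\ a linear combination of the two policy-evaluation residuals whose scalar coefficients are uniformly bounded: the projection in \eqref{eq:update-lambda-form} forces $\overline\lambda_k\in[0,N]$, and the bounded-reward bound \eqref{eq:bound:rho:eta:bar} gives $|\overline y_k|=|\overline\rho(\pi_{\theta_k})|\le M$, hence $|1+2\overline\lambda_k\overline y_k|\le 1+2MN$ and $|\overline\lambda_k|\le N$. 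Thus, modulo these prefactors and the explicit $\beta_k^{-1}$, the second bracket is controlled in $L^2(\sigma_k)$ by $\epsilon_k'$ and $\epsilon_k''$ via \eqref{eq:error-kl2} and \eqref{eq:error-kl3}.

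Finally I would convert each of the three pieces into a bound on $\EE_{\nu^*}[\langle\cdot,\pi^*(\cdot\,|\,s)-\pi_{\theta_k}(\cdot\,|\,s)\rangle]$ via a change of measure followed by Cauchy--Schwarz. The relevant identities are $\EE_{\nu^*}[\langle h(s,\cdot),\pi^*(\cdot\,|\,s)-\pi_{\theta_k}(\cdot\,|\,s)\rangle]=\EE_{\sigma_k}[h(s,a)(\ud\sigma^*/\ud\sigma_k-\ud\nu^*/\ud\nu_k)]$, which follows from $\ud\sigma^*/\ud\sigma_k-\ud\nu^*/\ud\nu_k=(\ud\nu^*/\ud\nu_k)\cdot(\pi^*(a\,|\,s)-\pi_{\theta_k}(a\,|\,s))/\pi_{\theta_k}(a\,|\,s)$, and its analogue with the reference policy $\pi_0$ in place of $\pi_{\theta_k}$, which produces the factor $\ud\pi^*/\ud\pi_0-\ud\pi_{\theta_k}/\ud\pi_0$. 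Applying Cauchy--Schwarz in $L^2(\sigma_k)$ then pairs the $L^2(\sigma_k)$ bound on the SGD residual with $\phi_k^*$ and each $L^2(\sigma_k)$ bound on a critic residual with $\psi_k^*$; carrying along the prefactors $\tau_{k+1}^{-1}$, $(1+2MN)\beta_k^{-1}$, and $N\beta_k^{-1}$ and applying the triangle inequality yields the stated $\varepsilon_k$. The main obstacle I anticipate is exactly this last bookkeeping step---choosing the reference measure for each residual so that the factors $\phi_k^*$ and $\psi_k^*$ come out precisely as in \eqref{eq:w0w}, together with the careful verification that the normalization constants are indeed killed by $\pi^*-\pi_{\theta_k}$ and that the coefficients are genuinely bounded by $M$ and $N$; the remaining two Cauchy--Schwarz steps are routine.
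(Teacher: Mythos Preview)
Your proposal is correct and follows essentially the same route as the paper's proof: the paper also cancels the log-partition functions against $\pi^*-\pi_{\theta_k}$, performs the same three-way split into the SGD residual and the two critic residuals, bounds the scalar prefactors via $\overline\lambda_k\in[0,N]$ and $|\overline y_k|\le M$, and then applies the identical changes of measure (reference $\pi_0$ for the SGD piece yielding $\phi_k^*$, reference $\pi_{\theta_k}$ for the critic pieces yielding $\psi_k^*$) followed by Cauchy--Schwarz in $L^2(\sigma_k)$. Your anticipated ``bookkeeping'' obstacle is exactly where the paper spends its effort as well.
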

\begin{proof}See Appendix \ref{812354} for the detailed proof. \end{proof}

Recall that we consider energy-based policies, where the energy function $f_\theta$ is parametrized as a DNN. The next lemma characterizes the stepwise energy difference by quantifying the difference between 
$f_{\theta_{k+1}}$ and $f_{\theta_k}$.

\begin{lemma}[Stepwise Energy Difference]\label{lem:stepwise-energy}
Under the same assumptions of Lemma \ref{lem:error-kl}, we~have
\$
\EE_{\nu^*}[\| \tau_{k+1}^{-1} f_{\theta_{k+1}}(s,\cdot) - \tau_k^{-1}f_{\theta_k}(s,\cdot)  \|_{\infty}^2] \le 2\varepsilon_k'+ 2\beta_k^{-2}\hat{M},
\$
where $\varepsilon_k'=|\cA|\cdot \tau_{k+1}^{-2}\epsilon_{k+1}^2$ and $\hat{M}=4(1+2MN)^2 \cdot \EE_{\nu^*}[ \max_{a\in\cA} (Q_{q_0}(s,a))^2 + R_{\rm c}^2]+ 4N^2 \cdot \EE_{\nu^*}[ \max_{a\in\cA} (W_{\omega_0}(s,a))^2 + R_{\rm b}^2 ] .$
\end{lemma}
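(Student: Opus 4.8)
The plan is to start from the defining least--squares relation of the policy--parameter update \eqref{eq:policy-update}. Write the combined critic as $\widetilde Q_k(s,a) := (1+2\overline{\lambda}_k\overline{y}_k)Q_{q_k}(s,a) - \overline{\lambda}_k W_{\omega_k}(s,a)$ and its rescaled target as $g_{k+1}(s,a) := \tau_{k+1}(\beta_k^{-1}\widetilde Q_k(s,a) + \tau_k^{-1}f_{\theta_k}(s,a))$, so that the $\pi$-update error bound \eqref{eq:error-kl1} reads $\EE_{\sigma_k}[(f_{\theta_{k+1}}(s,a) - g_{k+1}(s,a))^2]\le\epsilon_{k+1}$. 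Since $\tau_{k+1}^{-1}g_{k+1} - \tau_k^{-1}f_{\theta_k} = \beta_k^{-1}\widetilde Q_k$ by construction, we obtain the exact identity
\[
\tau_{k+1}^{-1}f_{\theta_{k+1}}(s,\cdot) - \tau_k^{-1}f_{\theta_k}(s,\cdot) = \tau_{k+1}^{-1}\bigl(f_{\theta_{k+1}}(s,\cdot)-g_{k+1}(s,\cdot)\bigr) + \beta_k^{-1}\widetilde Q_k(s,\cdot).
\]
Taking $\|\cdot\|_\infty$ over $\cA$, applying the triangle inequality together with $(a+b)^2\le 2a^2+2b^2$, and then $\EE_{\nu^*}$, reduces the claim to two separate bounds: $2\tau_{k+1}^{-2}\,\EE_{\nu^*}[\|f_{\theta_{k+1}}(s,\cdot)-g_{k+1}(s,\cdot)\|_\infty^2]$ and $2\beta_k^{-2}\,\EE_{\nu^*}[\|\widetilde Q_k(s,\cdot)\|_\infty^2]$, which I expect to produce the $2\varepsilon_k'$ and $2\beta_k^{-2}\hat M$ terms, respectively.

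For the first term I would bound $\|h(s,\cdot)\|_\infty^2\le\sum_{a\in\cA}h(s,a)^2$ (this is exactly where the $|\cA|$ factor in $\varepsilon_k'$ enters) and then invoke \eqref{eq:error-kl1}, so that $\EE_{\nu^*}[\|f_{\theta_{k+1}}(s,\cdot)-g_{k+1}(s,\cdot)\|_\infty^2]$ is controlled by $|\cA|$ times the per-sample approximation error, contributing the term $2\varepsilon_k'$. For the second term, split once more, $\|\widetilde Q_k(s,\cdot)\|_\infty^2 \le 2(1+2\overline\lambda_k\overline y_k)^2\|Q_{q_k}(s,\cdot)\|_\infty^2 + 2\overline\lambda_k^2\|W_{\omega_k}(s,\cdot)\|_\infty^2$, and bound the coefficients using $\overline\lambda_k\in[0,N]$ (the projection $\Pi_{[0,N]}$ in \eqref{eq:update-lambda-form}) and $|\overline y_k| = |\overline\rho(\pi_{\theta_k})|\le M$ (from \eqref{eq:bound:rho:eta:bar}), giving $(1+2\overline\lambda_k\overline y_k)^2\le(1+2MN)^2$ and $\overline\lambda_k^2\le N^2$. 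Finally I would bound the critic networks pointwise in $s$: for $q_k\in\cB(q_0,R_{\rm c})$ the local almost--linearity estimates for overparameterized ReLU networks (as already used for Lemmas \ref{thm:ac-error} and \ref{thm:td}; see \cite{allen2018convergence,gao2019convergence}) give $\|Q_{q_k}-Q_{q_0}\|_\infty = \cO(R_{\rm c})$ with high probability over the initialization, hence $\|Q_{q_k}(s,\cdot)\|_\infty^2 \le 2\max_{a\in\cA} Q_{q_0}(s,a)^2 + \cO(R_{\rm c}^2)$, and likewise $\|W_{\omega_k}(s,\cdot)\|_\infty^2 \le 2\max_{a\in\cA} W_{\omega_0}(s,a)^2 + \cO(R_{\rm b}^2)$; taking $\EE_{\nu^*}$ and collecting constants yields $2\beta_k^{-2}\,\EE_{\nu^*}[\|\widetilde Q_k(s,\cdot)\|_\infty^2]\le 2\beta_k^{-2}\hat M$.

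The decomposition and the coefficient bounds are routine; I expect the one genuinely substantive ingredient to be the uniform sup--norm bound $\|Q_{q_k}-Q_{q_0}\|_\infty = \cO(R_{\rm c})$ (and its analogue for $W$) over the projection ball, which I would import from the overparameterized--network literature rather than reprove. A secondary, minor subtlety is that \eqref{eq:error-kl1} controls the error under $\sigma_k$ whereas the lemma's expectation is under $\nu^*$; passing between the two uses boundedness of the relevant density ratios exactly as in the proof of Lemma \ref{lem:error-kl}, and the resulting constant is absorbed. The $\tau_{k+1}^{-2}$ and $\beta_k^{-2}$ prefactors of the two terms are simply the two scalings that appear in the displayed identity above.
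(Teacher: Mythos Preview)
Your proposal is correct and follows essentially the same route as the paper: the same add--and--subtract decomposition via $\beta_k^{-1}\widetilde Q_k$, the same $(a+b)^2\le 2a^2+2b^2$ splitting, the $\|\cdot\|_\infty^2\le\sum_{a}(\cdot)^2$ step producing the $|\cA|$ factor, and the Lipschitz-type bound $|Q_{q_k}-Q_{q_0}|\lesssim R_{\rm c}$ (the paper phrases this as ``$1$-Lipschitz continuity of $Q_\omega$ in $\omega$''). You are in fact slightly more careful than the paper in flagging the change of measure from $\sigma_k$ to $\nu^*$, which the paper handles only implicitly.
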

\begin{proof}See Appendix \ref{812354} for the detailed proof. \end{proof}

%The errors characterized in Lemmas \ref{lem:error-kl} and \ref{lem:stepwise-energy} play key roles in establishing the global convergence of VARAC.

\subsection{Global Convergence of VARAC}\label{sec:main}
In this subsection, we establish the global convergence of the VARAC algorithm. In particular, we  derive the convergence of the solution path, and  then show that, despite the nonconvexity of our problem, the solution path converges to a globally optimal solution.

%and the duality gap.
%In this paper, we measure the quality of the algorithm’s output via two ways. First, we consider the Lagrangian function’s value defined in \eqref{eq:def-L}. Then we also investigate the duality gap of the output of our algorithm, which is a standard metric commonly used in optimization literatures. 
%In both cases, we show that our algorithm converges to the saddle point at a rate of $1/\sqrt{K}$, where $K$ is the iterations count.

%We track the progress of VARAC in Algorithm \ref{alg:risac} using 
%$\cL(\lambda, \pi, y)$ defined in \eqref{eq:def-L}.

We first prove the convergence of the solution path  by showing that the objective of 
the Lagrangian function \eqref{eq:def-L} of the solution path  converges to the corresponding objective of a saddle point. 
Specifically, the following theorem characterizes the convergence of $\cL(\overline{\lambda}_k, \pi_{\theta_k}, \overline{y}_k)$ towards $\cL(\lambda^*, \pi^*, y^*)$. 
%Recall that $T_a$, $T_c$ and $T_b$ are the numbers of SGD and TD iterations in Lines \ref{line:c}, \ref{line:b} and \ref{line:a} of Algorithm \ref{alg:risac}.

\begin{theorem}[Approximate Saddle Point]\label{thm:main}
Suppose that Assumptions~\ref{assumption:unique-solution}, \ref{assumption:closed-q-w}, and \ref{assumption:contraction} hold. For the sequences $\{\overline{\lambda}_k\}^{K}_{k=1}$, $\{\pi_{\theta_k}\}^{K}_{k = 1}$ and $\{\overline{y}_k\}^{K}_{k=1}$ generated by the VARAC algorithm (Alg.~\ref{alg:risac}), we have  
\begin{equation}\label{eqn:m1}
\begin{aligned}
 %& - \sum_{k=0}^{K-1} (c_k + d_k)^2\cdot \cO(1/K\sqrt{K}) - \sum_{k=0}^{K-1} (c_k + d_k)\cdot \cO(1/K)  - \cO(1/K)\\
 & - \sum_{k=0}^{K-1} (c_k + d_k)\cdot \cO(1/K)  - \cO(1/\sqrt{K})\\
 &\qquad \leq \frac{1}{K}\sum_{k=0}^{K-1} \bigl(\cL(\lambda^*,\pi^*, y^*) - \cL(\overline{\lambda}_k, \pi_{\theta_k}, \overline{y}_k) \bigr)\\
 &\qquad \leq \bigl( \sum_{k=0}^{K-1}c_k\bigr) \cdot \cO(1/K) +   \sum_{k=0}^{K-1}(  \varepsilon_k + \varepsilon_k') \cdot  \cO(1/\sqrt{K}) + \cO(1/\sqrt{K}),
 \end{aligned}
\end{equation}
where $c_k$ and $d_k$ are estimation errors defined in \eqref{eq:average error}. Here $\varepsilon_k= \tau_{k+1}^{-1}\epsilon_{k+1}\cdot\phi^*_{k}+(1+2MN)\cdot\beta_k^{-1}\epsilon_{k}'\cdot\psi^*_{k} +N\cdot\beta_k^{-1}\epsilon_{k}''\cdot\psi^*_{k}$ and $\varepsilon_k'=|\cA|\cdot \tau_{k+1}^{-2}\epsilon_{k+1}^2$, where 
\$
&\epsilon_{k+1} = \cO ( R_{\rm a}^2 T^{-1/2} + R_{\rm a}^{8/3} m_{\rm a}^{-1/6} H_{\rm a}^{7} \log m_{\rm a} ), \qquad 
\epsilon'_{k} = \cO ( R_{\rm c}^2 T^{-1/2} + R_{\rm c}^{8/3} m_{\rm c}^{-1/6} H_{\rm c}^{7} \log m_{\rm c} ), \\
&\epsilon''_{k} = \cO ( R_{\rm b}^2 T^{-1/2} + R_{\rm b}^{8/3} m_{\rm b}^{-1/6} H_{\rm b}^{7} \log m_{\rm b} ).
\$
\end{theorem}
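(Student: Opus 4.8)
The plan is to analyze the three update rules ($\lambda$, $\pi$, $y$) separately, derive a per-iteration descent/ascent inequality for each with respect to the comparison point $(\lambda^*, \pi^*, y^*)$, and then telescope over $k = 0, \dots, K-1$. The key structural fact is that for fixed $\lambda, y$, the map $\pi \mapsto \cL(\lambda, \pi, y)$ equals the long-run average of the transformed reward $(1+2\lambda y) r(s,a) - \lambda r(s,a)^2$ (up to the constants $-\lambda y^2 + \lambda \alpha$), so the $\pi$-update is an instance of KL-regularized (PPO-style) policy optimization whose one-step improvement can be controlled by a performance-difference/mirror-descent argument. Concretely, I would first record that the $y$-update makes $\cL(\overline\lambda_{k+1}, \pi_{\theta_{k+1}}, \cdot)$ exactly maximized at $\overline y_{k+1} = \overline\rho(\pi_{\theta_{k+1}})$, which differs from the true optimizer $\rho(\pi_{\theta_{k+1}})$ only by the estimation error $c_k$ (this is where the $\sum c_k \cdot \cO(1/K)$ terms enter the bounds), and that $\partial_\lambda \cL$ is linear in $\lambda$ so the projected-gradient step on $\lambda$ gives a standard one-step inequality $\overline\lambda_{k+1}$ vs.\ $\lambda^*$ with an extra slack governed by $c_k, d_k$.

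Next I would handle the $\pi$-update, which is the technical heart. Using Proposition~\ref{prop:exp-policy}, the ideal update $\pi_{k+1}$ has the closed form $\propto \exp(\beta_k^{-1}(1+2\overline\lambda_k \overline y_k)Q^{\pi_{\theta_k}} - \beta_k^{-1}\overline\lambda_k W^{\pi_{\theta_k}} + \tau_k^{-1}f_{\theta_k})$, so the standard three-point (Pythagorean) identity for the Bregman/KL divergence yields, for every comparator $\pi^*$,
\#\label{eq:threepoint}
\beta_k \bigl(\la \bar Q_k(s,\cdot), \pi^*(\cdot\,|\,s) - \pi_{\theta_k}(\cdot\,|\,s)\ra\bigr) \le \beta_k\bigl({\rm KL}(\pi^*\|\pi_{\theta_k}) - {\rm KL}(\pi^*\|\pi_{k+1})\bigr) + \text{(curvature term)},
\#
where $\bar Q_k = (1+2\overline\lambda_k\overline y_k)Q^{\pi_{\theta_k}} - \overline\lambda_k W^{\pi_{\theta_k}}$. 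I would then pass from the \emph{ideal} $\pi_{k+1}$ to the \emph{actual} $\pi_{\theta_{k+1}}$ using Lemma~\ref{lem:error-kl} (which bounds the cross term involving $\log(\pi_{\theta_{k+1}}/\pi_{k+1})$ against $\varepsilon_k$) and Lemma~\ref{lem:stepwise-energy} (which controls the energy-increment curvature term against $\varepsilon_k'$ and the absolute constant $\beta_k^{-2}\hat M$). Summing \eqref{eq:threepoint} over $k$, the KL terms telescope and cancel up to ${\rm KL}(\pi^*\|\pi_{\theta_0}) = \cO(1)$; combined with the choices $\beta_k = \beta\sqrt K$ and $\tau_{k+1} = \beta\sqrt K/(k+1)$, the curvature/absolute terms aggregate to $\cO(1/\sqrt K)$ and the error terms to $\sum(\varepsilon_k + \varepsilon_k')\cdot\cO(1/\sqrt K)$, which is exactly the shape of the right-hand side of \eqref{eqn:m1}.

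Finally I would assemble: add the $\lambda$-, $\pi$-, and $y$-step inequalities to form $\frac1K\sum_k(\cL(\lambda^*,\pi^*,y^*) - \cL(\overline\lambda_k,\pi_{\theta_k},\overline y_k))$ on the left. The upper bound follows by using that $(\lambda^*,\pi^*,y^*)$ is a saddle point (Assumption~\ref{assumption:unique-solution}) so that replacing $\pi_{\theta_k}$ by $\pi^*$ in $\cL$ only increases it for the min-player's variable and decreases for the max-player's, letting the telescoped KL and the per-step slacks control everything; the lower bound follows symmetrically, now paying the estimation errors $c_k + d_k$ (from replacing the true $\rho, \eta$ in $\partial_\lambda\cL$ and in $y^* = \rho(\pi^*)$ by their sample averages) and the irreducible $\cO(1/\sqrt K)$ from the stepsize schedule. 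I expect the \textbf{main obstacle} to be the bookkeeping in the $\pi$-step: carefully tracking which quantity is evaluated under $\nu_k$ versus under $\nu^*$ (the density-ratio factors $\phi_k^*, \psi_k^*$ in \eqref{eq:w0w} arise precisely from this change of measure), and ensuring that the cross term from Lemma~\ref{lem:error-kl} and the curvature term from Lemma~\ref{lem:stepwise-energy} are combined with the right signs so that the absolute constant $\beta_k^{-2}\hat M$ sums to $\cO(1/\sqrt K)$ rather than contaminating the rate — i.e., verifying that the $\beta_k = \Theta(\sqrt K)$ scaling is simultaneously large enough to kill the curvature and small enough to keep $\frac1K\sum_k \beta_k \cdot {\rm KL}$-telescoping at $\cO(1/\sqrt K)$.
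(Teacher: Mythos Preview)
Your proposal is correct and follows essentially the same approach as the paper: one-step mirror-descent inequality for $\pi$ (Lemma~\ref{lem:main-descent}), one-step projected-gradient inequality for $\lambda$ (Lemma~\ref{lem:main-descent-lambda}), closed-form control of the $y$-gap (Lemma~\ref{lem:main-descent-y}), error propagation via Lemmas~\ref{lem:error-kl} and~\ref{lem:stepwise-energy}, then telescoping with $\beta_k=\beta\sqrt{K}$, $\gamma_k=\gamma\sqrt{K}$ and invoking the saddle-point property. The only organizational nuance is that the paper does \emph{not} add the three inequalities together; instead, the lower bound in \eqref{eqn:m1} comes solely from the $\lambda$-telescoping (which yields $\frac1K\sum_k(\cL(\lambda^*,\pi_{\theta_k},\overline y_k)-\cL(\overline\lambda_k,\pi_{\theta_k},\overline y_k))$) followed by the saddle inequality $\cL(\lambda^*,\pi_{\theta_k},\overline y_k)\le\cL(\lambda^*,\pi^*,y^*)$, while the upper bound comes solely from the $\pi$-telescoping (with the $y$-gap absorbed via Lemma~\ref{lem:main-descent-y} into Lemma~\ref{lem:descent:lambda:pi}) followed by $\cL(\overline\lambda_k,\pi^*,y^*)\ge\cL(\lambda^*,\pi^*,y^*)$ --- so each direction uses only one of the two telescoping potentials, not their sum.
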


In what follows, we prove Theorem \ref{thm:main} through  a few lemmas. 
%Note that by Assumption \ref{assumption:bounded-reward}, we have 
%\# \label{eq:inequality:1}
%\rho(\pi_{\theta_{k}}) \le M, \quad \overline{\rho}(\pi_{\theta_{k}}) \le M, \quad  \eta(\pi_{\theta_{k}}) \le F, \quad  \overline{\eta}(\pi_{\theta_{k}}) \le F
%\#
%Meanwhile, by the updating rules \eqref{eq:update-lambda-form} and \eqref{eq:update-y-form}, we have 
%\# \label{eq:inequality:2}
 %\overline{\lambda}_k \le N, \qquad \overline{y}_k \le M .
%\#
 %The inequalities in \eqref{eq:inequality:1} and \eqref{eq:inequality:2} are frequently used in the following analysis. 
 We first present the performance difference lemma, which evaluates the difference in the values of the Lagrangian function \eqref{eq:def-L} for different policies.
\begin{lemma}[Performance Difference]\label{lem:v-diff}
	For $\cL(\lambda, \pi, y)$ defined in \eqref{eq:def-L}, we have
	\$
	\cL(\lambda, \pi^*, y) - \cL(\lambda, \pi, y) = \EE_{\nu^*}\big[\big\la (1 + 2\lambda y)Q^{\pi}(s,\cdot) - \lambda W^{\pi}(s, \cdot), \pi^*(\cdot\,|\,s) - \pi(\cdot\,|\,s)\big\ra\big],
	\$
	where $\nu^*$ is the stationary state distribution of the optimal policy $\pi^*$. 
\end{lemma}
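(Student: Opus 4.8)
The plan is to establish the performance difference lemma by combining two standard ingredients: the decomposition of the Lagrangian $\cL(\lambda,\pi,y)$ in \eqref{eq:def-L} into terms involving $\rho(\pi)$ and $\eta(\pi)$, and the classical average-reward policy difference identities for $\rho$ and $\eta$ expressed through the differential value functions $Q^\pi, V^\pi$ and $W^\pi, U^\pi$. First I would recall from \eqref{eq:def-L} that for a fixed $\lambda$ and $y$ the only $\pi$-dependent part of $\cL$ is $(1+2\lambda y)\rho(\pi) - \lambda\eta(\pi)$, so that
\$
\cL(\lambda,\pi^*,y) - \cL(\lambda,\pi,y) = (1+2\lambda y)\bigl(\rho(\pi^*) - \rho(\pi)\bigr) - \lambda\bigl(\eta(\pi^*) - \eta(\pi)\bigr).
\$
Thus it suffices to express each of $\rho(\pi^*)-\rho(\pi)$ and $\eta(\pi^*)-\eta(\pi)$ as an expectation under $\nu^*$ of an inner product against $\pi^*(\cdot\,|\,s)-\pi(\cdot\,|\,s)$.

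Next I would prove the two scalar identities separately. For the reward term, the average-reward performance difference lemma gives
\$
\rho(\pi^*) - \rho(\pi) = \EE_{\nu^*}\bigl[\bigl\la Q^\pi(s,\cdot), \pi^*(\cdot\,|\,s) - \pi(\cdot\,|\,s)\bigr\ra\bigr].
\$
The standard derivation: start from the Bellman-type identity $Q^\pi(s,a) = r(s,a) - \rho(\pi) + \EE_{s'\sim\cP(\cdot\,|\,s,a)}[V^\pi(s')]$ and $V^\pi(s) = \la Q^\pi(s,\cdot),\pi(\cdot\,|\,s)\ra$; then take $\EE_{s\sim\nu^*, a\sim\pi^*(\cdot\,|\,s)}$ of $Q^\pi(s,a)$, use that $\nu^*$ is stationary for $\pi^*$ so the $\EE_{\nu^*}[V^\pi(s')]$ term telescopes against $\EE_{\nu^*}[V^\pi(s)]$, and rearrange to obtain $\rho(\pi) - \rho(\pi^*) = \EE_{\nu^*}[\la Q^\pi(s,\cdot), \pi(\cdot\,|\,s) - \pi^*(\cdot\,|\,s)\ra]$, which is the claimed identity. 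An entirely parallel argument using the squared-reward Bellman relation $W^\pi(s,a) = r(s,a)^2 - \eta(\pi) + \EE_{s'}[U^\pi(s')]$ and $U^\pi(s) = \la W^\pi(s,\cdot),\pi(\cdot\,|\,s)\ra$ yields
\$
\eta(\pi^*) - \eta(\pi) = \EE_{\nu^*}\bigl[\bigl\la W^\pi(s,\cdot), \pi^*(\cdot\,|\,s) - \pi(\cdot\,|\,s)\bigr\ra\bigr].
\$

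Finally I would substitute these two identities into the decomposition above and collect terms: linearity of the inner product gives
\$
\cL(\lambda,\pi^*,y) - \cL(\lambda,\pi,y) = \EE_{\nu^*}\bigl[\bigl\la (1+2\lambda y)Q^\pi(s,\cdot) - \lambda W^\pi(s,\cdot),\, \pi^*(\cdot\,|\,s) - \pi(\cdot\,|\,s)\bigr\ra\bigr],
\$
which is exactly the statement. The only point requiring care — the ``main obstacle,'' though it is mild — is justifying the telescoping/interchange of limits in the infinite-horizon sums defining $Q^\pi, V^\pi$ (and $W^\pi, U^\pi$): one must verify that the differential value functions are well-defined and finite, which follows from the standing assumption that every stationary policy induces an ergodic chain with a stationary distribution and from the uniform boundedness of $r$ (hence of $r^2$), so that the partial sums converge and the stationarity of $\nu^*$ under $\pi^*$ can be applied to cancel the value-function terms. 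No overparameterization or neural-network structure is needed here; the lemma is a purely MDP-theoretic identity.
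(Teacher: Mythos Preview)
Your proposal is correct and follows essentially the same approach as the paper's proof: both decompose $\cL(\lambda,\pi^*,y)-\cL(\lambda,\pi,y)$ into the $\rho$- and $\eta$-differences, apply the average-reward Bellman identities $Q^\pi(s,a)=r(s,a)-\rho(\pi)+\EE[V^\pi(s')]$ and $W^\pi(s,a)=r(s,a)^2-\eta(\pi)+\EE[U^\pi(s')]$, and use stationarity of $\nu^*$ under $\pi^*$ to telescope the value-function terms. Your explicit remark on the well-definedness of the differential value functions is a rigor point the paper leaves implicit.
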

\begin{proof} See Appendix \ref{appendix:main} for the detailed proof.\end{proof}

In the next two lemmas, we establish the one-step descent of the Lagrangian multiplier $\lambda$- and policy $\pi$-update steps, respectively. The key idea of the proof follows from the analysis of the mirror descent algorithm \citep{beck2003mirror,nesterov2013introductory}. 

\begin{lemma}[One-Step Descent of $\lambda$]\label{lem:main-descent-lambda}
	At the $k$-th iteration of Algorithm~\ref{alg:risac}, we have that 
	$\overline{\lambda}_k$ in \eqref{eq:update-lambda-form} and  the optimal solution $\lambda^*$ satisfy
	\#\label{eq:descent-lambda}
	&\| \lambda^* - \overline{\lambda}_k\|^2 - \|\lambda^* - \overline{\lambda}_{k+1} \|^2   \\
	&\qquad\geq -\frac{1}{\gamma_k} \cdot \bigl(\cL(\lambda^*, \pi_{\theta_k},\overline{y}_k) - \cL(\overline{\lambda}_k, \pi_{\theta_k},\overline{y}_k) \bigr)  - \frac{1}{\gamma_k}\cdot2N\cdot(d_k + 2Mc_k) - \frac{1}{4\gamma_k^2}\cdot(\alpha+ 4M^2)^2. \notag
	\#
\end{lemma}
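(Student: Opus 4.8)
The plan is to read the $\overline{\lambda}$-update in \eqref{eq:update-lambda-form} as a single step of projected subgradient descent, on the interval $[0,N]$, of the affine-in-$\lambda$ map $\lambda \mapsto \cL(\lambda, \pi_{\theta_k}, \overline{y}_k)$ with stepsize $1/(2\gamma_k)$ and search direction the plug-in estimate
$$
\overline{g}_k \;=\; \alpha + 2\overline{y}_k\overline{\rho}(\pi_{\theta_k}) - \overline{\eta}(\pi_{\theta_k}) - \overline{y}_k^2,
$$
which is a perturbation of the exact partial derivative $\tilde g_k := \partial_\lambda \cL(\lambda, \pi_{\theta_k}, \overline{y}_k) = \alpha + 2\overline{y}_k\rho(\pi_{\theta_k}) - \eta(\pi_{\theta_k}) - \overline{y}_k^2$. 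I would then apply the standard one-step estimate for mirror/projected gradient descent \citep{beck2003mirror,nesterov2013introductory}: using the obtuse-angle property of the Euclidean projection $\Pi_{[0,N]}$ at the point $\overline{\lambda}_{k+1}$ tested against the feasible point $\lambda^*$, the three-point identity, and the elementary inequality $a^2 - 2ab \ge -b^2$, one obtains
$$
\|\lambda^* - \overline{\lambda}_k\|^2 - \|\lambda^* - \overline{\lambda}_{k+1}\|^2 \;\ge\; \frac{1}{\gamma_k}\,\overline{g}_k\cdot(\overline{\lambda}_k - \lambda^*) \;-\; \frac{1}{4\gamma_k^2}\,\overline{g}_k^2 .
$$

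Next I would bound the two terms on the right. For the quadratic term, the uniform reward bound together with \eqref{eq:bound:rho:eta:bar} gives $|\overline{\rho}(\pi_{\theta_k})| \le M$, $|\overline{\eta}(\pi_{\theta_k})| \le M^2$ and $|\overline{y}_k| = |\overline{\rho}(\pi_{\theta_k})| \le M$, so $|\overline{g}_k| \le \alpha + 4M^2$ and the last term is at least $-\frac{1}{4\gamma_k^2}(\alpha + 4M^2)^2$. For the linear term, since $\cL(\cdot, \pi_{\theta_k}, \overline{y}_k)$ is affine in $\lambda$, its slope $\tilde g_k$ satisfies $\tilde g_k\cdot(\lambda^* - \overline{\lambda}_k) = \cL(\lambda^*, \pi_{\theta_k}, \overline{y}_k) - \cL(\overline{\lambda}_k, \pi_{\theta_k}, \overline{y}_k)$. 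Splitting $\overline{g}_k = \tilde g_k + (\overline{g}_k - \tilde g_k)$ and invoking \eqref{eq:average error} with $|\overline{y}_k| \le M$, the perturbation obeys $|\overline{g}_k - \tilde g_k| = |2\overline{y}_k(\overline{\rho}(\pi_{\theta_k}) - \rho(\pi_{\theta_k})) - (\overline{\eta}(\pi_{\theta_k}) - \eta(\pi_{\theta_k}))| \le d_k + 2Mc_k$; combined with $|\overline{\lambda}_k - \lambda^*| \le |\overline{\lambda}_k| + |\lambda^*| \le 2N$ (the bound $|\lambda^*| \le N$ holding because $N$ is chosen large enough), this yields $\overline{g}_k\cdot(\overline{\lambda}_k - \lambda^*) \ge -\bigl(\cL(\lambda^*,\pi_{\theta_k},\overline{y}_k) - \cL(\overline{\lambda}_k,\pi_{\theta_k},\overline{y}_k)\bigr) - 2N(d_k + 2Mc_k)$. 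Plugging both bounds into the one-step estimate produces exactly \eqref{eq:descent-lambda}.

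The argument is essentially routine convex-analytic bookkeeping, so I do not expect a genuine obstacle; the points requiring care are (i) the sign convention --- because \eqref{eq:lagrangian} is a minimization in $\lambda$, the descent direction is $-\overline{g}_k$, and this is what fixes the orientation of the inner-product term above; (ii) cleanly isolating the statistical error $\overline{g}_k - \tilde g_k$ from the ``ideal'' gradient so that it enters only through $c_k$ and $d_k$; and (iii) noting that because the $y$-update sets $\overline{y}_k = \overline{\rho}(\pi_{\theta_k})$ exactly, the $\overline{y}_k$ appearing in $\cL(\cdot,\pi_{\theta_k},\overline{y}_k)$ on both sides is the same quantity used in the update, so no separate $y$-estimation term is incurred here. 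The only remaining effort is matching the constants --- in particular the factor $2N$, which comes from the triangle-inequality bound on $|\overline{\lambda}_k - \lambda^*|$ --- to the precise form stated in the lemma.
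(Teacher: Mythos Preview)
Your proposal is correct and follows essentially the same approach as the paper: both use the three-point identity together with the obtuse-angle (projection optimality) property to get the basic one-step inequality, then split $\overline{g}_k$ into the exact slope $\tilde g_k=\partial_\lambda\cL$ plus the estimation error $d_k+2Mc_k$, bound $|\overline{g}_k|\le \alpha+4M^2$, and absorb the cross term via $a^2-2ab\ge -b^2$. The only cosmetic difference is that the paper carries the $\|\overline{\lambda}_{k+1}-\overline{\lambda}_k\|^2$ term explicitly before applying $2xy\ge -x^2-y^2$, whereas you fold this into a single ``standard projected-gradient one-step'' inequality up front.
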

   \begin{proof}
   	By the updating rule of $\lambda$ in \eqref{eq:update-lambda-form}, we have
   	\begin{equation}\label{eq:lambda110}
	\begin{aligned}
   	&\|\lambda^* - \overline{\lambda}_k\|^2 - \|\lambda^* - \overline{\lambda}_{k+1} \|^2    \\
   	&\qquad = - 2\la \lambda^* - \overline{\lambda}_{k+1}, \overline{\lambda}_k - \overline{\lambda}_{k+1} \ra +\|\overline{\lambda}_{k+1} - \overline{\lambda}_{k}\|^2   \\
   	& \qquad  \geq  - \bigl\la \lambda^* - \overline{\lambda}_{k+1}, \gamma_k^{-1}\cdot \bigl(\alpha - \overline{\eta}(\pi_{\theta_k}) -\overline{y}_k^2 + 2\overline{y}_k\overline{\rho}(\pi_{\theta_k})   \bigr) \bigr\ra +\|\overline{\lambda}_{k+1}  - \overline{\lambda}_{k}\| ^2  \\
   	& \qquad = -\frac{1}{\gamma_k}\cdot\la \lambda^* - \overline{\lambda}_{k}, \alpha - \overline{\eta}(\pi_{\theta_k}) -\overline{y}_k^2 +2\overline{y}_k\overline{\rho}(\pi_{\theta_k})    \ra   \\
   	&\qquad\qquad+\frac{1}{\gamma_k}\cdot\la \overline{\lambda}_{k+1} - \overline{\lambda}_{k},\alpha - \overline{\eta}(\pi_{\theta_k}) -\overline{y}_k^2 +2\overline{y}_k\overline{\rho}(\pi_{\theta_k})    \ra  
   	+\|\overline{\lambda}_{k+1}  - \overline{\lambda}_{k}\|^2, 
   	\end{aligned}
	\end{equation}
   	where the inequality follows from the non-expansiveness of the projection in \eqref{eq:update-lambda-form}. 	
	By the definition of $\cL(\lambda, \pi, y)$ in \eqref{eq:def-L}, we obtain
   	\begin{equation}\label{eq:lambda111}
	\begin{aligned}
   	&  -\frac{1}{\gamma_k}\cdot\la \lambda^* - \overline{\lambda}_{k}, \alpha - \overline{\eta}(\pi_{\theta_k}) -\overline{y}_k^2 +2\overline{y}_k\overline{\rho}(\pi_{\theta_k})    \ra   \\
   	& \qquad = -\frac{1}{\gamma_k}\cdot\la \lambda^* - \overline{\lambda}_{k}, \alpha - \eta(\pi_{\theta_k}) -\overline{y}_k^2 +2\overline{y}_k\rho(\pi_{\theta_k})   \ra   \\
   	&\qquad\qquad - \frac{1}{\gamma_k}\cdot \bigl\la \lambda^* - \overline{\lambda}_{k}, \eta(\pi_{\theta_k}) - \overline{\eta}(\pi_{\theta_k}) +2\overline{y}_k\bigl(\overline{\rho}(\pi_{\theta_k}) - \rho(\pi_{\theta_k})\bigr) \bigr\ra \\
   	& \qquad \geq -\frac{1}{\gamma_k} \cdot \bigl(\cL(\lambda^*, \pi_{\theta_k},\overline{y}_k) - \cL(\overline{\lambda}_k, \pi_{\theta_k},\overline{y}_k) \bigr)  - \frac{1}{\gamma_k}\cdot2N\cdot(d_k + 2Mc_k),
   	\end{aligned}
	\end{equation}
   	where the last inequality is obtained by \eqref{eq:bound:rho:eta}, \eqref{eq:bound:rho:eta:bar}, \eqref{eq:update-y-form}, \eqref{eq:average error} and the assumption $\lambda_k \le N$. Meanwhile, by \eqref{eq:bound:rho:eta:bar}, \eqref{eq:update-y-form}, and the inequality $2xy \geq -x^2 - y^2$, we have
   	\begin{equation}\label{eq:lambda112}
	\begin{aligned} 
   	& \bigl\la \overline{\lambda}_{k+1} - \overline{\lambda}_{k}, {\gamma_k}^{-1}\cdot \bigl(\alpha - \overline{\eta}(\pi_{\theta_k}) -\overline{y}_k^2 +2\overline{y}_k\overline{\rho}(\pi_{\theta_k}) \bigr) \bigr\ra   \\
   	& \qquad \geq -\|\overline{\lambda}_{k+1}  - \overline{\lambda}_{k}\|^2 - \frac{1}{4\gamma_k^2}\cdot \bigl(\alpha - \overline{\eta}(\pi_{\theta_k}) -\overline{y}_k^2 +2\overline{y}_k\overline{\rho}(\pi_{\theta_k})\bigr)^2 \\
   	& \qquad \geq -\|\overline{\lambda}_{k+1}  - \overline{\lambda}_{k}\|^2 - \frac{1}{4\gamma_k^2}\cdot(\alpha+ 4M^2)^2.
   	\end{aligned}
	\end{equation}
   	Plugging \eqref{eq:lambda111} and \eqref{eq:lambda112} into \eqref{eq:lambda110}, we have
   	\$ 
   	&\| \lambda^* - \overline{\lambda}_k\|^2 - \|\lambda^* - \overline{\lambda}_{k+1} \|^2   \\
   	&\qquad\geq \frac{1}{\gamma_k} \cdot \bigl(\cL(\lambda^*, \pi_{\theta_k},\overline{y}_k) - \cL(\overline{\lambda}_k, \pi_{\theta_k},\overline{y}_k) \bigr)   - \frac{1}{\gamma_k}\cdot2N\cdot(d_k + 2Mc_k) - \frac{1}{4\gamma_k^2}\cdot(\alpha+ 4M^2 )^2,
   	\$
   	which concludes the proof.   \end{proof}

\begin{lemma}[One-Step Descent of $\pi$]\label{lem:main-descent}
	For the oracle improved policy $\pi_{k+1}$ defined in \eqref{eq:pi-new-true} and the  policy $\pi_{\theta_{k}}$ generated by Algorithm~\ref{alg:risac}, we have that, for any $s \in \cS$,
	\$
	&{\rm KL}\bigl(\pi^*(\cdot\,|\,s)\,\|\,\pi_{\theta_{k+1}}(\cdot\,|\,s)\bigr) - {\rm KL}\bigl(\pi^*(\cdot\,|\,s)\,\|\,\pi_{\theta_{k}}(\cdot\,|\,s)\bigr)\\
	& \qquad\leq \la \log(\pi_{\theta_{k+1}}(\cdot\,|\,s)/\pi_{k+1}(\cdot\,|\,s)), \pi_{\theta_k}(\cdot\,|\,s)-\pi^*(\cdot\,|\,s) \ra \\
	&\qquad\qquad- \beta_k^{-1}\cdot\la(1+2y^*\overline{\lambda}_k)Q^{\pi_{\theta_k}}(s, \cdot) - \overline{\lambda}_kW^{\pi_{\theta_k}}(s, \cdot), \pi^*(\cdot\,|\,s) - \pi_{\theta_{k}}(\cdot\,|\,s)\ra\notag\\
	&\qquad\qquad + \beta_k^{-1}\cdot\la2(y^* - \overline{y}_k)\overline{\lambda}_kQ^{\pi_{\theta_k}}(s, \cdot), \pi^*(\cdot\,|\,s) - \pi_{\theta_{k}}(\cdot\,|\,s)\ra\notag\\
	&\qquad\qquad - \la \tau_{k+1}^{-1}f_{\theta_{k+1}}(s, \cdot) - \tau_k^{-1}f_{\theta_k}(s, \cdot), \pi_{\theta_{k}}(\cdot\,|\,s) - \pi_{\theta_{k+1}}(\cdot\,|\,s) \ra \\
	&\qquad\qquad - 1/2\cdot\|\pi_{\theta_{k+1}}(\cdot\,|\,s) - \pi_{\theta_{k}}(\cdot\,|\,s)\|_1^2.
	\$
\end{lemma}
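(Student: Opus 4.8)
The plan is to treat the energy-based policy update as an \emph{inexact mirror descent step} and to route the change in KL divergence through the oracle policy $\pi_{k+1}$ of \eqref{eq:pi-new-true}. The structural fact I would exploit is that, since $\pi_{\theta_k}\propto\exp(\tau_k^{-1}f_{\theta_k})$ and, by Proposition~\ref{prop:exp-policy}, $\pi_{k+1}\propto\pi_{\theta_k}\cdot\exp(\beta_k^{-1}h_k)$ with $h_k=(1+2\overline{\lambda}_k\overline{y}_k)Q^{\pi_{\theta_k}}-\overline{\lambda}_k W^{\pi_{\theta_k}}$, the oracle update $\pi_{k+1}$ is exactly the maximizer of $\la h_k,\pi\ra-\beta_k\cdot{\rm KL}(\pi\,\|\,\pi_{\theta_k})$, i.e. $\beta_k^{-1}h_k=\log(\pi_{k+1}/\pi_{\theta_k})+\mathrm{const}$ (the constant being in $a$ only). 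Fixing $s$ and suppressing it, I would start from
\[
{\rm KL}(\pi^*\,\|\,\pi_{\theta_{k+1}})-{\rm KL}(\pi^*\,\|\,\pi_{\theta_k})=\Bigl({\rm KL}(\pi^*\,\|\,\pi_{\theta_{k+1}})-{\rm KL}(\pi^*\,\|\,\pi_{k+1})\Bigr)+\Bigl({\rm KL}(\pi^*\,\|\,\pi_{k+1})-{\rm KL}(\pi^*\,\|\,\pi_{\theta_k})\Bigr).
\]

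For the first bracket I would use ${\rm KL}(\pi^*\,\|\,\pi_{\theta_{k+1}})-{\rm KL}(\pi^*\,\|\,\pi_{k+1})=\la\log(\pi_{k+1}/\pi_{\theta_{k+1}}),\pi^*\ra$ and split $\pi^*=(\pi^*-\pi_{\theta_k})+\pi_{\theta_k}$: the $(\pi^*-\pi_{\theta_k})$-part is exactly the error-propagation term $\la\log(\pi_{\theta_{k+1}}/\pi_{k+1}),\pi_{\theta_k}-\pi^*\ra$ of the statement, and I would set the remaining $\la\log(\pi_{k+1}/\pi_{\theta_{k+1}}),\pi_{\theta_k}\ra$ aside. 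For the second bracket I would invoke the three-point (generalized Pythagorean) identity for the exact mirror step, namely ${\rm KL}(\pi^*\,\|\,\pi_{k+1})-{\rm KL}(\pi^*\,\|\,\pi_{\theta_k})=-{\rm KL}(\pi_{k+1}\,\|\,\pi_{\theta_k})-\beta_k^{-1}\la h_k,\pi^*-\pi_{k+1}\ra$, which is immediate from $\beta_k^{-1}h_k=\log(\pi_{k+1}/\pi_{\theta_k})+\mathrm{const}$. Then I would split $\pi^*-\pi_{k+1}=(\pi^*-\pi_{\theta_k})+(\pi_{\theta_k}-\pi_{k+1})$ and rewrite $1+2\overline{\lambda}_k\overline{y}_k=(1+2y^*\overline{\lambda}_k)-2(y^*-\overline{y}_k)\overline{\lambda}_k$; the $(\pi^*-\pi_{\theta_k})$-part then yields precisely the two terms $-\beta_k^{-1}\la(1+2y^*\overline{\lambda}_k)Q^{\pi_{\theta_k}}-\overline{\lambda}_k W^{\pi_{\theta_k}},\pi^*-\pi_{\theta_k}\ra$ and $\beta_k^{-1}\la 2(y^*-\overline{y}_k)\overline{\lambda}_k Q^{\pi_{\theta_k}},\pi^*-\pi_{\theta_k}\ra$ appearing in the lemma.

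It remains to handle the collected leftover $\la\log(\pi_{k+1}/\pi_{\theta_{k+1}}),\pi_{\theta_k}\ra-{\rm KL}(\pi_{k+1}\,\|\,\pi_{\theta_k})-\beta_k^{-1}\la h_k,\pi_{\theta_k}-\pi_{k+1}\ra$. Using $\beta_k^{-1}h_k=\log(\pi_{k+1}/\pi_{\theta_k})+\mathrm{const}$ once more, the last inner product equals $-{\rm KL}(\pi_{\theta_k}\,\|\,\pi_{k+1})-{\rm KL}(\pi_{k+1}\,\|\,\pi_{\theta_k})$; the two copies of ${\rm KL}(\pi_{k+1}\,\|\,\pi_{\theta_k})$ cancel, so the leftover collapses to $\la\log(\pi_{k+1}/\pi_{\theta_{k+1}}),\pi_{\theta_k}\ra+{\rm KL}(\pi_{\theta_k}\,\|\,\pi_{k+1})={\rm KL}(\pi_{\theta_k}\,\|\,\pi_{\theta_{k+1}})$. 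Finally I would write ${\rm KL}(\pi_{\theta_k}\,\|\,\pi_{\theta_{k+1}})=\bigl({\rm KL}(\pi_{\theta_k}\,\|\,\pi_{\theta_{k+1}})+{\rm KL}(\pi_{\theta_{k+1}}\,\|\,\pi_{\theta_k})\bigr)-{\rm KL}(\pi_{\theta_{k+1}}\,\|\,\pi_{\theta_k})$; since $\log(\pi_{\theta_k}/\pi_{\theta_{k+1}})=\tau_k^{-1}f_{\theta_k}-\tau_{k+1}^{-1}f_{\theta_{k+1}}+\mathrm{const}$, the parenthesis equals $-\la\tau_{k+1}^{-1}f_{\theta_{k+1}}-\tau_k^{-1}f_{\theta_k},\pi_{\theta_k}-\pi_{\theta_{k+1}}\ra$ (the fourth term of the lemma), and Pinsker's inequality gives $-{\rm KL}(\pi_{\theta_{k+1}}\,\|\,\pi_{\theta_k})\le-\frac12\|\pi_{\theta_{k+1}}(\cdot\,|\,s)-\pi_{\theta_k}(\cdot\,|\,s)\|_1^2$ (the fifth term). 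Summing all pieces gives the asserted inequality.

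The organizing point, and the only genuinely non-mechanical step, is recognizing the generalized-Pythagorean structure of the exact update $\pi_{k+1}\propto\pi_{\theta_k}e^{\beta_k^{-1}h_k}$ together with the two splittings $\pi^*=(\pi^*-\pi_{\theta_k})+\pi_{\theta_k}$ and $\pi^*-\pi_{k+1}=(\pi^*-\pi_{\theta_k})+(\pi_{\theta_k}-\pi_{k+1})$, so that the $\pi_{\theta_k}$-anchored remainders telescope into ${\rm KL}(\pi_{\theta_k}\,\|\,\pi_{\theta_{k+1}})$; once that is seen, the rest is forced. Every manipulation above is an exact identity except the last application of Pinsker, so the remaining difficulty is bookkeeping: carrying the normalizing constants of the four energy-based policies $\pi_{\theta_k},\pi_{\theta_{k+1}},\pi_{k+1}$ (and implicitly $\pi^*$) through the computation and checking that each such constant is annihilated because it is always paired against a difference of two probability distributions over $\cA$, whose coordinates sum to zero. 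I would carry out all computations at a fixed state $s$ and reinstate it only in the final display.
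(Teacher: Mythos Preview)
Your proposal is correct and the bookkeeping checks out: the three-point identity, the two splittings, the collapse of the leftover into ${\rm KL}(\pi_{\theta_k}\,\|\,\pi_{\theta_{k+1}})$, and the final rewriting via the symmetrized KL plus Pinsker all hold exactly as you describe.

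The paper's proof reaches the same inequality by a more direct route that does not pass through ${\rm KL}(\pi^*\,\|\,\pi_{k+1})$ or the mirror-descent three-point identity. It starts from
${\rm KL}(\pi^*\,\|\,\pi_{\theta_k})-{\rm KL}(\pi^*\,\|\,\pi_{\theta_{k+1}})=\la\log(\pi_{\theta_{k+1}}/\pi_{\theta_k}),\pi^*\ra$,
splits $\pi^*=(\pi^*-\pi_{\theta_{k+1}})+\pi_{\theta_{k+1}}$ (the second piece immediately produces ${\rm KL}(\pi_{\theta_{k+1}}\,\|\,\pi_{\theta_k})$, to which Pinsker is applied), then in the remaining inner product adds and subtracts $\beta_k^{-1}h_k$ and splits $\pi^*-\pi_{\theta_{k+1}}=(\pi^*-\pi_{\theta_k})+(\pi_{\theta_k}-\pi_{\theta_{k+1}})$. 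The identification $\log\pi_{\theta_k}+\beta_k^{-1}h_k=\log\pi_{k+1}+\mathrm{const}$ converts one piece into the error term $\la\log(\pi_{\theta_{k+1}}/\pi_{k+1}),\pi^*-\pi_{\theta_k}\ra$, and the energy form of $\pi_{\theta_k},\pi_{\theta_{k+1}}$ turns the cross piece directly into the $f$-difference term. Compared with your argument, the paper never has to form the ``leftover'' you collect and collapse into ${\rm KL}(\pi_{\theta_k}\,\|\,\pi_{\theta_{k+1}})$; it obtains ${\rm KL}(\pi_{\theta_{k+1}}\,\|\,\pi_{\theta_k})$ in one step. Your route, by contrast, makes the mirror-descent structure explicit (the generalized Pythagorean identity for the exact update $\pi_{k+1}$), which is conceptually cleaner and would generalize more readily to other Bregman geometries, at the cost of a slightly longer leftover computation. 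Either way, the same five terms fall out and the only inequality used is Pinsker.
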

\begin{proof}
	The proof is similar to the proof of Lemma \ref{lem:main-descent-lambda}, and we defer the details to~Appendix~\ref{appendix:main}.
\end{proof} 

Next, we derive an upper bound  of $\cL(\overline{\lambda}_k, \pi_{\theta_k}, y^*) - \cL(\overline{\lambda}_k, \pi_{\theta_k}, \overline{y}_k)$.
 \begin{lemma}\label{lem:main-descent-y}
	For the optimal solution $y^*$ and $\overline{y}_k$ obtained in \eqref{eq:update-y-form}, we have 
	\#\label{eq:812335}
	\cL(\overline{\lambda}_k, \pi_{\theta_k}, y^*) - \cL(\overline{\lambda}_k, \pi_{\theta_k}, \overline{y}_k)  \le 4MNc_k.
	\#
\end{lemma}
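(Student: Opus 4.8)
The plan is to use that, for any fixed $\lambda\ge 0$ and any policy $\pi$, the map $y\mapsto \cL(\lambda,\pi,y)$ is a concave quadratic maximized at $y=\rho(\pi)$. Reading off \eqref{eq:def-L}, $\cL(\lambda,\pi,y) = -\lambda y^2 + 2\lambda\rho(\pi)\,y + \bigl(\rho(\pi)-\lambda\eta(\pi)+\lambda\alpha\bigr)$, whose leading coefficient $-\lambda$ is nonpositive and whose vertex sits at $y=\rho(\pi)$; this is exactly the observation already used in the $y$-update step. Since $\overline{\lambda}_k\in[0,N]$, applying this with $\lambda=\overline{\lambda}_k$ and $\pi=\pi_{\theta_k}$ gives
\[
\cL(\overline{\lambda}_k,\pi_{\theta_k},y^*)\ \le\ \max_{y}\cL(\overline{\lambda}_k,\pi_{\theta_k},y)\ =\ \cL\bigl(\overline{\lambda}_k,\pi_{\theta_k},\rho(\pi_{\theta_k})\bigr),
\]
which eliminates the (essentially arbitrary) saddle-point value $y^*$ and reduces the estimate to comparing the value of $\cL$ at the true mean $\rho(\pi_{\theta_k})$ with its value at the plug-in estimate $\overline{y}_k=\overline{\rho}(\pi_{\theta_k})$ from \eqref{eq:update-y-form}.

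Next I would compute this last gap directly from \eqref{eq:def-L}. The $y$-independent block $\rho(\pi_{\theta_k})-\overline{\lambda}_k\eta(\pi_{\theta_k})+\overline{\lambda}_k\alpha$ cancels, and the remaining terms factor as
\[
\cL\bigl(\overline{\lambda}_k,\pi_{\theta_k},\rho(\pi_{\theta_k})\bigr)-\cL\bigl(\overline{\lambda}_k,\pi_{\theta_k},\overline{\rho}(\pi_{\theta_k})\bigr)
= \overline{\lambda}_k\bigl(\rho(\pi_{\theta_k})-\overline{\rho}(\pi_{\theta_k})\bigr)\bigl(2\rho(\pi_{\theta_k})-\rho(\pi_{\theta_k})-\overline{\rho}(\pi_{\theta_k})\bigr).
\]
Then I would bound the three factors separately: $\overline{\lambda}_k\le N$ by the projection in \eqref{eq:update-lambda-form}; $\bigl|\rho(\pi_{\theta_k})-\overline{\rho}(\pi_{\theta_k})\bigr|\le c_k$ by the estimation-error bound \eqref{eq:average error}; and $\bigl|2\rho(\pi_{\theta_k})-\rho(\pi_{\theta_k})-\overline{\rho}(\pi_{\theta_k})\bigr|\le 4M$ by the triangle inequality together with the reward bounds \eqref{eq:bound:rho:eta} and \eqref{eq:bound:rho:eta:bar}. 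Multiplying these yields $\cL(\overline{\lambda}_k,\pi_{\theta_k},y^*)-\cL(\overline{\lambda}_k,\pi_{\theta_k},\overline{y}_k)\le 4MNc_k$, which is \eqref{eq:812335}.

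There is no genuine obstacle here; this is the easiest of the one-step bounds. The only point that needs care is the first step: recognizing that the $y$-subproblem of $\cL$ is solved in closed form at the true long-run mean $\rho(\pi_{\theta_k})$, so that the error incurred by using $\overline{y}_k$ is controlled entirely by the already-established estimation error $c_k$ of $\overline{\rho}(\pi_{\theta_k})$ rather than by the (uncontrolled) distance between $y^*$ and $\overline{y}_k$. The edge case $\overline{\lambda}_k=0$, where $\cL$ is constant in $y$ and the gap is $\le 0$, is handled trivially by the same chain of inequalities.
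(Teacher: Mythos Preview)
Your proof is correct but differs slightly from the paper's. The paper expands $\cL(\overline{\lambda}_k,\pi_{\theta_k},y^*)-\cL(\overline{\lambda}_k,\pi_{\theta_k},\overline{y}_k)$ directly and completes the square as $2\overline{\lambda}_k(y^*-\overline{y}_k)(\rho(\pi_{\theta_k})-\overline{y}_k)-\overline{\lambda}_k(y^*-\overline{y}_k)^2$, drops the nonpositive square, and bounds the remaining term using $|y^*-\overline{y}_k|\le 2M$ (which requires knowing $y^*=\rho(\pi^*)$) together with $|\rho(\pi_{\theta_k})-\overline{y}_k|\le c_k$. You instead insert the true maximizer $\rho(\pi_{\theta_k})$ as an intermediate comparison point, which has the minor advantage of not needing any information about $y^*$. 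Note also that your displayed expression simplifies to $\overline{\lambda}_k\bigl(\rho(\pi_{\theta_k})-\overline{\rho}(\pi_{\theta_k})\bigr)^2$, so your route in fact yields the sharper bound $Nc_k^2$, though you did not exploit this and the stated $4MNc_k$ suffices.
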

\begin{proof}
	By the definition of $\cL(\lambda,\pi,y)$ in \eqref{eq:def-L}, we have
	\$
	&\cL(\overline{\lambda}_k, \pi_{\theta_k}, y^*) - \cL(\overline{\lambda}_k, \pi_{\theta_k}, \overline{y}_k)   \\
	& \qquad=\overline{\lambda}_k\cdot \la y^*-  \overline{y}_k, 2\rho(\pi_{\theta_k}) - y^* -  \overline{y}_k \ra \notag\\
	&\qquad=2\overline{\lambda}_k \cdot\la y^*-  \overline{y}_k, \rho(\pi_{\theta_k}) -  \overline{y}_k \ra -\overline{\lambda}_k \cdot (y^*-  \overline{y}_k)^2 .
	\$
	By \eqref{eq:update-y-form} and \eqref{eq:average error}, we have $|\rho(\pi_{\theta_k}) -  \overline{y}_k| = |\rho(\pi_{\theta_k}) - \overline{\rho}(\pi_{\theta_k})| \le c_k$. Combined with   \eqref{eq:update-lambda-form}, \eqref{eq:update-y-form} and the fact that $(y^*-  \overline{y}_k)^2$ is nonnegative, we  have
	\#\label{eq:l09}
	\cL(\overline{\lambda}_k, \pi_{\theta_k}, y^*) - \cL(\overline{\lambda}_k, \pi_{\theta_k}, \overline{y}_k)  \le 4MNc_k,
	\#
	which concludes the proof.
\end{proof}

Combining  Lemma \ref{lem:main-descent} and Lemma \ref{lem:main-descent-y},  we derive an upper bound of $\cL(\overline{\lambda}_k,\pi^*, y^*) - \cL(\overline{\lambda}_k,\pi_{\theta_k}, \overline{y}_k)$ in the next lemma.

\begin{lemma}\label{lem:descent:lambda:pi} 
	For the sequences $\{\overline{\lambda}_k\}^{K}_{k=1}$, $\{\pi_{\theta_k}\}^{K}_{k = 1}$, and $\{\overline{y}_k\}^{K}_{k=1}$ generated by the VARAC algorithm, we have  
	\$
	&\beta_k^{-1}\cdot \bigl(\cL(\overline{\lambda}_k,\pi^*, y^*) - \cL(\overline{\lambda}_k,\pi_{\theta_k}, \overline{y}_k)\bigr) \\
	&\qquad \le 
	\EE_{\nu^*} \bigl[{\rm KL}\bigl(\pi^*(\cdot\,|\,s)\,\|\,\pi_{\theta_{k}}(\cdot\,|\,s)\bigr)\bigr] - \EE_{\nu^*}\bigl[{\rm KL}\bigl(\pi^*(\cdot\,|\,s)\,\|\,\pi_{\theta_{k+1}}(\cdot\,|\,s)\bigr)\bigr] \notag\\
	&\qquad\qquad +\beta_k^{-2} \hat{M} +\beta_k^{-1}\cdot8MNc_k+ \varepsilon_k + \varepsilon_k'.  \notag
	\$
 \end{lemma}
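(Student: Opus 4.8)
The plan is to run an argument in the spirit of mirror descent that stitches together the Performance Difference Lemma, the one-step descent inequality for the policy, the one-step bound for the dual variable $y$, and the stepwise energy estimate. First I would split the target quantity according to
\begin{align*}
\cL(\overline{\lambda}_k,\pi^*,y^*)-\cL(\overline{\lambda}_k,\pi_{\theta_k},\overline{y}_k) &= \bigl(\cL(\overline{\lambda}_k,\pi^*,y^*)-\cL(\overline{\lambda}_k,\pi_{\theta_k},y^*)\bigr)\\
&\quad+\bigl(\cL(\overline{\lambda}_k,\pi_{\theta_k},y^*)-\cL(\overline{\lambda}_k,\pi_{\theta_k},\overline{y}_k)\bigr).
\end{align*}
The second bracket is bounded by $4MNc_k$ directly by Lemma \ref{lem:main-descent-y}. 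For the first bracket I would apply the Performance Difference Lemma \ref{lem:v-diff} with $(\lambda,\pi,y)=(\overline{\lambda}_k,\pi_{\theta_k},y^*)$, rewriting it as $\EE_{\nu^*}[\la(1+2\overline{\lambda}_ky^*)Q^{\pi_{\theta_k}}(s,\cdot)-\overline{\lambda}_kW^{\pi_{\theta_k}}(s,\cdot),\pi^*(\cdot\,|\,s)-\pi_{\theta_k}(\cdot\,|\,s)\ra]$, which is exactly the inner product that appears, with the coefficient $-\beta_k^{-1}$, on the right-hand side of Lemma \ref{lem:main-descent}.

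Next I would take $\EE_{\nu^*}[\,\cdot\,]$ of the pointwise inequality in Lemma \ref{lem:main-descent} and rearrange so that $\beta_k^{-1}$ times the first bracket stands alone on the left. The right-hand side is then a sum of four pieces: (i) the telescoping term $\EE_{\nu^*}[{\rm KL}(\pi^*(\cdot\,|\,s)\,\|\,\pi_{\theta_k}(\cdot\,|\,s))]-\EE_{\nu^*}[{\rm KL}(\pi^*(\cdot\,|\,s)\,\|\,\pi_{\theta_{k+1}}(\cdot\,|\,s))]$; (ii) the log-ratio term $\EE_{\nu^*}[\la\log(\pi_{\theta_{k+1}}(\cdot\,|\,s)/\pi_{k+1}(\cdot\,|\,s)),\pi_{\theta_k}(\cdot\,|\,s)-\pi^*(\cdot\,|\,s)\ra]$, which is at most $\varepsilon_k$ in absolute value by the Error Propagation Lemma \ref{lem:error-kl}; (iii) the $y$-mismatch term $\beta_k^{-1}\EE_{\nu^*}[\la 2(y^*-\overline{y}_k)\overline{\lambda}_kQ^{\pi_{\theta_k}}(s,\cdot),\pi^*(\cdot\,|\,s)-\pi_{\theta_k}(\cdot\,|\,s)\ra]$; and (iv) the two proximal terms $-\EE_{\nu^*}[\la\tau_{k+1}^{-1}f_{\theta_{k+1}}(s,\cdot)-\tau_k^{-1}f_{\theta_k}(s,\cdot),\pi_{\theta_k}(\cdot\,|\,s)-\pi_{\theta_{k+1}}(\cdot\,|\,s)\ra]$ and $-(1/2)\cdot\EE_{\nu^*}[\|\pi_{\theta_{k+1}}(\cdot\,|\,s)-\pi_{\theta_k}(\cdot\,|\,s)\|_1^2]$.

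To finish I would bound piece (iii) using $|\overline{\lambda}_k|\le N$, boundedness of the rewards and of $y^*$, the identity $\EE_{\nu^*}[\la Q^{\pi_{\theta_k}}(s,\cdot),\pi^*(\cdot\,|\,s)-\pi_{\theta_k}(\cdot\,|\,s)\ra]=\rho(\pi^*)-\rho(\pi_{\theta_k})$ (the $\lambda=0$ instance of Lemma \ref{lem:v-diff}), and the estimation-error bound $|\overline{y}_k-\rho(\pi_{\theta_k})|\le c_k$ from \eqref{eq:average error}, so that this term contributes $\beta_k^{-1}\cdot 4MNc_k$ to the final estimate; and I would bound the two terms in (iv) by H\"older's inequality $|\la g,p\ra|\le\|g\|_\infty\|p\|_1$ and then Young's inequality, so that $\|\pi_{\theta_{k+1}}(\cdot\,|\,s)-\pi_{\theta_k}(\cdot\,|\,s)\|_1^2$ is absorbed by the negative quadratic and only $(1/2)\cdot\EE_{\nu^*}[\|\tau_{k+1}^{-1}f_{\theta_{k+1}}(s,\cdot)-\tau_k^{-1}f_{\theta_k}(s,\cdot)\|_\infty^2]$ remains, which the Stepwise Energy Difference Lemma \ref{lem:stepwise-energy} bounds by $\varepsilon_k'+\beta_k^{-2}\hat{M}$. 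Multiplying the Lemma \ref{lem:main-descent-y} estimate by $\beta_k^{-1}$ and adding everything, the two $c_k$-contributions combine into $\beta_k^{-1}\cdot 8MNc_k$, which yields the claimed inequality.

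The step I expect to be the main obstacle is the treatment of the two proximal terms in (iv): one has to observe that the inner product carrying the potentially large stepwise energy difference can only be controlled by spending the negative quadratic $-(1/2)\cdot\|\pi_{\theta_{k+1}}(\cdot\,|\,s)-\pi_{\theta_k}(\cdot\,|\,s)\|_1^2$ that the three-point identity underlying Lemma \ref{lem:main-descent} supplies, and that the residual is precisely the squared sup-norm controlled by Lemma \ref{lem:stepwise-energy}. The remaining work is careful sign-tracking through the rearrangement, together with checking that the estimation-error remainders (the $y^*$-versus-$\overline{y}_k$ and $\overline{y}_k$-versus-$\rho(\pi_{\theta_k})$ mismatches) add up to exactly $8MNc_k$ rather than a weaker constant.
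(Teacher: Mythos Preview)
Your proposal is correct and follows essentially the same approach as the paper: decompose the target via $y^*$, invoke Lemma~\ref{lem:main-descent-y} for the $y$-gap, take $\EE_{\nu^*}$ of Lemma~\ref{lem:main-descent}, identify the performance-difference inner product via Lemma~\ref{lem:v-diff}, control the log-ratio by Lemma~\ref{lem:error-kl}, and handle the proximal terms by H\"older plus Young followed by Lemma~\ref{lem:stepwise-energy}. The only place where your sketch is terser than the paper is piece~(iii): to extract a factor of $c_k$ there, the paper uses $y^*=\rho(\pi^*)$ and $\overline{y}_k=\overline{\rho}(\pi_{\theta_k})$ to split off and discard a sign-definite square $\overline{\lambda}_k(\rho(\pi^*)-\overline{\rho}(\pi_{\theta_k}))^2$, leaving only the cross term bounded by $2MNc_k$; your listed ingredients are exactly those needed for this step.
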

\begin{proof}
   	Taking expectation of ${\rm KL}(\pi^*(\cdot\,|\,s)\,\|\,\pi_{\theta_{k+1}}(\cdot\,|\,s)) - {\rm KL}(\pi^*(\cdot\,|\,s)\,\|\,\pi_{\theta_{k}}(\cdot\,|\,s))$ with respect to $s\sim\nu^*$, and by Lemma \ref{lem:error-kl} and Lemma \ref{lem:main-descent}, we have
   \$
   &\EE_{\nu^*}\bigl[{\rm KL}\bigl(\pi^*(\cdot\,|\,s)\,\|\,\pi_{\theta_{k+1}}(\cdot\,|\,s)\bigr)\bigr] - \EE_{\nu^*}\bigl[{\rm KL}\bigl(\pi^*(\cdot\,|\,s)\,\|\,\pi_{\theta_{k}}(\cdot\,|\,s)\bigr)\bigr] \notag\\
   &\qquad\le
   \varepsilon_{k} - \beta_k^{-1}\cdot \EE_{\nu^*}[\la(1+2y^*\overline{\lambda}_k)Q^{\pi_{\theta_k}}(s, \cdot) - \overline{\lambda}_kW^{\pi_{\theta_k}}(s, \cdot), \pi^*(\cdot\,|\,s) - \pi_{\theta_{k}}(\cdot\,|\,s)\ra] \notag\\
   %&\quad\qquad \EE_{\nu^*}[\la(2(\overline{y}_k - y^*)\overline{\lambda}_k)Q^{\pi_{\theta_k}}(s, \cdot), \pi^*(\cdot\,|\,s) - \pi_{\theta_{k}}(\cdot\,|\,s)\ra] \norag\\
   &\qquad\qquad + \beta_k^{-1}\cdot \EE_{\nu^*}[\la2(y^*-\overline{y}_k)\overline{\lambda}_kQ^{\pi_{\theta_k}}(s, \cdot), \pi^*(\cdot\,|\,s) - \pi_{\theta_{k}}(\cdot\,|\,s)\ra] \notag\\
   &\qquad\qquad
   - \EE_{\nu^*}[\la \tau_{k+1}^{-1}f_{\theta_{k+1}}(s, \cdot) - \tau_k^{-1}f_{\theta_k}(s, \cdot),  \pi_{\theta_k}(\cdot\,|\,s)- \pi_{\theta_{k+1}}(\cdot\,|\,s)  \ra] \notag\\
   &\qquad\qquad - 1/2\cdot\EE_{\nu^*}[ \|\pi_{\theta_{k+1}}(\cdot\,|\,s) - \pi_{\theta_{k}}(\cdot\,|\,s)\|^2_1 ],
   \$
   where $\varepsilon_k$ is defined in  Lemma \ref{lem:error-kl}.
   
   By Lemma \ref{lem:v-diff} and the H{\"o}lder's inequality, we further have
   \#\label{812209}
   &\EE_{\nu^*}\bigl[{\rm KL}\bigl(\pi^*(\cdot\,|\,s)\,\|\,\pi_{\theta_{k+1}}(\cdot\,|\,s)\bigr)\bigr] - \EE_{\nu^*}\bigl[{\rm KL}\bigl(\pi^*(\cdot\,|\,s)\,\|\,\pi_{\theta_{k}}(\cdot\,|\,s)\bigr)\bigr] \notag\\
   &\qquad\le
   \varepsilon_{k} - \beta_k^{-1}\cdot\bigl(\cL(\overline{\lambda}_k,\pi^*, y^*) - \cL(\overline{\lambda}_k,\pi_{\theta_k}, y^*)\bigr)   +\beta_k^{-1}\cdot\bigl(2\overline{\lambda}_k(\overline{y}_k-y^*)\bigr)\cdot\bigl(\rho(\pi^*) - \rho(\pi_{\theta_{k}})\bigr)  \notag\\
   &\qquad\qquad  +\EE_{\nu^*}[\| \tau_{k+1}^{-1}f_{\theta_{k+1}}(s,\cdot)- \tau_{k}^{-1}f_{\theta_{k}}(s,\cdot)\|_{\infty}\cdot \|  \pi_{\theta_k}(\cdot\,|\,s)- \pi_{\theta_{k+1}}(\cdot\,|\,s) \|_1]\notag\\
   &\qquad\qquad - 1/2\cdot\EE_{\nu^*}[ \|\pi_{\theta_{k+1}}(\cdot\,|\,s) - \pi_{\theta_{k}}(\cdot\,|\,s)\|^2_1 ] \notag\\
   &\qquad\le
   \varepsilon_{k} - \beta_k^{-1}\cdot\bigl(\cL(\overline{\lambda}_k,\pi^*, y^*) - \cL(\overline{\lambda}_k,\pi_{\theta_k}, y^*)\bigr)   +\beta_k^{-1}\cdot\bigl(2\overline{\lambda}_k(\overline{y}_k-y^*)\bigr)\cdot\bigl(\rho(\pi^*) - \rho(\pi_{\theta_{k}})\bigr)  \notag\\
   &\qquad\qquad + 1/2 \cdot \EE_{\nu^*}[ \| \tau_{k+1}^{-1}f_{\theta_{k+1}}(s,\cdot)- \tau_{k}^{-1}f_{\theta_{k}}(s,\cdot)\|_{\infty}^2 ]\notag\\
   &\qquad\le
   \varepsilon_{k} - \beta_k^{-1}\cdot\bigl(\cL(\overline{\lambda}_k,\pi^*, y^*) - \cL(\overline{\lambda}_k,\pi_{\theta_k}, y^*)\bigr)   \notag\\
   &\qquad\qquad +\beta_k^{-1}\cdot\bigl(2\overline{\lambda}_k(\overline{y}_k-y^*)\bigr)\cdot\bigl(\rho(\pi^*) - \rho(\pi_{\theta_{k}})\bigr) + (\varepsilon_k'+\beta_k^{-2}\hat{M}), \#
   where  the second inequality holds by the fact that $2xy - y^2 \leq x^2$ for any $x,y\in\RR$,  and the last inequality holds by  Lemma~\ref{lem:stepwise-energy}. Rearranging the terms in \eqref{812209}, we have
   \begin{equation}\label{812334}
   \begin{aligned}
   &\beta_k^{-1}\cdot \bigl(\cL(\overline{\lambda}_k,\pi^*, y^*) - \cL(\overline{\lambda}_k,\pi_{\theta_k}, y^*)\bigr) \\
   &\qquad \le 
   \EE_{\nu^*}\bigl[{\rm KL}\bigl(\pi^*(\cdot\,|\,s)\,\|\,\pi_{\theta_{k}}(\cdot\,|\,s)\bigr)\bigr] - \EE_{\nu^*}\bigl[{\rm KL}\bigl(\pi^*(\cdot\,|\,s)\,\|\,\pi_{\theta_{k+1}}(\cdot\,|\,s)\bigr)\bigr] \\
   &\qquad\qquad + 2\beta_k^{-1}\cdot\overline{\lambda}_k\cdot(\overline{y}_k-y^*)\cdot\bigl(\rho(\pi^*) - \rho(\pi_{\theta_{k}})\bigr)
   +\beta_k^{-2} \hat{M} + \varepsilon_k + \varepsilon_k'.
   \end{aligned}
   \end{equation}
   %   By Lemma \ref{lem:main-descent-y}, we have
%   \#\label{eq:812335}
%   \cL(\overline{\lambda}_k, \pi_{\theta_k}, y^*) - \cL(\overline{\lambda}_k, \pi_{\theta_k}, \overline{y}_k)  \le 4MNc_k .
%   \#
Furthermore, by the definition that $\overline{y}_k = \overline{\rho}(\pi_{\theta_k})$ and $y^*=\rho(\pi^*)$, we have 
   \$
   &\overline{\lambda}_k\cdot (\overline{y}_k-y^*)\cdot\bigl(\rho(\pi^*) - \rho(\pi_{\theta_{k}})\bigr)  \\
   &\qquad=\overline{\lambda}_k\cdot  \bigl[  (\overline{y}_k-y^*)\cdot\bigl(  \overline{\rho}(\pi_{\theta_{k}})- \rho(\pi_{\theta_{k}})\bigr)
   -\bigl(\rho(\pi^*) - \overline{\rho}(\pi_{\theta_{k}})\bigr)^2   \bigr]  \notag \\
   &\qquad \le \overline{\lambda}_k \cdot (\overline{y}_k-y^*)\cdot\bigl(  \overline{\rho}(\pi_{\theta_{k}})- \rho(\pi_{\theta_{k}})\bigr),  \notag 
   \$
   where the inequality holds by the fact that $  \overline{\lambda}_k(\overline{y}_k-y^*)^2$ is nonnegative. By  \eqref{eq:update-lambda-form}, \eqref{eq:update-y-form} and \eqref{eq:average error}, we further have $\overline{\lambda}_k \le N$, $ | \overline{y}_k-y^* | \le 2M$, and $ | \overline{\rho}(\pi_{\theta_{k}})- \rho(\pi_{\theta_{k}}) | \le c_k$. Hence, we obtain
   \#\label{eq:812444}
   \overline{\lambda}_k \cdot (\overline{y}_k-y^*)\cdot\bigl(\rho(\pi^*) - \rho(\pi_{\theta_{k}})\bigr) \le 2MNc_k, 
   \# 
   where the inequality holds by \eqref{eq:update-y-form}, \eqref{eq:average error}, and the fact that $  \overline{\lambda}_k(\overline{y}_k-y^*)^2$ is nonnegative.
   Plugging \eqref{eq:812335} and \eqref{eq:812444}  into \eqref{812334}, we obtain
   \begin{equation}\label{eq:812336}
   \begin{aligned}
   &\beta_k^{-1}\cdot \bigl(\cL(\overline{\lambda}_k,\pi^*, y^*) - \cL(\overline{\lambda}_k,\pi_{\theta_k}, \overline{y}_k)\bigr) \\
   &\qquad \le 
   \EE_{\nu^*}\bigl[{\rm KL}\bigl(\pi^*(\cdot\,|\,s)\,\|\,\pi_{\theta_{k}}(\cdot\,|\,s)\bigr)\bigr] - \EE_{\nu^*}\bigl[{\rm KL}\bigl(\pi^*(\cdot\,|\,s)\,\|\,\pi_{\theta_{k+1}}(\cdot\,|\,s)\bigr)\bigr] \\
   &\qquad\qquad +\beta_k^{-2} \hat{M} +\beta_k^{-1}\cdot 10 MNc_k+ \varepsilon_k + \varepsilon_k',  
   \end{aligned}
   \end{equation}
   which concludes the proof.
\end{proof}

Now, we are ready to prove Theorem \ref{thm:main} by casting the VARAC algorithm as an infinite-dimensional mirror descent with primal and dual errors.

\begin{proof}[Proof of Theorem \ref{thm:main}] 
	We show the convergence in two steps by showing the first and second inequalities in \eqref{eqn:m1}, respectively. \\
	\noindent\textbf{Part 1.}
	%By telescoping \eqref{eq:descent-lambda} for $k+1\in [K]$, we obtain 
	%\# \label{eq:lambda111}
	%&\frac{1}{K}\sum_{k=0}^{K-1} ( (\cL(\lambda^*, \pi_{\theta_k},\overline{y}_k) - \cL(\overline{\lambda}_k, \pi_{\theta_k},\overline{y}_k)) \\
	%&\quad \geq\frac{\|\lambda^* - \overline{\lambda}_K\|^2 - \|\lambda^* - \overline{\lambda}_0\|^2 - \sum_{k=0}^{K-1}\gamma_k^{-1}\cdot2N\cdot(d_k + 2Mc_k)   }{\sum_{k=0}^{K-1}\gamma_k^{-1}}     \notag\\
	%&\quad\qquad - \frac{\sum_{k=0}^{K-1}\gamma_k^{-2}(\alpha+ F + 3M^2 + 2Mc_k + d_k)^2 }{4\sum_{k=0}^{K-1}\gamma_k^{-1}} .\notag
	%\#
	Letting $\gamma_k =\gamma\sqrt{K}$ and telescoping \eqref{eq:descent-lambda} for $k + 1 \in [K]$, we have
	\begin{equation}\label{eq:result-part1}
	\begin{aligned}
	&\frac{1}{K}\sum_{k=0}^{K-1} \bigl( (\cL(\lambda^*, \pi_{\theta_k},\overline{y}_k) - \cL(\overline{\lambda}_k, \pi_{\theta_k},\overline{y}_k)\bigr) \\
	&\qquad\geq \gamma \cdot \frac{\|\lambda^* - \overline{\lambda}_K\|^2 - \|\lambda^* - \overline{\lambda}_0\|^2}{\sqrt{K}} - \frac{ 2N\sum_{k=0}^{K-1} (d_k + 2Mc_k)}{K}  - \frac{(\alpha+  4M^2)^2  }{4\gamma  \sqrt{K}}  \\
	&\qquad \geq  -\frac{\gamma \cdot   \|\lambda^* - \overline{\lambda}_0\|^2}{\sqrt{K}} - \frac{ 2N\sum_{k=0}^{K-1} (d_k + 2Mc_k)}{K}  - \frac{(\alpha+  4M^2)^2  }{4\gamma  \sqrt{K}}  \\
	&\qquad =  - \sum_{k=0}^{K-1} (c_k + d_k)\cdot \cO(1/K)  - \cO(1/\sqrt{K}), 
	\end{aligned}
	\end{equation}
	where the second inequality holds by the fact that $\|\lambda^* - \overline{\lambda}_K\|^2$ is nonnegative. By the definition of saddle-point  that $\cL(\lambda^*, \pi_{\theta_k},\overline{y}_k) \leq \cL(\lambda^*, \pi^*, y^*) $, we complete the proof of the first part of Theorem~\ref{thm:main}.

	\vskip5pt
	\noindent\textbf{Part 2.}
	By telescoping \eqref{eq:812336} for $k + 1\in [K]$, we obtain
	\$
	&\sum_{k=0}^{K-1} \beta_k^{-1}\cdot\bigl(\cL(\overline{\lambda}_k,\pi^*, y^*) - \cL(\overline{\lambda}_k, \pi_{\theta_k}, \overline{y}_k)\bigr)\notag\\
	&\qquad \le 
	\EE_{\nu^*}\bigl[{\rm KL}\bigl(\pi^*(\cdot\,|\,s)\,\|\,\pi_{\theta_{0}}(\cdot\,|\,s)\bigr)\bigr] - \EE_{\nu^*}\bigl[{\rm KL}\bigl(\pi^*(\cdot\,|\,s)\,\|\,\pi_{\theta_{K}}(\cdot\,|\,s)\bigr)\bigr] \notag\\
	&\qquad\qquad + \sum_{k=0}^{K-1}(\beta_k^{-2} \hat{M} +\beta_k^{-1}\cdot10MNc_k+ \varepsilon_k + \varepsilon_k')  .
	\$
	Note that we have (i) $\EE_{\nu^*}[{\rm KL}(\pi^*(\cdot\,|\,s)\,\|\,\pi_{\theta_0}(\cdot\,|\,s))] \leq \log|\cA|$ due to the uniform initialization of policy, and  (ii) the KL-divergence is nonnegative. Setting  $\beta_k = \beta\sqrt{K},$  we have
	\begin{equation} \label{eq:result-part2}
	\begin{aligned}
	&\frac{1}{K}\sum_{k=0}^{K-1} \bigl(\cL(\overline{\lambda}_k,\pi^*, y^*) - \cL(\overline{\lambda}_k, \pi_{\theta_k}, \overline{y}_k)\bigr) \\
	&\qquad\le \frac{10MN\sum_{k=0}^{K-1}c_k}{K} +
	\frac{\beta\log|\cA| +  \beta^{-1} \hat{M} +\sum_{k=0}^{K-1}(  \varepsilon_k + \varepsilon_k')}{\sqrt{K}}  \\
	&\qquad   =\sum_{k=0}^{K-1}c_k \cdot \cO(1/K) +   \sum_{k=0}^{K-1}(  \varepsilon_k + \varepsilon_k') \cdot  \cO(1/\sqrt{K}) + \cO(1\sqrt{K})                   .
	\end{aligned}
	\end{equation}
By the definition of saddle point that $\cL(\overline{\lambda}_k,\pi^*, y^*) \geq \cL(\lambda^*, \pi^*, y^*) $, we conclude the proof.
\end{proof}
%\begin{proof}
%See Section \ref{sec:sketch} for the detailed proof.
%\end{proof}

%To further elaborate Theorem \ref{thm:main}, in the following corollary we choose proper parameters to ensure the $\c\cO(1/\sqrt{K})$ rate of convergence.
By optimizing the input parameters, we obtain the $\cO(1/\sqrt{K})$ rate of convergence in the following corollary.

\begin{corollary}\label{coro:main-bound}
Suppose that Assumptions \ref{assumption:unique-solution}, \ref{assumption:closed-q-w}, and \ref{assumption:contraction} hold. Let $R_{\rm a}  = R_{\rm b} = R_{\rm c} = \cO(m_{\rm a}^{1/2} H_{\rm a}^{-6}(\log m_{\rm a})^{-3}),$ $T = \Omega( K^{3} (\phi^*_{k}+\psi^*_{k})^2 |\cA| R_{\rm a}^4H_am_a^{2/3} )$, $m_{\rm a} = m_{\rm b} = m_{\rm c} = \Omega( d^{3/2}K^{9}(\phi^*_{k}+\psi^*_{k})^6|\cA|^3 R_{\rm a}^{16} H_{\rm a}^{42} \log^6 m_{\rm a} )$ and $p=\exp( - \Omega(R_{\rm a}^{2/3} m_{\rm a}^{2/3}H_a ))$ for any $0\le k \le K$. With probability at least $1 - 4\exp( - \Omega(R_{\rm a}^{2/3} m_{\rm a}^{2/3}H_a ))$, we have
\$
\frac{1}{K} \bigg| \sum_{k=0}^{K-1}  \bigl(\cL(\lambda^*,\pi^*, y^*) - \cL(\overline{\lambda}_k, \pi_{\theta_k}, \overline{y}_k)\bigr) \bigg| \leq \cO(1/\sqrt{K}).
\$
\end{corollary}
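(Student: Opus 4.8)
The plan is to obtain the corollary directly from the two-sided bound \eqref{eqn:m1} of Theorem \ref{thm:main}, by showing that the prescribed choices of $T$ and $m_{\rm a}$ drive every error term appearing there down to the order $\cO(1/\sqrt{K})$ after averaging. First I would instantiate the three computation-error lemmas (Lemmas \ref{thm:ac-error}, \ref{thm:td}, \ref{thm:td2}) with $R_{\rm a}=R_{\rm b}=R_{\rm c}$, $m_{\rm a}=m_{\rm b}=m_{\rm c}$, and the depths as required there, so that each of $\epsilon_{k+1}$, $\epsilon'_k$, $\epsilon''_k$ is $\cO\bigl(R_{\rm a}^2 T^{-1/2}+R_{\rm a}^{8/3}m_{\rm a}^{-1/6}H_{\rm a}^7\log m_{\rm a}\bigr)$, each on an event of probability at least $1-\exp(-\Omega(R_{\rm a}^{2/3}m_{\rm a}^{2/3}H_{\rm a}))$; the union over $k\in\{0,\dots,K-1\}$ is absorbed into the exponent because $m_{\rm a}^{2/3}=\Omega(K^{6})$ dominates $\log K$. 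In parallel I would apply Lemma \ref{lem:average-error} with $p=\exp(-\Omega(R_{\rm a}^{2/3}m_{\rm a}^{2/3}H_{\rm a}))$, which yields, uniformly in $k$ and on an event of probability at least $1-p$, the estimation errors $c_k,d_k=\cO\bigl(T^{-1/2}\log(4K/p)^{1/2}\bigr)=\cO\bigl(T^{-1/2}R_{\rm a}^{1/3}m_{\rm a}^{1/3}H_{\rm a}^{1/2}\bigr)$, where in the last step I used that $\log(1/p)=\Omega(R_{\rm a}^{2/3}m_{\rm a}^{2/3}H_{\rm a})$ dominates $\log(4K)$. Intersecting these four events produces the failure probability $4\exp(-\Omega(R_{\rm a}^{2/3}m_{\rm a}^{2/3}H_{\rm a}))$ stated in the corollary.

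The second and main step is to substitute the schedule $\tau_{k+1}=\beta\sqrt{K}/(k+1)$ and $\beta_k=\beta\sqrt{K}$ of Algorithm~\ref{alg:risac} into the definitions of $\varepsilon_k$ and $\varepsilon_k'$ from Theorem~\ref{thm:main} and verify that the stated lower bounds on $T$ and $m_{\rm a}$ make $\varepsilon_k+\varepsilon_k'=\cO(1/K)$ uniformly in $k$. Since $\tau_{k+1}^{-1}\le\sqrt{K}/\beta$, $\beta_k^{-1}=(\beta\sqrt{K})^{-1}$ and $\tau_{k+1}^{-2}\le K/\beta^2$, one has $\varepsilon_k=\cO\bigl(\sqrt{K}\,\phi^*_k\,\epsilon_{k+1}+K^{-1/2}(1+MN)\psi^*_k(\epsilon'_k+\epsilon''_k)\bigr)$ and $\varepsilon_k'=\cO\bigl(K\,|\cA|\,\epsilon_{k+1}^2\bigr)$. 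Plugging in the $\epsilon$-bounds from the first step, the term $\varepsilon_k'$ — being quadratic in the neural-network error and carrying the factor $K|\cA|$ — is the most demanding, and requiring both $R_{\rm a}^2 T^{-1/2}$ and $R_{\rm a}^{8/3}m_{\rm a}^{-1/6}H_{\rm a}^7\log m_{\rm a}$ to be $\cO\bigl(K^{-3/2}(\phi^*_k+\psi^*_k)^{-1}|\cA|^{-1/2}\bigr)$ (which subsumes the requirement coming from the linear term $\varepsilon_k$) is precisely what gives $T=\Omega\bigl(K^3(\phi^*_k+\psi^*_k)^2|\cA|R_{\rm a}^4 H_{\rm a}m_{\rm a}^{2/3}\bigr)$ and $m_{\rm a}=\Omega\bigl(d^{3/2}K^9(\phi^*_k+\psi^*_k)^6|\cA|^3 R_{\rm a}^{16}H_{\rm a}^{42}\log^6 m_{\rm a}\bigr)$; the extra $H_{\rm a}m_{\rm a}^{2/3}$ factor in $T$ is exactly what is needed so that $c_k,d_k=\cO(1/\sqrt{K})$ in view of the first paragraph. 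This bookkeeping of exponents — reconciling a single pair $(T,m_{\rm a})$ with the competing demands of $\varepsilon_k$, of the quadratic term $\varepsilon_k'$, and of the statistical error $c_k$ (whose log factor is polynomial in $m_{\rm a}$ because $p$ is taken exponentially small) — is the only genuinely delicate part of the argument.

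Finally, I would feed the two uniform bounds $\varepsilon_k+\varepsilon_k'=\cO(1/K)$ and $c_k,d_k=\cO(1/\sqrt{K})$ back into \eqref{eqn:m1}. On the upper side, $\bigl(\sum_{k=0}^{K-1}c_k\bigr)\cdot\cO(1/K)=\cO(1/\sqrt{K})$, $\sum_{k=0}^{K-1}(\varepsilon_k+\varepsilon_k')\cdot\cO(1/\sqrt{K})=\cO(1)\cdot\cO(1/\sqrt{K})=\cO(1/\sqrt{K})$, and the leftover term is already $\cO(1/\sqrt{K})$; on the lower side, $\sum_{k=0}^{K-1}(c_k+d_k)\cdot\cO(1/K)=\cO(1/\sqrt{K})$ and the leftover term is $\cO(1/\sqrt{K})$. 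Combining the upper and lower bounds gives $\frac1K\bigl|\sum_{k=0}^{K-1}\bigl(\cL(\lambda^*,\pi^*,y^*)-\cL(\overline\lambda_k,\pi_{\theta_k},\overline y_k)\bigr)\bigr|\le\cO(1/\sqrt{K})$ on the aforementioned event, which is the assertion. All telescoping has already been carried out inside the proof of Theorem~\ref{thm:main}, so no further summation identities are required here.
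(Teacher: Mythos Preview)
Your proposal is correct and follows essentially the same approach as the paper: instantiate the computation-error Lemmas \ref{thm:ac-error}--\ref{thm:td2} and the estimation-error Lemma \ref{lem:average-error} under the stated parameter choices to force $\varepsilon_k,\varepsilon_k'=\cO(1/K)$ and $c_k,d_k=\cO(1/\sqrt{K})$, then plug these into the two-sided bound \eqref{eqn:m1} of Theorem \ref{thm:main}. Your write-up is in fact more explicit than the paper's in tracing where each exponent in the $T$ and $m_{\rm a}$ requirements comes from (in particular the extra $H_{\rm a}m_{\rm a}^{2/3}$ factor in $T$ needed for $c_k,d_k$ once $p$ is exponentially small), but the logical skeleton is identical.
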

\begin{proof}
See Appendix \ref{appendix:main} for the  detailed proof. 
\end{proof}

Finally, we show in the next theorem about the convergence of the solution path to a globally optimal solution at an $\cO(1/\sqrt{K})$ rate despite the nonconvexity of problem~\eqref{eq:problem}. This shows that  the VARAC algorithm converges to a globally optimal solution.

\begin{theorem}[Global Convergence]\label{thm:duality-gap}
	Suppose that Assumptions~\ref{assumption:unique-solution}, \ref{assumption:closed-q-w}, and \ref{assumption:contraction} hold. For the sequences $\{\overline{\lambda}_k\}^{K}_{k=1}$, $\{\pi_{\theta_k}\}^{K}_{k = 1}$ and $\{\overline{y}_k\}^{K}_{k=1}$ generated by the VARAC algorithm, we have  
	\$
	 0& \leq \frac{1}{K}\sum_{k=0}^{K-1} \bigl( \cL(\overline{\lambda}_k,\pi^*, y^*) - \cL(\lambda^*, \pi_{\theta_k}, \overline{y}_k) \bigr)\\
	& \leq  \sum_{k=0}^{K-1}(c_k + d_k)\cdot \cO(1/K) + \sum_{k = 0}^{K - 1}(\varepsilon_k + \varepsilon_k') \cdot \cO(1/\sqrt{K}) + \cO(1/\sqrt{K}).
	\$
	Moreover, if we set the input parameters same as Corollary \ref{coro:main-bound}, it holds that, with probability at least $1 - 4\exp( - \Omega(R_{\rm a}^{2/3} m_{\rm a}^{2/3}H_a ))$,
	\$
	\frac{1}{K} \bigg| \sum_{k=0}^{K-1} \bigl( \cL(\overline{\lambda}_k,\pi^*, y^*) - \cL(\lambda^*, \pi_{\theta_k}, \overline{y}_k)\bigr) \bigg| \le \cO(1/\sqrt{K}).
	\$
\end{theorem}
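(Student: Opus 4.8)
The plan is to derive both inequalities purely from the saddle-point property of $(\lambda^*,\pi^*,y^*)$ together with the two telescoped bounds already assembled inside the proof of Theorem~\ref{thm:main}, namely \eqref{eq:result-part1} and \eqref{eq:result-part2}; no new estimates are needed. The quantity $\cL(\overline{\lambda}_k,\pi^*,y^*)-\cL(\lambda^*,\pi_{\theta_k},\overline{y}_k)$ is the standard primal-plus-dual suboptimality measure, and I will control it by inserting the reference value $\cL(\lambda^*,\pi^*,y^*)$ (for the lower bound) and by splitting across $\overline{\lambda}_k$ and $\pi_{\theta_k}$ (for the upper bound).

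\textbf{Lower bound.} Since $\lambda^*$ minimizes $\cL(\cdot,\pi^*,y^*)$ and $(\pi^*,y^*)$ maximizes $\cL(\lambda^*,\cdot,\cdot)$, for every $k$ we have the sandwich
\$
\cL(\lambda^*,\pi_{\theta_k},\overline{y}_k)\le \cL(\lambda^*,\pi^*,y^*)\le \cL(\overline{\lambda}_k,\pi^*,y^*),
\$
so each summand $\cL(\overline{\lambda}_k,\pi^*,y^*)-\cL(\lambda^*,\pi_{\theta_k},\overline{y}_k)$ is nonnegative; averaging gives the left inequality, and in particular $\bigl|\tfrac1K\sum_k(\cdots)\bigr|=\tfrac1K\sum_k(\cdots)$, which is what turns the rate bound into the absolute-value form at the end.

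\textbf{Upper bound.} I would decompose each term as
\$
&\cL(\overline{\lambda}_k,\pi^*,y^*)-\cL(\lambda^*,\pi_{\theta_k},\overline{y}_k)\\
&\qquad=\bigl(\cL(\overline{\lambda}_k,\pi^*,y^*)-\cL(\overline{\lambda}_k,\pi_{\theta_k},\overline{y}_k)\bigr)+\bigl(\cL(\overline{\lambda}_k,\pi_{\theta_k},\overline{y}_k)-\cL(\lambda^*,\pi_{\theta_k},\overline{y}_k)\bigr).
\$
Averaging the first group over $k$ and invoking \eqref{eq:result-part2} bounds it by $\sum_k c_k\cdot\cO(1/K)+\sum_k(\varepsilon_k+\varepsilon_k')\cdot\cO(1/\sqrt K)+\cO(1/\sqrt K)$. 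For the second group, multiplying \eqref{eq:result-part1} by $-1$ gives $\tfrac1K\sum_k\bigl(\cL(\overline{\lambda}_k,\pi_{\theta_k},\overline{y}_k)-\cL(\lambda^*,\pi_{\theta_k},\overline{y}_k)\bigr)\le\sum_k(c_k+d_k)\cdot\cO(1/K)+\cO(1/\sqrt K)$. Summing the two estimates and folding $\sum_k(2c_k+d_k)$ into $\sum_k(c_k+d_k)$ reproduces exactly the claimed right-hand side.

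\textbf{Extracting the rate and the main obstacle.} For the high-probability statement I would substitute the hyperparameter choices of Corollary~\ref{coro:main-bound}: Lemma~\ref{lem:average-error} controls $c_k,d_k$, and Lemmas~\ref{thm:ac-error}, \ref{thm:td}, \ref{thm:td2} control $\epsilon_{k+1},\epsilon_k',\epsilon_k''$, hence $\varepsilon_k$ and $\varepsilon_k'$ via their definitions; one then checks, exactly as in the proof of Corollary~\ref{coro:main-bound}, that each of $\sum_k(c_k+d_k)/K$, $\sum_k\varepsilon_k/\sqrt K$, $\sum_k\varepsilon_k'/\sqrt K$, and $1/\sqrt K$ is $\cO(1/\sqrt K)$, and a union bound over the four failure events (the SGD subroutine, the two TD subroutines, and the concentration event of Lemma~\ref{lem:average-error} with $p=\exp(-\Omega(R_{\rm a}^{2/3}m_{\rm a}^{2/3}H_{\rm a}))$) gives probability at least $1-4\exp(-\Omega(R_{\rm a}^{2/3}m_{\rm a}^{2/3}H_{\rm a}))$. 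Combined with the lower bound $\ge 0$, this yields the absolute-value bound. The only genuinely delicate point is the bookkeeping in this last step: one must track how the density-ratio quantities $\phi^*_k,\psi^*_k$, the cardinality $|\cA|$, and the network depths/radii enter $\varepsilon_k$ and $\varepsilon_k'$, and confirm that the prescribed scalings of $T$ and of $m_{\rm a}=m_{\rm b}=m_{\rm c}$ are large enough to force $\varepsilon_k,\varepsilon_k'=\cO(1/K)$; everything else follows mechanically from results already established in the excerpt.
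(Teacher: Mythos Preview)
Your proposal is correct and follows essentially the same route as the paper: the lower bound comes directly from the saddle-point inequalities, the upper bound from the same two-term decomposition $\cL(\overline{\lambda}_k,\pi^*,y^*)-\cL(\overline{\lambda}_k,\pi_{\theta_k},\overline{y}_k)$ plus $\cL(\overline{\lambda}_k,\pi_{\theta_k},\overline{y}_k)-\cL(\lambda^*,\pi_{\theta_k},\overline{y}_k)$, controlled respectively by \eqref{eq:result-part2} and the negation of \eqref{eq:result-part1}, and the high-probability rate is obtained by plugging in the parameter choices of Corollary~\ref{coro:main-bound} exactly as you describe.
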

\begin{proof}
	See Appendix \ref{appendix:main} for the detailed proof. 
\end{proof}
%%%%%%%%%%%%%%%%%%%%%%%%

\subsection{Stronger Results Under Slater Condition}
We then establish a stronger result that under the Slater condition that problem \eqref{eq:problem} is strictly feasible, the optimality and feasibility gaps both diminish  $\cO(1/\sqrt{K})$ rates.

\begin{assumption}[Slater Condition] \label{assumption:slater}
	There exists $\xi > 0$ and $\bar{\pi}$ such that $\alpha - \Lambda(\bar{\pi}) \geq \xi$.
\end{assumption}

The Slater condition in Assumption \ref{assumption:slater} is mild in practice and commonly adopted in the previous literature on constrained optimization \citep{bertsekas2014constrained} and constrained RL \citep{altman1999constrained,paternain2019safe,paternain2019constrained,efroni2020exploration,ding2020natural,ding2021provably,chen2021primal}. With Assumption \ref{assumption:slater}, we can characterize the boundedness of the optimal Lagrangian dual variable $\lambda^*$ as follows. 

\begin{lemma}[Boundedness of $\lambda^*$] \label{lem:boundedness:lambda}
	Suppose Assumption \ref{assumption:slater} holds, then the optimal Lagrangian dual variable $\lambda^*$ satisfies that $0 \le \lambda^* \le (\rho(\pi^*) - \rho(\bar{\pi}))/\xi$. 
\end{lemma}
  
\begin{proof}
	See \citet{paternain2019safe,paternain2019constrained} for a detailed proof.
\end{proof}

Together with \eqref{eq:bound:rho:eta}, Lemma \ref{lem:boundedness:lambda} shows that $\lambda^* \in [0, M/\xi]$. Inspired by this, we  choose $N = 2M/\xi$ in \eqref{eq:update-lambda-form}. With the Slater condition (Assumption \ref{assumption:slater}), we derive the convergence rates of optimality and feasibility gaps in 
the following theorem.

\begin{theorem}[Constraint Violation] \label{thm:main:slater}
	Suppose that Assumptions~\ref{assumption:unique-solution}, \ref{assumption:closed-q-w}, \ref{assumption:contraction}, and \ref{assumption:slater} hold. Let $N = 2M/\xi$ in \eqref{eq:update-lambda-form}. For the sequences $\{\overline{\lambda}_k\}^{K}_{k=1}$, $\{\pi_{\theta_k}\}^{K}_{k = 1}$ and $\{\overline{y}_k\}^{K}_{k=1}$ generated by the VARAC algorithm (Alg.~\ref{alg:risac}), we have  
%\begin{equation}\label{eqn:m2}
%\begin{aligned}
	\$
	\rho(\pi^*) - \frac{1}{K}\sum_{k=0}^{K-1}  \rho(\pi_{\theta_k})  &\le  \sum_{k=0}^{K-1}(c_k + d_k) \cdot \cO(1/K) +   \sum_{k=0}^{K-1}(  \varepsilon_k + \varepsilon_k') \cdot  \cO(1/\sqrt{K}) + \cO(1/\sqrt{K}), \\
	 \Bigl[ \frac{1}{K}\sum_{k = 0}^{K - 1}\Lambda(\pi_{\theta_k}) - \alpha  \Bigr]_+ &\le \sum_{k=0}^{K-1}(c_k + d_k) \cdot \cO(1/K) +   \sum_{k=0}^{K-1}(  \varepsilon_k + \varepsilon_k') \cdot  \cO(1/\sqrt{K}) + \cO(1/\sqrt{K}).
	\$            
 %\end{aligned}
%\end{equation}
where $c_k$ and $d_k$ are estimation errors defined in \eqref{eq:average error}. Here $\varepsilon_k= \tau_{k+1}^{-1}\epsilon_{k+1}\cdot\phi^*_{k}+(1+2MN)\cdot\beta_k^{-1}\epsilon_{k}'\cdot\psi^*_{k} +N\cdot\beta_k^{-1}\epsilon_{k}''\cdot\psi^*_{k}$ and $\varepsilon_k'=|\cA|\cdot \tau_{k+1}^{-2}\epsilon_{k+1}^2$, where 
\$
&\epsilon_{k+1} = \cO ( R_{\rm a}^2 T^{-1/2} + R_{\rm a}^{8/3} m_{\rm a}^{-1/6} H_{\rm a}^{7} \log m_{\rm a} ), \qquad 
\epsilon'_{k} = \cO ( R_{\rm c}^2 T^{-1/2} + R_{\rm c}^{8/3} m_{\rm c}^{-1/6} H_{\rm c}^{7} \log m_{\rm c} ), \\
&\epsilon''_{k} = \cO ( R_{\rm b}^2 T^{-1/2} + R_{\rm b}^{8/3} m_{\rm b}^{-1/6} H_{\rm b}^{7} \log m_{\rm b} ).
\$
Moreover, if we set the input parameters same as Corollary \ref{coro:main-bound}, it holds that, with probability at least $1 - 4\exp( - \Omega(R_{\rm a}^{2/3} m_{\rm a}^{2/3}H_a ))$,
	\$
	\rho(\pi^*) - \frac{1}{K}\sum_{k=0}^{K-1}  \rho(\pi_{\theta_k})  \le   \cO(1/\sqrt{K}), \qquad \Bigl[ \frac{1}{K}\sum_{k = 0}^{K - 1}\Lambda(\pi_{\theta_k}) - \alpha  \Bigr]_+ \le  \cO(1/\sqrt{K}). 
	\$
\end{theorem}

\begin{proof}[Proof of Theorem \ref{thm:main:slater}]
	Recall that $\cL(\pi, \lambda, y)$ takes the following form 
	\$
	\cL(\lambda,\pi,y) = (1 + 2\lambda y)\rho(\pi) - \lambda\eta(\pi) - \lambda y^2 + \lambda\alpha .
	\$
	With slight abuse of notation, we define 
	\# \label{eq:52001}
	\cL(\lambda, \pi) = \rho(\pi) - \lambda\bigl(\Lambda(\pi) - \alpha\bigr)
	\#
	Together with the fact that $y^* = \rho(\pi^*)$, for any $k \in [K]$, we have
	\# \label{eq:52002}
	\cL(\overline{\lambda}_k,\pi^*, y^*) = \rho(\pi^*) - \overline{\lambda}_k\bigl(\Lambda(\pi^*) - \alpha\bigr) = \cL(\overline{\lambda}_k, \pi^*).
	\#
	Moreover, for any $k \in [K]$, we have 
	\# \label{eq:52003}
	|\cL(\overline{\lambda}_k, \pi_{\theta_k}, \overline{y}_k) - \cL(\overline{\lambda}_k, \pi_{\theta_k})| &= |2\overline{\lambda}_k \bigl( \rho(\pi_{\theta_k}) - \overline{y}_k \bigr) - \overline{\lambda}_k \bigl( \rho(\pi_{\theta_k}) + \overline{y}_k \bigr) \bigl( \rho(\pi_{\theta_k}) - \overline{y}_k \bigr) | \notag\\
	& \le  2\overline{\lambda}_k | \rho(\pi_{\theta_k}) - \overline{y}_k | + \overline{\lambda}_k \bigl( \rho(\pi_{\theta_k}) + \overline{y}_k \bigr) | \rho(\pi_{\theta_k}) - \overline{y}_k |  \notag\\
	& \le 2N (1 + M) c_k,
	\#
	where the first inequality follows from triangle inequality and the last inequality uses the definition of $c_k$ in \eqref{eq:average error}.
	Plugging \eqref{eq:52002} and \eqref{eq:52003} into \eqref{eq:result-part2}, we obtain
	\#  \label{eq:52004}
	%& - \sum_{k=0}^{K-1} (c_k + d_k)\cdot \cO(1/K)  - \cO(1/\sqrt{K})\\
	& \frac{1}{K}\sum_{k=0}^{K-1} \bigl(\cL(\overline{\lambda}_k,\pi^*) - \cL(\overline{\lambda}_k, \pi_{\theta_k}) \bigr) \notag\\
    %&\qquad \leq \frac{1}{K}\sum_{k=0}^{K-1} \bigl(\cL(\lambda^*,\pi^*) - \cL(\overline{\lambda}_k, \pi_{\theta_k}) \bigr)\\
    &\qquad \leq \bigl( \sum_{k=0}^{K-1}c_k\bigr) \cdot \cO(N/K) +   \sum_{k=0}^{K-1}(  \varepsilon_k + \varepsilon_k') \cdot  \cO(1/\sqrt{K}) + \cO(1/\sqrt{K}).
	\#
	By the definition of $\cL(\lambda, \pi)$ in \eqref{eq:52001}, \eqref{eq:52004} yields that 
	\# \label{eq:52005}
	& \frac{1}{K}\sum_{k=0}^{K-1} \bigl(\rho(\pi^*) - \rho(\pi_{\theta_k})  \bigr) - \frac{1}{K}\sum_{k = 0}^{K - 1}\overline{\lambda}_{k}\bigl( \Lambda(\pi^*) - \Lambda(\pi_{\theta_k}) \bigr) \notag\\
	& \qquad \le \bigl( \sum_{k=0}^{K-1}c_k\bigr) \cdot \cO(1/K) +   \sum_{k=0}^{K-1}(  \varepsilon_k + \varepsilon_k') \cdot  \cO(1/\sqrt{K}) + \cO(1/\sqrt{K}).
	\#
	For any fixed $\lambda' \in [0, N]$, we have 
	\$
	0 &\le (\overline{\lambda}_K - \lambda')^2 \\
	&= \sum_{k = 0}^{K - 1} \bigl((\overline{\lambda}_{k + 1} - \lambda')^2 - (\overline{\lambda}_k - \lambda')^2\bigr) + (\overline{\lambda}_0 - \lambda')^2\\
	& = \sum_{k = 0}^{K - 1} \biggl(\Bigl(\Pi_{[0,N]}\bigl(\overline{\lambda}_k - \frac{1}{2\gamma_k}\bigl(\alpha + 2\overline{y}_k\overline{\rho}(\pi_{\theta_k}) - \overline{\eta}(\pi_{\theta_k}) -\overline{y}_k^2\bigr) \bigr) - \lambda' \Bigr)^2 - (\overline{\lambda}_k - \lambda')^2\biggr) + (\overline{\lambda}_0 - \lambda')^2\\
	& \le \sum_{k = 0}^{K - 1} \biggl(\Bigl(\overline{\lambda}_k - \lambda' - \frac{1}{2\gamma_k}\bigl(\alpha + 2\overline{y}_k\overline{\rho}(\pi_{\theta_k}) - \overline{\eta}(\pi_{\theta_k}) -\overline{y}_k^2\bigr) \Bigr)^2 - (\overline{\lambda}_k - \lambda')^2\biggr) + (\overline{\lambda}_0 - \lambda')^2, 
	\$
	where the second equality uses the definition of $\overline{\lambda}_{k + 1}$ in \eqref{eq:update-lambda-form} and the last inequality follows from the property of projection. Combining with the fact that $\overline{\lambda}_0, \lambda' \in [0, N]$, we further have 
	\# \label{eq:52006}
	0 &\le \sum_{k = 0}^{K - 1}\frac{\overline{\lambda}_k - \lambda'}{\gamma_k} \bigl(\overline{\eta}(\pi_{\theta_k}) + \overline{y}_k^2 - 2\overline{y}_k\overline{\rho}(\pi_{\theta_k}) - \alpha \bigr) \notag\\
	& \qquad + \sum_{k = 0}^{K - 1}\frac{1}{4\gamma_k^2}\bigl(\alpha + 2\overline{y}_k\overline{\rho}(\pi_{\theta_k}) - \overline{\eta}(\pi_{\theta_k}) -\overline{y}_k^2\bigr)^2 + N^2\notag\\
	& = \sum_{k = 0}^{K - 1}\frac{\overline{\lambda}_k - \lambda'}{\gamma_k} \bigl(\Lambda(\pi_{\theta_k}) - \alpha \bigr) + \sum_{k = 0}^{K - 1}\frac{\overline{\lambda}_k - \lambda'}{\gamma_k} \bigl(\overline{\eta}(\pi_{\theta_k}) - \overline{\rho}(\pi_{\theta_k})^2 - \Lambda(\pi_{\theta_k}) \bigr) \notag\\
	& \qquad + \sum_{k = 0}^{K - 1}\frac{1}{4\gamma_k^2}\bigl(\alpha + \overline{\rho}(\pi_{\theta_k})^2 - \overline{\eta}(\pi_{\theta_k})\bigr)^2 + N^2, 
	\#
	where the equality uses the fact that $\overline{y}_k = \overline{\rho}(\pi_{\theta_k})$. 
	Meanwhile, by the definitions of $c_k$ and $d_k$ in \eqref{eq:average error}, we further obtain 
	\# \label{eq:52007}
	|\overline{\eta}(\pi_{\theta_k}) - \overline{\rho}(\pi_{\theta_k})^2 - \Lambda(\pi_{\theta_k}) | &\le |\overline{\eta}(\pi_{\theta_k}) - \eta(\pi_{\theta_k})| + | \overline{\rho}(\pi_{\theta_k})^2 - \rho(\pi_{\theta_k})^2| \notag\\
	& = |\overline{\eta}(\pi_{\theta_k}) - \eta(\pi_{\theta_k})| + | \overline{\rho}(\pi_{\theta_k}) - \rho(\pi_{\theta_k})| \cdot | \overline{\rho}(\pi_{\theta_k}) + \rho(\pi_{\theta_k})| \notag\\
	& \le d_k + 2M c_k.
	\#
	Combining \eqref{eq:52006}, \eqref{eq:52007}, and the facts that $(\alpha + \overline{\rho}(\pi_{\theta_k})^2 - \overline{\eta}(\pi_{\theta_k}))^2 \le (\alpha + 2M^2)^2$ and $\gamma_k = \gamma \sqrt{K}$, we further obtain that 
	\# \label{eq:52008}
	\frac{1}{K}\sum_{k = 0}^{K - 1}(\overline{\lambda}_k - \lambda') \bigl(\alpha - \Lambda(\pi_{\theta_k}) \bigr) \le \sum_{k = 0}^{K - 1}(c_k + d_k) \cdot \cO(1/K) + \cO(1/\sqrt{K}).
	\#
	Adding \eqref{eq:52008} to \eqref{eq:52005}, together with the fact that $\Lambda(\pi^*) \le \alpha$, we have 
	\$
	& \frac{1}{K}\sum_{k=0}^{K-1} \bigl(\rho(\pi^*) - \rho(\pi_{\theta_k})  \bigr) + \frac{\lambda'}{K}\sum_{k = 0}^{K - 1}\bigl( \Lambda(\pi_{\theta_k}) - \alpha  \bigr)\\
	&\qquad \le  \sum_{k=0}^{K-1}(c_k + d_k) \cdot \cO(1/K) +   \sum_{k=0}^{K-1}(  \varepsilon_k + \varepsilon_k') \cdot  \cO(1/\sqrt{K}) + \cO(1/\sqrt{K}).
	\$
	We choose $\lambda' = N$ when $\sum_{k = 0}^{K - 1}( \Lambda(\pi_{\theta_k}) - \alpha  ) \ge 0$, otherwise we take $\lambda' = 0$. Thus, we obtain 
	\$
	&\rho(\pi^*) - \frac{1}{K}\sum_{k=0}^{K-1}  \rho(\pi_{\theta_k})  + N \cdot \Bigl[ \frac{1}{K}\sum_{k = 0}^{K - 1}\Lambda(\pi_{\theta_k}) - \alpha  \Bigr]_+ \\
	&\qquad \le  \sum_{k=0}^{K-1}(c_k + d_k) \cdot \cO(1/K) +   \sum_{k=0}^{K-1}(  \varepsilon_k + \varepsilon_k') \cdot  \cO(1/\sqrt{K}) + \cO(1/\sqrt{K}).
	\$
    Note that $N \ge 2\lambda^*$, together with Lemma \ref{lem:constraint:violation}, we have
	\$
	\rho(\pi^*) - \frac{1}{K}\sum_{k=0}^{K-1}  \rho(\pi_{\theta_k})  &\le  \sum_{k=0}^{K-1}(c_k + d_k) \cdot \cO(1/K) +   \sum_{k=0}^{K-1}(  \varepsilon_k + \varepsilon_k') \cdot  \cO(1/\sqrt{K}) + \cO(1/\sqrt{K}), \\
	 \Bigl[ \frac{1}{K}\sum_{k = 0}^{K - 1}\Lambda(\pi_{\theta_k}) - \alpha  \Bigr]_+ &\le \sum_{k=0}^{K-1}(c_k + d_k) \cdot \cO(1/K) + \sum_{k=0}^{K-1}(  \varepsilon_k + \varepsilon_k') \cdot  \cO(1/\sqrt{K}) + \cO(1/\sqrt{K}).
	\$
	Therefore, we conclude the proof of Theorem \ref{thm:main:slater}.
\end{proof}

\section{Risk-Sensitive RL with Linear Function Approximation} \label{sec:linear}
In this section, we consider the setting where we approximate the $Q$-function in \eqref{eq:def:q}, the $W$-function in \eqref{eq:def:w}, and the energy function $f$ (corresponding to the energy-based policy $\pi \propto \exp(\tau^{-1} f)$) by linear functions, which are computationally more efficient than neural networks, and derive the theoretical results under our proposed algorithmic framework.
Specifically, we assume that 
$Q_q(s, a) = q^\top \varphi(s, a)$, $W_\omega(s, a) = \omega^\top \varphi(s, a)$, $f_\theta(s, a) = \theta^\top \varphi(s, a)$. Here $\varphi : \cS \times \cA \rightarrow \RR^d$ is a $d$-dimensional feature map. Without loss of generality, we further assume that $\|\varphi(s, a)\|_2 \le 1$ for any $(s, a) \in \cS \times \cA$.
\subsection{Algorithm}
In this subsection, we present VARAC with linear function approximation. In the sequel, we describe the actor and critic update rules at each iteration.
\vskip 4pt
\noindent{\bf Actor Update: (i) $\lambda$-Update Step.}
Similar to \eqref{eq:update-lambda-form}, we update $\lambda$ by 
\#
\overline{\lambda}_{k+1} =  \Pi_{[0,N]}\Bigl(\overline{\lambda}_k - \frac{1}{2\gamma_k}\bigl(\alpha + 2\overline{y}_k\overline{\rho}(\pi_{\theta_k}) - \overline{\eta}(\pi_{\theta_k}) -\overline{y}_k^2\bigr) \Bigr),
\#
where $\gamma_k>0$ is some prespecified stepsize.  

\vskip 4pt
\noindent{\bf (ii) $\pi$-Update Step.}
Under the linear function approximation setting, by Proposition \ref{prop:exp-policy}, the solution of \eqref{eq:dnn2222} admits a closed-form solution that
\# \label{eq:update:theta2}
\theta_{k + 1} = \tau_{k + 1} \cdot \big(\beta_k^{-1}(1 + 2\overline{\lambda}_{k}\overline{y}_{k})q_k - \beta_k^{-1}\overline{\lambda}_{k}\omega_{k} + \tau_k^{-1} {\theta_k}\big).
\#
\vskip 4pt
\noindent{\bf (iii) $y$-Update Step.}
Similar to \eqref{eq:update-y-form}, we update $y$ by 
\#
%\overline{y}_{k+1} = \Pi_{[0,M]}\bigl(\overline{\rho}(\pi_{\theta_k})\bigr) ,
\overline{y}_{k+1} = \overline{\rho}(\pi_{\theta_{k+1}}).
\#

\vskip 4pt
\noindent{\bf Critic Update: (i) $q$-Update Step.}
We  solve the least-squares problem in~\eqref{eq:mspbe1}, which can be solved by TD learning. Specifically, given an initial radii $R$, we use the iterative TD-update that at the $t$-th iteration,  we let
\begin{equation}\label{eq:td-update3}
\begin{aligned}
q{(t+1)} \leftarrow \Pi_{\cB{(0,R)}}\Bigl(&q{(t)} - \delta\cdot \bigl(Q_{q{(t)}}(s,a) - r(s, a) \\
&+ \overline{\rho}(\pi_{\theta_{k}}) - Q_{q{(t)}}(s', a')\bigr)\cdot\varphi(s,a)\Bigr),
\end{aligned}
\end{equation}
{\noindent where $(s, a) \sim \sigma_k$, $s'\sim\cP(\cdot\,|\,s,a)$, $a' \sim \pi_{\theta_k}(\cdot\,|\,s')$, and $\delta$ is the stepsize. See Algorithm \ref{alg:td3} in Appendix \ref{appendix:alg:linear} for a pseudocode.}

\vskip 4pt
\noindent{\bf (ii) $\omega$-Update Step.} Similar to the $q$-update step, we update $\omega$ by 
\begin{equation}\label{eq:td-update4}
	\begin{aligned}
	\omega{(t+1)} \leftarrow \Pi_{\cB{(0,R)}}\Bigl(& \omega{(t)} - \delta\cdot \bigl(W_{\omega(t)}(s,a) - r(s, a)^2 \\
	& + \overline{\eta}(\pi_{\theta_{k}}) - W_{\omega{(t)}}(s', a')\bigr)\cdot \varphi(s,a)\Bigr),
	\end{aligned}
	\end{equation}
	{\noindent where $(s, a) \sim \sigma_k$, $s'\sim\cP(\cdot\,|\,s,a)$, $a' \sim \pi_{\theta_k}(\cdot\,|\,s')$, and $\delta$ is the stepsize. See Algorithm \ref{alg:td4} in Appendix \ref{appendix:alg:linear} for a pseudocode.}

Putting the above update rules together, we obtain the VARAC with linear function approximation. The pseudocode is summarized in  Algorithm \ref{alg:risac2} in Appendix \ref{appendix:alg:linear}.

\subsection{Theoretical Results}

In this subsection, we provide theoretical guarantees for VARAC with linear function approximation. First, we impose the following assumption, which parallels to  Assumption~\ref{assumption:closed-q-w} for the DNN setting.

\begin{assumption} \label{assumption:closed:q:w2}
	%For any $\omega, \theta \in \cB(0, R)$, it holds that 
	%\$
	%\inf_{\omega' \in \cB(0, R)}\EE_{\sigma_{\pi_\theta}} \bigl[ \bigl( (\cT^{\pi_\theta}Q_{\omega} - w' \varphi)(s, a) \bigr)^2 \bigr] = 0,
	%\$
	%where $\cT^{\pi_\theta}$ is defined in \eqref{eq:bellman1}.
	For any $R > 0$, $Q_q \in \{q^\top\varphi : q \in \cB(0, R)\}$, $W_\omega \in \{\omega^\top\varphi : \omega \in \cB(0, R)\}$, and policy $\pi$, we have $\cT^\pi Q_q \in \{q^\top\varphi : q \in \cB(0, R)\}$ and $\hat{\cT}^\pi W_\omega \in \{\omega^\top\varphi : \omega \in \cB(0, R)\}$.
\end{assumption}

\iffalse
\begin{assumption}[Well-Conditioned Feature]
	The minimum singular value of the matrix $\EE_{\sigma_k}[\varphi(s, a)\varphi(s, a)^\top]$
	is uniformly lower bounded by a positive absolute constant $\sigma^*$ for any $k \ge 1$.
\end{assumption}
\fi 

In the theoretical analysis of VARAC with DNN,  we characterize the estimation and computational errors, respectively. Here we only need to bound the computational errors since the estimation errors can be bounded by similar arguments as in Section \ref{sec:estimate-error}. As stated in Section \ref{sec:ac-error}, the computational errors are incurred by: (i) the SGD update (Lemma~\ref{thm:ac-error}), when we update policy $\pi$, and (ii) the TD update (Lemmas~\ref{thm:td} and \ref{thm:td2}), when we evaluate Q-function and W-function. Here in the framework of linear approximation, instead of SGD updates, we update policy $\pi$ by a closed-form solution in~\eqref{eq:update:theta2}.
Hence, we only need to characterize the TD errors, which is achieved in the following two lemmas.

\begin{lemma}[$q$-Update Error] \label{lemma:q:error:linear}
	Suppose that Assumptions \ref{assumption:contraction} and \ref{assumption:closed:q:w2} hold. Let $\delta = T^{-1/2}$. Then, at the $k$-th iteration of Algorithm \ref{alg:risac2}, the output $Q_{\overline{q}}$ of Algorithm \ref{alg:td3} satisfies
	\$
	\EE\bigl[ \bigl(Q_{\overline{q}}(s,a) - Q^{\pi_{\theta_k}}(s,a)\bigr)^2 \bigr] \le \cO(R^2T^{-1/2}),
	\$
	where the expectation is taken over $\overline q$ and $(s,a)\sim \sigma_{\pi_{\theta_k}}$, and $T$ is the iteration counter. 
\end{lemma}

\begin{proof}
	See Appendix \ref{appendix:td:linear} for a detailed proof.
\end{proof}

\begin{lemma}[$\omega$-Update Error] \label{lemma:w:error:linear}
	Suppose that Assumptions \ref{assumption:contraction} and \ref{assumption:closed:q:w2} hold. Let $\delta = T^{-1/2}$. Then, at the $k$-th iteration of Algorithm \ref{alg:risac2}, the output $W_{\overline{\omega}}$ of Algorithm \ref{alg:td4} satisfies
    \$
    \EE \bigl[ \bigl(W_{\overline{\omega}}(s,a) - W^{\pi_{\theta_k}}(s,a)\bigr)^2 \bigr] \le \cO(R^2T^{-1/2}),
    \$
    where the expectation is taken over $\overline \omega$ and $(s,a)\sim \sigma_{\pi_{\theta_k}}$, and $T$ is the iteration counter. 
\end{lemma}

\begin{proof}
	The proof is similar to the proof of Lemma \ref{lemma:q:error:linear}, and we omit it to avoid repetition.
\end{proof}

%Similar to the Lemmas \ref{thm:td} and \ref{thm:td2}, we characterize TD errors in the following two lemmas.

Then, following the arguments in Section \ref{sec:error}, we analyze the  errors. Specifically, under the same notations in Lemmas~\ref{lem:error-kl} and~\ref{lem:stepwise-energy}, we have 
\$
\varepsilon_k= (1+2MN)\cdot\beta_k^{-1}\epsilon_{k}'\cdot\psi^*_{k} +N\cdot\beta_k^{-1}\epsilon_{k}''\cdot\psi^*_{k}, \quad \epsilon'_k = \epsilon''_k = \cO(R^2T^{-1/2}), \quad \varepsilon'_k = 0.
\$
The derivation is the same as that of Lemmas~\ref{lem:error-kl} and~\ref{lem:stepwise-energy}, and thus we omit the details  for simplicity.
Plugging these errors into Theorem \ref{thm:main:slater}, we have the following theorem. 
\begin{theorem}[Constrained Violation] \label{thm:main:linear}
	Suppose that Assumptions~\ref{assumption:unique-solution}, \ref{assumption:contraction}, \ref{assumption:slater}, and \ref{assumption:closed:q:w2} hold. Let $N = 2M/\xi$ in~\eqref{eq:update-lambda-form}. For the sequences $\{\overline{\lambda}_k\}^{K}_{k=1}$, $\{\pi_{\theta_k}\}^{K}_{k = 1}$ and $\{\overline{y}_k\}^{K}_{k=1}$ generated by the VARAC algorithm (Alg.~\ref{alg:risac}), we have  
%\begin{equation}\label{eqn:m2}
%\begin{aligned}
	\$
	\rho(\pi^*) - \frac{1}{K}\sum_{k=0}^{K-1}  \rho(\pi_{\theta_k})  &\le  \sum_{k=0}^{K-1}(c_k + d_k) \cdot \cO(1/K) +   \sum_{k=0}^{K-1} \varepsilon_k  \cdot  \cO(1/\sqrt{K}) + \cO(1/\sqrt{K}), \\
	 \Bigl[ \frac{1}{K}\sum_{k = 0}^{K - 1}\Lambda(\pi_{\theta_k}) - \alpha  \Bigr]_+ &\le \sum_{k=0}^{K-1}(c_k + d_k) \cdot \cO(1/K) +   \sum_{k=0}^{K-1}  \varepsilon_k  \cdot  \cO(1/\sqrt{K}) + \cO(1/\sqrt{K}).
	\$            
 %\end{aligned}
%\end{equation}
where $c_k$ and $d_k$ are estimation errors defined in \eqref{eq:average error}. Here $\varepsilon_k= (1+2MN)\cdot\beta_k^{-1}\epsilon_{k}'\cdot\psi^*_{k} +N\cdot\beta_k^{-1}\epsilon_{k}''\cdot\psi^*_{k}$, where 
$\epsilon'_{k} = R^2T^{-1/2}$ and $\epsilon''_{k} = R^2T^{-1/2}.$
\end{theorem}

\begin{proof}
	The proof is same as the proof of Theorem \ref{thm:main:slater}, and we omit it to avoid repetition.
\end{proof}

By Theorem \ref{thm:main:linear}, we have that, under the linear function approximation setting, VARAC (Algorithm~\ref{alg:risac2}) also achieves the $\cO(1/\sqrt{K})$ convergence rate/constraint violation.

\iffalse
\section{Special Example: Two-Layer Neural Network}

In this section, we specialize

\vskip 4pt
\noindent{\bf Two-Layer Neural Network.}
We introduce the two-layer neural network \citep{cai2019neural,liu2019neural,wang2020neural}, which is a special case of DNN defined in Section~\ref{sec:2}. Specifically, for a two-layer neural network $f(x, W, b)$ with width $m$ and input $x$, its output takes the form of 
\$
f(x, W, b) = \frac{1}{\sqrt{m}}\sum_{i = 1}^m b_i \cdot \sigma(x^\top [W]_i),
\$
where $\sigma(\cdot) = \max\{0, \cdot\}$ is the ReLU activation function, $b \in \{-1, 1\}^m$ is the output layer, and $W = ([W]_1^\top, [W]_2^\top, \cdots, [W]_m^\top) \in \RR^{md}$ are input weights. Moreover, we consider the initialization 
\$
[W]_i \overset{\rm i.i.d.}{\sim} \mathcal N(0,1), \quad  [b]_i \overset{\rm i.i.d.}{\sim} \text{Unif}(\{-1, 1\}),  \quad \text{for all } i \in [m].
\$

\$
\cB(W^0, R) = \{W : \|W - W^0\|_2 \le R \}
\$
\fi

% !TEX root = main.tex

\section{Experiment} \label{sec:exp}
To evaluate the efficacy of our newly proposed VARAC algorithm, we conducted experiments using two publicly available mechanical control environments: {\sf Pendulum-v0} and {\sf BipedalWalkerHardcore-v3} from OpenAI gym \citep{1606.01540}. 
Various reinforcement learning algorithms are extensively employed in diverse automation control scenarios to instruct machines in executing different tasks \citep{chen2022system,qiu2022safe}. Ensuring control stability, which means maintaining stable algorithm performance even when minor environmental variations occur, is crucial for the practical usefulness of the algorithm, which is exactly what the proposed VARAC algorithm aims to achieve.

\subsection{Experiment Setting}
We let the classical TD3 algorithm \citep{fujimoto2018addressing}, known for its effectiveness and robustness in continuous control environments, be the baseline algorithm. We run the TD3 and VARAC algorithms for $2 \times 10^5$ steps on the {\sf Pendulum-v0} environment and $3 \times 10^6$ steps on {\sf BipedalWalkerHardcore-v3}, each with ten different random seeds. The learned policies are evaluated based on 40 episodes, and we revord the average performance at each checkpoint. We select the best policy from each run to compute the risk-sensitive metric to ensure fair comparisons. Details of the other hyperparameters are given in Section~\ref{app:hyper} in Appendix.

\subsection{Implementation of VARAC}

The updates of $\lambda$ and $y$ follow \eqref{eq:update-lambda-form} and \eqref{eq:update-y-form}, respectively. 
Regarding the policy update stage of VARAC, we approximate the solution to \eqref{eq:dnn2222} by
\begin{align*}
L(\theta_{k+1}) =  \EE_{\nu_{\pi_{\theta_k}}} \bigl[ & \bigl\la (1+ 2\overline{\lambda}_{k}\overline{y}_{k})Q_{q_k}(s, \cdot) - \overline{\lambda}_{k}W_{\omega_{k}}(s, \cdot), \pi_{\theta_{k+1}}(\cdot\,|\,s)\bigr\ra \\ 
& - \beta_k \cdot {\rm KL}\bigl(\pi_{\theta_{k+1}}(\cdot\,|\,s)\,\|\, \pi_{\theta_{k}}(\cdot\,|\,s)\bigr) \bigr]
\\
\approx\EE_{\nu_{\pi_{\theta_k}}} \bigl[ & \bigl\la \tilde{Q}_{\mu_k}(s, \cdot), \pi_{\theta_{k+1}}(\cdot\,|\,s)\bigr\ra  - \beta_k \cdot {\rm KL}\bigl(\pi_{\theta_{k+1}}(\cdot\,|\,s)\,\|\, \pi_{\theta_{k}}(\cdot\,|\,s)\bigr) \bigr],
\end{align*}
where $\tilde{Q}_{\mu_k}(s,a)$ is the function approximation of $\sum_{t = 0}^\infty\EE[\tilde r(s_t,a_t)-\tilde \rho(\pi)\mid s_t=s,a_t = a]$, $\tilde r(s,a)= (1 + 2\overline{\lambda}_{k}\overline{y}_{k})r(s,a)- \overline{\lambda}_{k} r^2(s,a)$, and  $\tilde \rho(s,a)= (1+2\overline{\lambda}_{k}\overline{y}_{k})\rho(s,a) - \overline{\lambda}_{k} \eta(s,a)$. This suggests that we only need to solve a new MDP problem by replacing the original $r$ by $\tilde{r} = (1 + 2\overline{\lambda}_{k}\overline{y}_{k})r- \overline{\lambda}_{k} r^2$. We solve this new MDP problem by TD3 for a fair comparison.

%Here we incorporate TD3 as the policy optimization framework in the practical implementation of VARAC for consistency. 

\subsection{Empirical Performance}

We depict the reward of TD3 and VARAC under two environments ({\sf Pendulum-v0} and {\sf BipedalWalkerHardcore-v3}) in Figure~\ref{fig:main_fig}. Additionally, we report the mean and variance of TD3 and VARAC in Table~\ref{tab:eval}. From the figure, we can observe that VARAC exhibits a slower convergence  compared with TD3, but ultimately reaches a similar level of performance. The table shows that VARAC achieves slightly lower mean performance, but significantly reduces the variance. This demonstrates the empirical power of VARAC in the risk-sensitive setting.

\begin{figure}[t]
    % \vspace{-18pt}
    % \setlength{\abovecaptionskip}{0cm}
    % \setlength{\belowcaptionskip}{0.1cm}
    \centering
    \mbox{
            \hspace{-0.10in}
		\subfigure[]{
			\label{fig:pendulum}
			\includegraphics[width=0.5\textwidth]{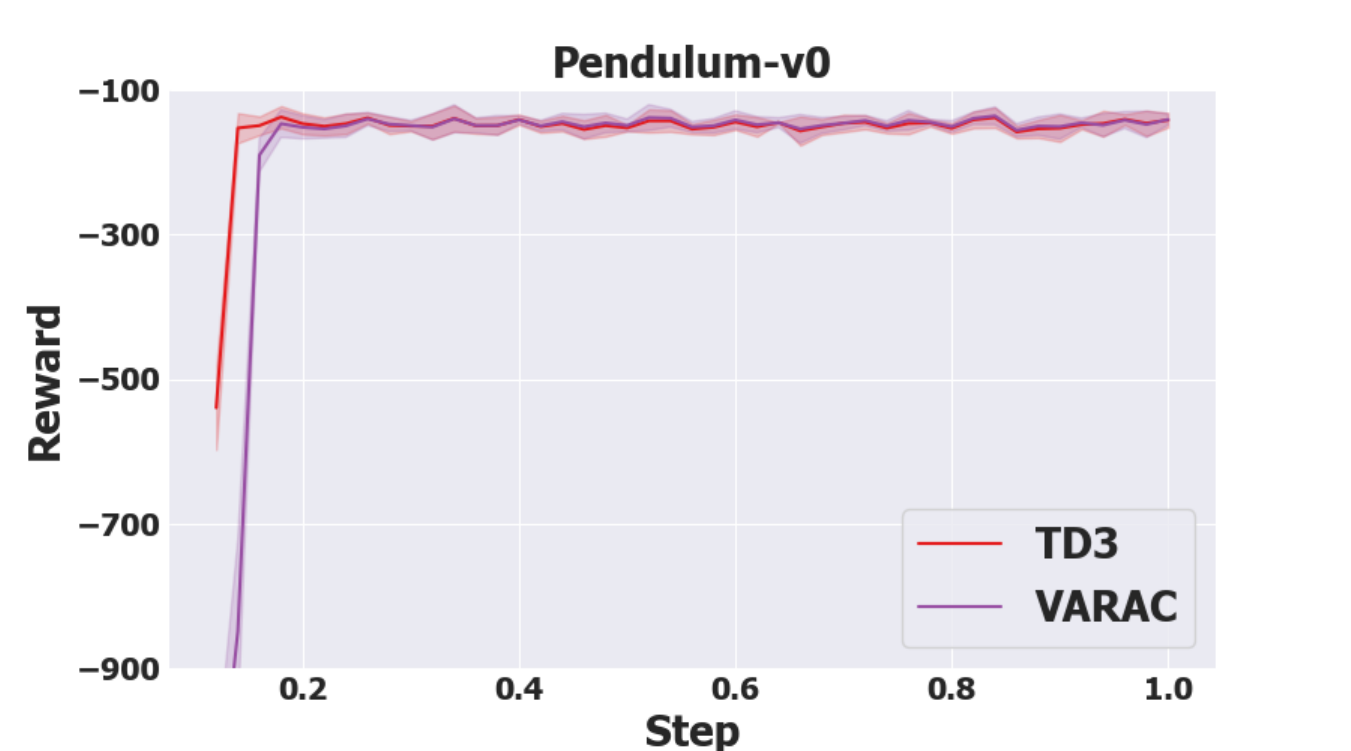}
		}
		\hspace{-0.15in}
		\subfigure[]{
			\label{fig:bipedal}
			\includegraphics[width=0.5\textwidth]{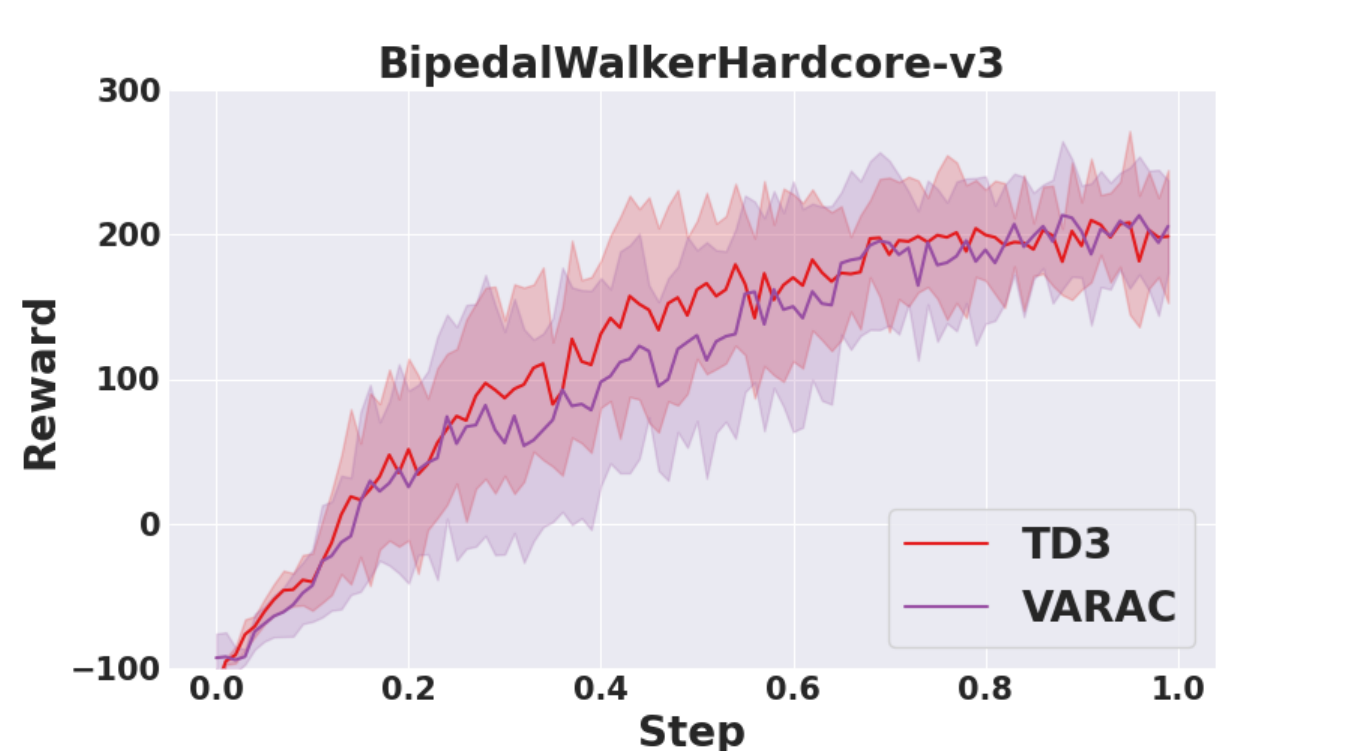}
		}
	}
    \caption{We present the training progress of TD3 and VARAC algorithms in the figure. In the graph, the $y$-axis represents the average reward value obtained after evaluating each checkpoint, while the $x$-axis represents the ratio of training steps to the total number of steps, indicating the extent of training progress. The curves are averaged over ten independent runs with shaded regions indicating standard deviations.}
    \label{fig:main_fig}
\end{figure}

\begin{table}[t] \small 
	\centering
	\resizebox{0.7\textwidth}{!}{
		\begin{tabular}{lcccc}
			\hline
                \multirow{2}{*}{Algorithm} & \multicolumn{2}{c}{\textsf{Pendulum-v0}} & \multicolumn{2}{c}{\textsf{BipedalWalkerHardcore-v3}} \\
                
			   & Mean & Variance & Mean & Variance \\
                \hline
                TD3 & -122 & 6903 & 234 & 9348 \\
                VARAC & -126 & \textbf{4826} &  221 &  \textbf{6090} \\
			\hline
	\end{tabular}}
        \caption{The mean and variance of the policy learned by TD3 and VARAC algorithms under two gym environments.} \label{tab:eval}
	
\end{table}

%%%%%%%%%%%%%%%%%%%%%%%%
\section{Conclusion} \label{sec:con}
To conclude, to the best of our knowledge, we make the first attempt to study risk-sensitive deep reinforcement learning, where we consider the variance constrained deep reinforcement learning. We propose an efficient and theoretically sound VARAC algorithm to solve the problem. Under mild assumptions, despite the overparametrization and nonconvexity, we show that our algorithm achieves an $\cO(1/\sqrt{K})$ convergence rate to a saddle point, and that our solution converges to a globally optimal solution at a same rate.
For future work, we plan to extend the risk constraints to other coherent risk measures such as the conditional value at risk. 

%{\textcolor{red}{Under mild assumptions, despite the overparametrization and nonconvexity, both convergence to a saddle point and diminishing of the duality gap have reached a rate of $\cO(1/\sqrt{K})$ in our algorithm.}
%%%%%%%%%%%%%%
% !TEX root = main.tex

%%%%%%%%%%%%%

%\newpage

\bibliographystyle{ims}
\bibliography{graphbib}

\begin{thebibliography}{76}
\expandafter\ifx\csname natexlab\endcsname\relax\def\natexlab#1{#1}\fi
\expandafter\ifx\csname url\endcsname\relax
  \def\url#1{\texttt{#1}}\fi
\expandafter\ifx\csname urlprefix\endcsname\relax\def\urlprefix{}\fi

\bibitem[{Agarwal et~al.(2020)Agarwal, Kakade, Lee and
  Mahajan}]{agarwal2020optimality}
\text{Agarwal, A.}, \text{Kakade, S.~M.}, \text{Lee, J.~D.} and \text{Mahajan,
  G.} (2020).
\newblock Optimality and approximation with policy gradient methods in {M}arkov
  decision processes.
\newblock In \textit{Conference on Learning Theory}.

\bibitem[{Allen-Zhu et~al.(2018)Allen-Zhu, Li and Song}]{allen2018convergence}
\text{Allen-Zhu, Z.}, \text{Li, Y.} and \text{Song, Z.} (2018).
\newblock A convergence theory for deep learning via over-parameterization.
\newblock \textit{arXiv preprint arXiv:1811.03962}.

\bibitem[{Altman(1999)}]{altman1999constrained}
\text{Altman, E.} (1999).
\newblock \textit{Constrained {M}arkov Decision Processes}, vol.~7.
\newblock CRC Press.

\bibitem[{Antos et~al.(2008)Antos, Szepesv{\'a}ri and Munos}]{antos2008fitted}
\text{Antos, A.}, \text{Szepesv{\'a}ri, C.} and \text{Munos, R.} (2008).
\newblock Fitted {Q}-iteration in continuous action-space {MDP}s.
\newblock In \textit{Advances in Neural Information Processing Systems}.

\bibitem[{Beck and Teboulle(2003)}]{beck2003mirror}
\text{Beck, A.} and \text{Teboulle, M.} (2003).
\newblock Mirror descent and nonlinear projected subgradient methods for convex
  optimization.
\newblock \textit{Operations Research Letters}, \textbf{31} 167--175.

\bibitem[{Bertsekas(2014)}]{bertsekas2014constrained}
\text{Bertsekas, D.~P.} (2014).
\newblock \textit{Constrained optimization and Lagrange multiplier methods}.
\newblock Academic press.

\bibitem[{Bertsekas and Tsitsiklis(1996)}]{bertsekas1996neuro}
\text{Bertsekas, D.~P.} and \text{Tsitsiklis, J.~N.} (1996).
\newblock \textit{Neuro-Dynamic Programming}.
\newblock Athena Scientific.

\bibitem[{Borkar(2009)}]{borkar2009stochastic}
\text{Borkar, V.~S.} (2009).
\newblock \textit{Stochastic Approximation: A Dynamical Systems Viewpoint},
  vol.~48.
\newblock Springer.

\bibitem[{Brockman et~al.(2016)Brockman, Cheung, Pettersson, Schneider,
  Schulman, Tang and Zaremba}]{1606.01540}
\text{Brockman, G.}, \text{Cheung, V.}, \text{Pettersson, L.}, \text{Schneider,
  J.}, \text{Schulman, J.}, \text{Tang, J.} and \text{Zaremba, W.} (2016).
\newblock Openai gym.

\bibitem[{Browne(1995)}]{browne1995optimal}
\text{Browne, S.} (1995).
\newblock Optimal investment policies for a firm with a random risk process:
  Exponential utility and minimizing the probability of ruin.
\newblock \textit{Mathematics of Operations Research}, \textbf{20} 937--958.

\bibitem[{Cen et~al.(2020)Cen, Cheng, Chen, Wei and Chi}]{cen2020fast}
\text{Cen, S.}, \text{Cheng, C.}, \text{Chen, Y.}, \text{Wei, Y.} and
  \text{Chi, Y.} (2020).
\newblock Fast global convergence of natural policy gradient methods with
  entropy regularization.
\newblock \textit{arXiv preprint arXiv:2007.06558}.

\bibitem[{Chen et~al.(2022)Chen, Xu and Agrawal}]{chen2022system}
\text{Chen, T.}, \text{Xu, J.} and \text{Agrawal, P.} (2022).
\newblock A system for general in-hand object re-orientation.
\newblock In \textit{Conference on Robot Learning}. PMLR.

\bibitem[{Chen et~al.(2021)Chen, Dong and Wang}]{chen2021primal}
\text{Chen, Y.}, \text{Dong, J.} and \text{Wang, Z.} (2021).
\newblock A primal-dual approach to constrained markov decision processes.
\newblock \textit{arXiv preprint arXiv:2101.10895}.

\bibitem[{Clements et~al.(2019)Clements, Robaglia, Van~Delft, Slaoui and
  Toth}]{clements2019estimating}
\text{Clements, W.~R.}, \text{Robaglia, B.-M.}, \text{Van~Delft, B.},
  \text{Slaoui, R.~B.} and \text{Toth, S.} (2019).
\newblock Estimating risk and uncertainty in deep reinforcement learning.
\newblock \textit{arXiv preprint arXiv:1905.09638}.

\bibitem[{Clifton and Laber(2020)}]{clifton2020q}
\text{Clifton, J.} and \text{Laber, E.} (2020).
\newblock Q-learning: Theory and applications.
\newblock \textit{Annual Review of Statistics and its Application}, \textbf{7}
  279--301.

\bibitem[{Dentcheva and Ruszczy{\'n}ski(2019)}]{dentcheva2019risk}
\text{Dentcheva, D.} and \text{Ruszczy{\'n}ski, A.} (2019).
\newblock Risk forms: representation, disintegration, and application to
  partially observable two-stage systems.
\newblock \textit{Mathematical Programming} 1--21.

\bibitem[{Ding et~al.(2021)Ding, Wei, Yang, Wang and
  Jovanovic}]{ding2021provably}
\text{Ding, D.}, \text{Wei, X.}, \text{Yang, Z.}, \text{Wang, Z.} and
  \text{Jovanovic, M.} (2021).
\newblock Provably efficient safe exploration via primal-dual policy
  optimization.
\newblock In \textit{International Conference on Artificial Intelligence and
  Statistics}. PMLR.

\bibitem[{Ding et~al.(2020)Ding, Zhang, Basar and Jovanovic}]{ding2020natural}
\text{Ding, D.}, \text{Zhang, K.}, \text{Basar, T.} and \text{Jovanovic, M.~R.}
  (2020).
\newblock Natural policy gradient primal-dual method for constrained markov
  decision processes.
\newblock In \textit{NeurIPS}.

\bibitem[{Efroni et~al.(2020)Efroni, Mannor and
  Pirotta}]{efroni2020exploration}
\text{Efroni, Y.}, \text{Mannor, S.} and \text{Pirotta, M.} (2020).
\newblock Exploration-exploitation in constrained mdps.
\newblock \textit{arXiv preprint arXiv:2003.02189}.

\bibitem[{Filar et~al.(1989)Filar, Kallenberg and Lee}]{filar1989variance}
\text{Filar, J.~A.}, \text{Kallenberg, L.~C.} and \text{Lee, H.-M.} (1989).
\newblock Variance-penalized {M}arkov decision processes.
\newblock \textit{Mathematics of Operations Research}, \textbf{14} 147--161.

\bibitem[{Fu et~al.(2020)Fu, Yang and Wang}]{fu2020single}
\text{Fu, Z.}, \text{Yang, Z.} and \text{Wang, Z.} (2020).
\newblock Single-timescale actor-critic provably finds globally optimal policy.
\newblock \textit{arXiv preprint arXiv:2008.00483}.

\bibitem[{Fujimoto et~al.(2018)Fujimoto, Hoof and
  Meger}]{fujimoto2018addressing}
\text{Fujimoto, S.}, \text{Hoof, H.} and \text{Meger, D.} (2018).
\newblock Addressing function approximation error in actor-critic methods.
\newblock In \textit{International conference on machine learning}. PMLR.

\bibitem[{Gao et~al.(2019)Gao, Cai, Li, Wang, Hsieh and
  Lee}]{gao2019convergence}
\text{Gao, R.}, \text{Cai, T.}, \text{Li, H.}, \text{Wang, L.}, \text{Hsieh,
  C.-J.} and \text{Lee, J.~D.} (2019).
\newblock Convergence of adversarial training in overparametrized networks.
\newblock \textit{arXiv preprint arXiv:1906.07916}.

\bibitem[{Gu et~al.(2017)Gu, Holly, Lillicrap and Levine}]{gu2017deep}
\text{Gu, S.}, \text{Holly, E.}, \text{Lillicrap, T.} and \text{Levine, S.}
  (2017).
\newblock Deep reinforcement learning for robotic manipulation with
  asynchronous off-policy updates.
\newblock In \textit{IEEE International Conference on Robotics and Automation}.

\bibitem[{Haarnoja et~al.(2017)Haarnoja, Tang, Abbeel and
  Levine}]{haarnoja2017reinforcement}
\text{Haarnoja, T.}, \text{Tang, H.}, \text{Abbeel, P.} and \text{Levine, S.}
  (2017).
\newblock Reinforcement learning with deep energy-based policies.
\newblock In \textit{International Conference on Machine Learning}.

\bibitem[{Jacot et~al.(2018)Jacot, Gabriel and Hongler}]{jacot2018neural}
\text{Jacot, A.}, \text{Gabriel, F.} and \text{Hongler, C.} (2018).
\newblock Neural tangent kernel: Convergence and generalization in neural
  networks.
\newblock In \textit{Advances in Neural Information Processing Systems}.

\bibitem[{Jiang and Liang(2017)}]{jiang2017cryptocurrency}
\text{Jiang, Z.} and \text{Liang, J.} (2017).
\newblock Cryptocurrency portfolio management with deep reinforcement learning.
\newblock In \textit{Intelligent Systems Conference}.

\bibitem[{Kober et~al.(2013)Kober, Bagnell and Peters}]{kober2013reinforcement}
\text{Kober, J.}, \text{Bagnell, J.~A.} and \text{Peters, J.} (2013).
\newblock Reinforcement learning in robotics: A survey.
\newblock \textit{The International Journal of Robotics Research}, \textbf{32}
  1238--1274.

\bibitem[{Kohlbacher and Rabe(2015)}]{kohlbacher2015leading}
\text{Kohlbacher, F.} and \text{Rabe, B.} (2015).
\newblock Leading the way into the future: the development of a (lead) market
  for care robotics in japan.
\newblock \textit{International Journal of Technology, Policy and Management},
  \textbf{15} 21--44.

\bibitem[{Konda and Tsitsiklis(2000)}]{konda2000actor}
\text{Konda, V.~R.} and \text{Tsitsiklis, J.~N.} (2000).
\newblock Actor-critic algorithms.
\newblock In \textit{Advances in Neural Information Processing Systems}.

\bibitem[{Kose and Ruszczy{\'n}ski(2020)}]{kose2020risk}
\text{Kose, U.} and \text{Ruszczy{\'n}ski, A.} (2020).
\newblock Risk-averse learning by temporal difference methods.
\newblock \textit{arXiv preprint arXiv:2003.00780}.

\bibitem[{Liu et~al.(2019)Liu, Cai, Yang and Wang}]{liu2019neural}
\text{Liu, B.}, \text{Cai, Q.}, \text{Yang, Z.} and \text{Wang, Z.} (2019).
\newblock Neural proximal/trust region policy optimization attains globally
  optimal policy.
\newblock In \textit{Advances in Neural Information Processing Systems}.

\bibitem[{Mannor and Tsitsiklis(2011)}]{mannor2011mean}
\text{Mannor, S.} and \text{Tsitsiklis, J.~N.} (2011).
\newblock Mean-variance optimization in {M}arkov decision processes.
\newblock In \textit{International Conference on Machine Learning}.

\bibitem[{Markowitz(1952)}]{markowitz1952portfolio}
\text{Markowitz, H.~M.} (1952).
\newblock Portfolio selection.
\newblock \textit{Journal of Finance}, \textbf{7} 77--91.

\bibitem[{Mei et~al.(2020)Mei, Xiao, Szepesvari and Schuurmans}]{mei2020global}
\text{Mei, J.}, \text{Xiao, C.}, \text{Szepesvari, C.} and \text{Schuurmans,
  D.} (2020).
\newblock On the global convergence rates of softmax policy gradient methods.
\newblock \textit{arXiv preprint arXiv:2005.06392}.

\bibitem[{Menictas et~al.(2019)Menictas, Rabbi, Klasnja and
  Murphy}]{menictas2019artificial}
\text{Menictas, M.}, \text{Rabbi, M.}, \text{Klasnja, P.} and \text{Murphy, S.}
  (2019).
\newblock Artificial intelligence decision-making in mobile health.
\newblock \textit{The Biochemist}, \textbf{41} 20--24.

\bibitem[{Mertikopoulos and Sandholm(2016)}]{mertikopoulos2016learning}
\text{Mertikopoulos, P.} and \text{Sandholm, W.~H.} (2016).
\newblock Learning in games via reinforcement and regularization.
\newblock \textit{Mathematics of Operations Research}, \textbf{41} 1297--1324.

\bibitem[{Moody et~al.(1998)Moody, Wu, Liao and Saffell}]{moody1998performance}
\text{Moody, J.}, \text{Wu, L.}, \text{Liao, Y.} and \text{Saffell, M.} (1998).
\newblock Performance functions and reinforcement learning for trading systems
  and portfolios.
\newblock \textit{Journal of Forecasting}, \textbf{17} 441--470.

\bibitem[{Munos and Szepesv{\'a}ri(2008)}]{munos2008finite}
\text{Munos, R.} and \text{Szepesv{\'a}ri, C.} (2008).
\newblock Finite-time bounds for fitted value iteration.
\newblock \textit{Journal of Machine Learning Research}, \textbf{9} 815--857.

\bibitem[{Narasimhan et~al.(2015)Narasimhan, Kulkarni and
  Barzilay}]{narasimhan2015language}
\text{Narasimhan, K.}, \text{Kulkarni, T.} and \text{Barzilay, R.} (2015).
\newblock Language understanding for text-based games using deep reinforcement
  learning.
\newblock \textit{arXiv preprint arXiv:1506.08941}.

\bibitem[{Nesterov(2013)}]{nesterov2013introductory}
\text{Nesterov, Y.} (2013).
\newblock \textit{Introductory Lectures on Convex Optimization: A Basic
  Course}, vol.~87.
\newblock Springer Science \& Business Media.

\bibitem[{OpenAI(2019)}]{openai2019}
\text{OpenAI} (2019).
\newblock Openai five.
\newline\urlprefix\url{https://openai.com/five/}

\bibitem[{Paternain et~al.(2019{\natexlab{a}})Paternain, Calvo-Fullana, Chamon
  and Ribeiro}]{paternain2019safe}
\text{Paternain, S.}, \text{Calvo-Fullana, M.}, \text{Chamon, L.~F.} and
  \text{Ribeiro, A.} (2019{\natexlab{a}}).
\newblock Safe policies for reinforcement learning via primal-dual methods.
\newblock \textit{arXiv preprint arXiv:1911.09101}.

\bibitem[{Paternain et~al.(2019{\natexlab{b}})Paternain, Chamon, Calvo-Fullana
  and Ribeiro}]{paternain2019constrained}
\text{Paternain, S.}, \text{Chamon, L.~F.}, \text{Calvo-Fullana, M.} and
  \text{Ribeiro, A.} (2019{\natexlab{b}}).
\newblock Constrained reinforcement learning has zero duality gap.
\newblock \textit{arXiv preprint arXiv:1910.13393}.

\bibitem[{Pflug(2000)}]{pflug2000some}
\text{Pflug, G.~C.} (2000).
\newblock Some remarks on the value-at-risk and the conditional value-at-risk.
\newblock In \textit{Probabilistic Constrained Optimization}. Springer,
  272--281.

\bibitem[{Prashanth and Ghavamzadeh(2013)}]{prashanth2013actor}
\text{Prashanth, L.} and \text{Ghavamzadeh, M.} (2013).
\newblock Actor-critic algorithms for risk-sensitive mdps.
\newblock In \textit{Advances in Neural Information Processing Systems}.

\bibitem[{Prashanth and Ghavamzadeh(2016)}]{prashanth2016variance}
\text{Prashanth, L.} and \text{Ghavamzadeh, M.} (2016).
\newblock Variance-constrained actor-critic algorithms for discounted and
  average reward mdps.
\newblock \textit{Machine Learning}, \textbf{105} 367--417.

\bibitem[{Prashanth and Fu(2018)}]{prashanth2018risk}
\text{Prashanth, L.~A.} and \text{Fu, M.} (2018).
\newblock Risk-sensitive reinforcement learning: A constrained optimization
  viewpoint.
\newblock \textit{arXiv} arXiv--1810.

\bibitem[{Qiu et~al.(2022)Qiu, Dong, Zhang, Wang and Strbac}]{qiu2022safe}
\text{Qiu, D.}, \text{Dong, Z.}, \text{Zhang, X.}, \text{Wang, Y.} and
  \text{Strbac, G.} (2022).
\newblock Safe reinforcement learning for real-time automatic control in a
  smart energy-hub.
\newblock \textit{Applied Energy}, \textbf{309} 118403.

\bibitem[{Rockafellar et~al.(2000)Rockafellar, Uryasev
  et~al.}]{rockafellar2000optimization}
\text{Rockafellar, R.~T.}, \text{Uryasev, S.} \text{et~al.} (2000).
\newblock Optimization of conditional value-at-risk.
\newblock \textit{Journal of Risk}, \textbf{2} 21--42.

\bibitem[{Rubinstein(1973)}]{rubinstein1973mean}
\text{Rubinstein, M.~E.} (1973).
\newblock A mean-variance synthesis of corporate financial theory.
\newblock \textit{The Journal of Finance}, \textbf{28} 167--181.

\bibitem[{Ruszczy{\'n}ski(2010)}]{ruszczynski2010risk}
\text{Ruszczy{\'n}ski, A.} (2010).
\newblock Risk-averse dynamic programming for {M}arkov decision processes.
\newblock \textit{Mathematical Programming}, \textbf{125} 235--261.

\bibitem[{Ruszczy{\'n}ski and
  Shapiro(2006{\natexlab{a}})}]{ruszczynski2006conditional}
\text{Ruszczy{\'n}ski, A.} and \text{Shapiro, A.} (2006{\natexlab{a}}).
\newblock Conditional risk mappings.
\newblock \textit{Mathematics of Operations Research}, \textbf{31} 544--561.

\bibitem[{Ruszczy{\'n}ski and
  Shapiro(2006{\natexlab{b}})}]{ruszczynski2006optimization}
\text{Ruszczy{\'n}ski, A.} and \text{Shapiro, A.} (2006{\natexlab{b}}).
\newblock Optimization of convex risk functions.
\newblock \textit{Mathematics of Operations Research}, \textbf{31} 433--452.

\bibitem[{Schulman et~al.(2015)Schulman, Levine, Abbeel, Jordan and
  Moritz}]{schulman2015trust}
\text{Schulman, J.}, \text{Levine, S.}, \text{Abbeel, P.}, \text{Jordan, M.}
  and \text{Moritz, P.} (2015).
\newblock Trust region policy optimization.
\newblock In \textit{International Conference on Machine Learning}.

\bibitem[{Schulman et~al.(2017)Schulman, Wolski, Dhariwal, Radford and
  Klimov}]{schulman2017proximal}
\text{Schulman, J.}, \text{Wolski, F.}, \text{Dhariwal, P.}, \text{Radford, A.}
  and \text{Klimov, O.} (2017).
\newblock Proximal policy optimization algorithms.
\newblock \textit{arXiv preprint arXiv:1707.06347}.

\bibitem[{Shani et~al.(2020)Shani, Efroni and Mannor}]{shani2020adaptive}
\text{Shani, L.}, \text{Efroni, Y.} and \text{Mannor, S.} (2020).
\newblock Adaptive trust region policy optimization: Global convergence and
  faster rates for regularized {MDP}s.
\newblock In \textit{Proceedings of the AAAI Conference on Artificial
  Intelligence}, vol.~34.

\bibitem[{Silver et~al.(2016)Silver, Huang, Maddison, Guez, Sifre, Van
  Den~Driessche, Schrittwieser, Antonoglou, Panneershelvam, Lanctot
  et~al.}]{silver2016mastering}
\text{Silver, D.}, \text{Huang, A.}, \text{Maddison, C.~J.}, \text{Guez, A.},
  \text{Sifre, L.}, \text{Van Den~Driessche, G.}, \text{Schrittwieser, J.},
  \text{Antonoglou, I.}, \text{Panneershelvam, V.}, \text{Lanctot, M.}
  \text{et~al.} (2016).
\newblock Mastering the game of go with deep neural networks and tree search.
\newblock \textit{Nature}, \textbf{529} 484--489.

\bibitem[{Silver et~al.(2017)Silver, Schrittwieser, Simonyan, Antonoglou,
  Huang, Guez, Hubert, Baker, Lai, Bolton et~al.}]{silver2017mastering}
\text{Silver, D.}, \text{Schrittwieser, J.}, \text{Simonyan, K.},
  \text{Antonoglou, I.}, \text{Huang, A.}, \text{Guez, A.}, \text{Hubert, T.},
  \text{Baker, L.}, \text{Lai, M.}, \text{Bolton, A.} \text{et~al.} (2017).
\newblock Mastering the game of go without human knowledge.
\newblock \textit{Nature}, \textbf{550} 354--359.

\bibitem[{Sobel(1982)}]{sobel1982variance}
\text{Sobel, M.~J.} (1982).
\newblock The variance of discounted {M}arkov decision processes.
\newblock \textit{Journal of Applied Probability}, \textbf{19} 794--802.

\bibitem[{Sutton(1988)}]{sutton1988learning}
\text{Sutton, R.~S.} (1988).
\newblock Learning to predict by the methods of temporal differences.
\newblock \textit{Machine Learning}, \textbf{3} 9--44.

\bibitem[{Sutton and Barto(1998)}]{sutton1998introduction}
\text{Sutton, R.~S.} and \text{Barto, A.~G.} (1998).
\newblock \textit{Introduction to Reinforcement Learning}, vol. 135.
\newblock MIT Press Cambridge.

\bibitem[{Tai et~al.(2017)Tai, Paolo and Liu}]{tai2017virtual}
\text{Tai, L.}, \text{Paolo, G.} and \text{Liu, M.} (2017).
\newblock Virtual-to-real deep reinforcement learning: Continuous control of
  mobile robots for mapless navigation.
\newblock In \textit{IEEE/RSJ International Conference on Intelligent Robots
  and Systems}. IEEE.

\bibitem[{Tamar et~al.(2016)Tamar, Di~Castro and Mannor}]{tamar2016learning}
\text{Tamar, A.}, \text{Di~Castro, D.} and \text{Mannor, S.} (2016).
\newblock Learning the variance of the reward-to-go.
\newblock \textit{Journal of Machine Learning Research}, \textbf{17} 361--396.

\bibitem[{Tamar and Mannor(2013)}]{tamar2013variance}
\text{Tamar, A.} and \text{Mannor, S.} (2013).
\newblock Variance adjusted actor critic algorithms.
\newblock \textit{arXiv preprint arXiv:1310.3697}.

\bibitem[{Tan and Taeihagh(2020)}]{tan2020governing}
\text{Tan, S.~Y.} and \text{Taeihagh, A.} (2020).
\newblock Governing the adoption of robotics and autonomous systems in
  long-term care in singapore.
\newblock \textit{Policy and Society} 1--21.

\bibitem[{Taylor et~al.(2016)Taylor, Menciassi, Fichtinger, Fiorini and
  Dario}]{taylor2016medical}
\text{Taylor, R.~H.}, \text{Menciassi, A.}, \text{Fichtinger, G.},
  \text{Fiorini, P.} and \text{Dario, P.} (2016).
\newblock Medical robotics and computer-integrated surgery.
\newblock In \textit{Springer Handbook of Robotics}. Springer, 1657--1684.

\bibitem[{Tosatto et~al.(2017)Tosatto, Pirotta, D'Eramo and
  Restelli}]{tosatto2017boosted}
\text{Tosatto, S.}, \text{Pirotta, M.}, \text{D'Eramo, C.} and \text{Restelli,
  M.} (2017).
\newblock Boosted fitted {Q}-iteration.
\newblock In \textit{International Conference on Machine Learning}.

\bibitem[{Van~Roy(1998)}]{van1998learning}
\text{Van~Roy, B.} (1998).
\newblock \textit{Learning and value function approximation in complex decision
  processes}.
\newblock Ph.D. thesis, Massachusetts Institute of Technology.

\bibitem[{Wang et~al.(2019)Wang, Cai, Yang and Wang}]{wang2019neural}
\text{Wang, L.}, \text{Cai, Q.}, \text{Yang, Z.} and \text{Wang, Z.} (2019).
\newblock Neural policy gradient methods: Global optimality and rates of
  convergence.
\newblock \textit{arXiv preprint arXiv:1909.01150}.

\bibitem[{Wang et~al.(2017)Wang, Fang and Liu}]{wang2017stochastic}
\text{Wang, M.}, \text{Fang, E.~X.} and \text{Liu, H.} (2017).
\newblock Stochastic compositional gradient descent: algorithms for minimizing
  compositions of expected-value functions.
\newblock \textit{Mathematical Programming}, \textbf{161} 419--449.

\bibitem[{Wen and Van~Roy(2017)}]{wen2017efficient}
\text{Wen, Z.} and \text{Van~Roy, B.} (2017).
\newblock Efficient reinforcement learning in deterministic systems with value
  function generalization.
\newblock \textit{Mathematics of Operations Research}, \textbf{42} 762--782.

\bibitem[{Xie et~al.(2018)Xie, Liu, Xu, Ghavamzadeh, Chow, Lyu and
  Yoon}]{xie2018block}
\text{Xie, T.}, \text{Liu, B.}, \text{Xu, Y.}, \text{Ghavamzadeh, M.},
  \text{Chow, Y.}, \text{Lyu, D.} and \text{Yoon, D.} (2018).
\newblock A block coordinate ascent algorithm for mean-variance optimization.
\newblock In \textit{Advances in Neural Information Processing Systems}.

\bibitem[{Yang et~al.(2019)Yang, Xie and Wang}]{yang2019theoretical}
\text{Yang, Z.}, \text{Xie, Y.} and \text{Wang, Z.} (2019).
\newblock A theoretical analysis of deep {Q}-learning.
\newblock \textit{arXiv preprint arXiv:1901.00137}.

\bibitem[{Zeng et~al.(2018)Zeng, Feng and Yin}]{zeng2018asyncqvi}
\text{Zeng, Y.}, \text{Feng, F.} and \text{Yin, W.} (2018).
\newblock Async{QVI}: Asynchronous-parallel {Q}-value iteration for
  reinforcement learning with near-optimal sample complexity.
\newblock \textit{arXiv preprint arXiv:1812.00885}.

\bibitem[{Zhang and He(2018)}]{zhang2018convergence}
\text{Zhang, S.} and \text{He, N.} (2018).
\newblock On the convergence rate of stochastic mirror descent for nonsmooth
  nonconvex optimization.
\newblock \textit{arXiv preprint arXiv:1806.04781}.

\end{thebibliography}

\newpage
\begin{appendix}

\noindent{\LARGE{\bf Appendix}}

\section{Algorithms in Section \ref{sec:alg}}\label{appendix:alg}
We present the algorithms for solving the subproblems of policy improvement and policy evaluation in Section \ref{sec:alg}. 

%by solving the subproblem in \eqref{eq:mspbe} using the TD update in \eqref{eq:td-update1}

\begin{algorithm}[H]
	\caption{Update $\theta$ via SGD}
	\begin{algorithmic}[1]\label{alg:ac-update}
		\STATE {\bf Require:} MDP $(\cS, \cA, \cP, r, \gamma)$, current energy function $f_{\theta_k}$, initial actor parameter $\theta_0$, number of iterations $T$, sample $\{(s_t, a^0_t)\}^{T}_{t = 1}$
		\STATE {\bf Initialization:} $\theta(0) \leftarrow \theta_0$
		\STATE Set stepsize $\zeta \leftarrow {T}^{-1/2}$ 
		\FOR{$t = 0, \dots, T - 1$}
		\STATE Sample $(s, a) \leftarrow (s_{t+1}, a^0_{t+1})$
		\STATE $\theta(t+1) \leftarrow \Pi_{\cB(\theta_0, R_a)}\bigl(\theta(t) - \zeta\cdot \bigl(f_{\theta(t)}(s,a) - \tau_{k+1}\cdot(\beta_k^{-1}(1 + 2\overline{\lambda}_{k}\overline{y}_{k})Q_{q_k}+ \beta_k^{-1}\overline{\lambda}_{k}W_{\omega_{k}} + \tau^{-1}_{k}f_{\theta_k}(s,a))\bigr)\cdot\nabla_\theta f_{\theta(t)}(s,a)\bigr)
		$
		\ENDFOR
		\STATE Average over path $\overline{\theta} \leftarrow 1/T\cdot\sum^{T-1}_{t = 0}\theta(t)$
		\STATE {\bf Output:} $f_{\overline{\theta}}$
	\end{algorithmic}
\end{algorithm}

\begin{algorithm}[H]
\caption{Update $q$ via TD(0)}
\begin{algorithmic}[1]\label{alg:td}
\STATE {\bf Require:} MDP $(\cS, \cA, \cP, r)$, initial critic parameter $q_0$, number of iterations $T$, sample $\{(s_t, a_t, s_t', a_t')\}^{T}_{t = 1}$
\STATE {\bf Initialization:} $q(0) \leftarrow q_0$
\STATE Set stepsize $\delta \leftarrow T^{-1/2}$
\FOR{$t = 0, \dots,T - 1$}
\STATE Sample $(s, a, s', a')$ $\leftarrow (s_{t+1}, a_{t+1}, s'_{t+1}, a'_{t+1})$
\STATE $q(t+1) \leftarrow \Pi_{\cB(q_0, R_c)}\bigl( q(t) - \delta\cdot \bigl(Q_{q(t)}(s,a) - r(s, a) + \overline{\rho}(\pi_{\theta_{k}}) - Q_{q(t)}(s', a')\bigr)\cdot\nabla_q Q_{q(t)}(s,a)\bigr)$
\ENDFOR
\STATE Average over path $\overline{q} \leftarrow 1/T \cdot \sum^{T - 1}_{t = 0}q(t)$
\STATE {\bf Output:} $Q_{\overline{q}}$
\end{algorithmic}
\end{algorithm}

\begin{algorithm}[H]
	\caption{Update $\omega$ via TD(0)}
	\begin{algorithmic}[1]\label{alg:td2}
		\STATE {\bf Require:} MDP $(\cS, \cA, \cP, r, \gamma)$, initial critic parameter $\omega_0$, number of iterations $T$, sample $\{(s_t, a_t, s_t', a_t')\}^{T}_{t = 1}$
		\STATE {\bf Initialization:} $\omega(0) \leftarrow \omega_0$
		\STATE Set stepsize $\delta \leftarrow T^{-1/2}$
		\FOR{$t = 0, \dots,T-1$}
		\STATE Sample $(s, a, s', a') \leftarrow (s_{t+1}, a_{t+1}, s'_{t+1}, a'_{t+1})$
		\STATE $\omega(t+ 1) \leftarrow \Pi_{\cB(\omega_0, R_b)}\bigl(\omega(t) - \delta\cdot \bigl(W_{\omega(t)}(s,a) - r(s, a) + \overline{\eta}(\pi_{\theta_{k}}) - W_{\omega(t)}(s', a')\bigr)\cdot\nabla_\omega Q_{\omega(t)}(s,a)\bigr)$
		\ENDFOR
		\STATE Average over path $\overline{\omega} \leftarrow 1/T \cdot \sum^{T-1}_{t = 0}\omega(t)$
		\STATE {\bf Output:} $W_{\overline{\omega}}$
	\end{algorithmic}
\end{algorithm}

\section{Proof of Proposition \ref{prop:exp-policy}}\label{appendix:alg-proof}
\begin{proof}
The subproblem of policy improvement for solving $\hat{\pi}_{k+1}$ takes the form
\$
&\max_{\pi} ~\EE_{\nu_k} \bigl[\la \pi(\cdot\,|\,s), (1 + 2\overline{\lambda}_k\overline{y}_k)Q_{q_k}(s,a) - \overline{\lambda}_kW_{\omega_k}(s,a) \ra - \beta_k \cdot {\rm KL}\bigl(\pi(\cdot\,|\,s)\,\|\,\pi_{\theta_k}(\cdot\,|\,s)\bigr) \bigr] \\
&~\text{subject to }\sum_{a\in \cA}\pi(a\,|\,s) = 1,~~\text{for any}~ s \in \cS.
\$
We consider the Lagrangian dual function of the above maximization problem that
\$
 &\int_{s\in\cS}\bigl[\la \pi(\cdot\,|\,s), (1 + 2\overline{\lambda}_k\overline{y}_k)Q_{q_k}(s,a) - \overline{\lambda}_kW_{\omega_k}(s,a)\ra - \beta_k\cdot {\rm KL}\bigl(\pi(\cdot\,|\,s)\,\|\,\pi_{\theta_k}(\cdot\,|\,s)\bigr)\bigr]\nu_{k}(\ud s)  \\
 &\qquad+ \int_{s\in \cS} \biggl(\sum_{a\in \cA}\pi(a\,|\, s) - 1\biggr)\lambda(\ud s).  \notag
\$
Recall that we restrict the solution be an energy-based policy that $\pi_{\theta_{k}} \propto \exp(\tau_{k}^{-1} f_{\theta_{k}})$.
Plugging $\pi_{\theta_{k}}(s,a) = \exp(\tau_{k}^{-1} f_{\theta_{k}}(s,a))/\sum_{a'\in\cA}\exp(\tau_{k}^{-1} f_{\theta_{k}}(s,a'))$ into the above function and taking the derivative, we obtain the optimality condition
\$
&(1 + 2\overline{\lambda}_k\overline{y}_k)Q_{q_k}(s,a) - \overline{\lambda}_kW_{\omega_k}(s,a) + \beta_k\tau_k^{-1} f_{\theta_k}(s,a) \\
&\qquad- \beta_k\cdot\bigg[\log\biggl(\sum_{a'\in\cA}\exp\bigl(\tau_k^{-1}f_{\theta_k}(s,a')\bigr)\biggr) + \log\pi(a\,|s) + 1\biggr] + \frac{\lambda(s)}{\nu_k(s)} = 0,
\$
for any $a\in \cA$ and $s\in \cS$. Note that $\log(\sum_{a'\in\cA}\exp(\tau_k^{-1} f_{\theta_k}(s,a')))$ is determined by the state~$s$ only. Thus, for any $(s,a)\in \cS\times \cA$, we have 
\$
\hat{\pi}_{k+1}(a\,|\,s) \propto \exp\bigl(\beta_k^{-1}(1 + 2\overline{\lambda}_k\overline{y}_k)Q_{q_k}(s,a) - \beta_k^{-1}\overline{\lambda}_kW_{\omega_k}(s,a) + \tau_k^{-1} f_{\theta_k}(s,a)\bigr), 
\$
which completes the proof.
\end{proof}

% !TEX root = main.tex

\section{Proofs for Section \ref{sec:ac-error}}\label{appendix:td}
\begin{proof}[Proof of Lemma \ref{thm:td}]
		Let the local linearization of $Q_{q}$ be
		\#\label{eq:lin-critic}
		\overline Q_{q} = Q_{q_0} + (q - q_0)^\top \nabla_{q_0} Q_{q}. 
		\#
		We denote by 
		\#\label{eq:def-gs-c}
		& g_t = \bigl( Q_{q(t)}(s,a) -  Q_{q(t)}(s', a') -  r_0  + \overline{\rho}(\pi_\theta) \bigr)\cdot \nabla_q Q_{q(t)}(s,a), && g_t^e = \EE_{\pi_\theta}[ g_n ], \notag\\
		& \overline g_t = \bigl( \overline Q_{q_(t)}(s,a) -  \overline Q_{q(t)}(s',a') -  r_0 + \overline{\rho}(\pi_\theta)   \bigr)\cdot \nabla_q Q_{q_0}(s,a), && \overline g_t^e = \EE_{\pi_\theta}[ \overline g_n ],\notag\\
		& g_* = \bigl( Q_{q_*}(s,a) -  Q_{q_*}(s',a') - r_0 + \overline{\rho}(\pi_\theta)  \bigr)\cdot \nabla_q Q_{q_*}(s,a), && g_*^e = \EE_{\pi_\theta}[ g_* ],\notag\\
		& \overline g_* = \bigl( \overline Q_{q_*}(s,a) -  \overline Q_{q_*}(s',a') - r_0  + \overline{\rho}(\pi_\theta) \bigr)\cdot \nabla_q Q_{q_0}(s,a), && \overline g_*^e = \EE_{\pi_\theta}[ \overline g_* ],
		\#
		where $q_*$ satisfies that
		\$
		q_* = \Pi_{\cB(q_0, R_{\rm c} )}(q_* - \delta \cdot \overline g_*^e ). 
		\$
		Here the expectation $\EE_{\pi_\theta}[\cdot]$ is taken following $(s,a)\sim \rho_{\pi_\theta}(\cdot)$, $s'\sim P(\cdot\given s,a)$, $a'\sim \pi_{\theta}(\cdot\given s_1)$, and $r_0 = \cR(s,a)$.  By Algorithm \ref{alg:td}, we have that
		\$
		q(t+1) = \Pi_{\cB(q_0, R_{\rm c} )}\big(q(t) - \delta \cdot  g_t \big). 
		\$
		Then, we have
		\#\label{eq:ac-bd1-c}
		& \EE_{\pi_\theta}\bigl[ \| q(t+1) - q_* \|_2^2 \given q(t) \bigr]\notag\\
		& \qquad = \EE_{\pi_\theta}\bigl[ \| \Pi_{\cB(q_0, R_{\rm c})}(q(t) - \delta \cdot  g_t ) - \Pi_{\cB(q_0, R_{\rm c})}(q_* - \delta \cdot \overline g_*^e ) \|_2^2 \given q(t) \bigr]\notag\\
		& \qquad \leq \EE_{\pi_\theta}\bigl[ \| (q(t) - \delta \cdot  g_t ) - (q_* - \delta \cdot \overline g_*^e ) \|_2^2 \given q(t) \bigr]\notag\\
		& \qquad = \|q(t) - q_*\|_2^2  + 2\delta \cdot  \la q_* - q(t), g_t^e - \overline g_*^e \ra + \delta^2 \cdot \EE_{\pi_\theta} \bigl[\|g_t - \overline g_*^e\|_2^2\given q(t)\bigr]. 
		\#
		We upper bound the second term on the right hand side of \eqref{eq:ac-bd1-c} in the sequel. By H\"older's inequality, it holds that
		\#\label{eq:ac-bd-t1-a-c}
		& \la q_* - q(t), g_t^e - \overline g_*^e \ra   \notag\\
		& \qquad = \la  q_* - q(t), g_t^e - \overline g_t^e \ra + \la  q_* - q(t), \overline g_t^e - \overline g_*^e \ra \notag\\
		& \qquad \leq \| q_* - q(t)\|_2\cdot \| g_t^e - \overline g_t^e \|_2 + \la  q_* - q(t), \overline g_t^e - \overline g_*^e  \ra\notag\\
		& \qquad \leq 2R_{\rm c} \cdot \|g_t^e - \overline g_t^e\|_2 + \la  q_* - q(t), \overline g_t^e - \overline g_*^e \ra,
		\#
		where the last inequality is obtained by the fact that $q(t), q_* \in \cB(q_0, R_{\rm c})$. By the definitions in \eqref{eq:def-gs-c}, we further obtain
		\#\label{eq:ac-bd-t1-b-c}
		& \la q_* - q(t), \overline g_t^e - \overline g_*^e \ra \notag\\
		& \qquad = \EE_{\pi_\theta} \bigl[ \bigl((\overline Q_{q(t)}(s,a) - \overline Q_{q_*}(s,a)) - (\overline Q_{q(t)}(s',a') - \overline Q_{q_*}(s',a')) \bigr) \cdot \la q_* - q(t), \nabla_q Q_{q_0}(s,a) \ra \bigr]\notag\\ 
		& \qquad = \EE_{\pi_\theta} \bigl[ \bigl((\overline Q_{q(t)}(s,a) - \overline Q_{q_*}(s,a)) - (\overline Q_{q(t)}(s',a') - \overline Q_{q_*}(s',a')) \bigr) \cdot (\overline Q_{q_*}(s,a) - \overline Q_{q(t)}(s,a)) \bigr]\notag\\
		& \qquad = \EE_{\pi_\theta} \bigl[ (\overline Q_{q(t)}(s,a) - \overline Q_{q_*}(s,a)) \cdot (\overline Q_{q(t)}(s',a') - \overline Q_{q_*}(s',a')) \bigr] \notag \\
		&\qquad\qquad - \EE_{\pi_\theta} \bigl[ (\overline Q_{q(t)}(s,a) - \overline Q_{q_*}(s,a))^2 \bigr],
		\#
		where the second equality is obtained by \eqref{eq:lin-critic}. By Cauchy-Schwartz inequality, we further have 
		\# \label{eq:ac-bd-t1-bb}
		&\EE_{\pi_\theta} \bigl[ (\overline Q_{q(t)}(s,a) - \overline Q_{q_*}(s,a)) \cdot (\overline Q_{q(t)}(s',a') - \overline Q_{q_*}(s',a')) \bigr]  \notag \\
		&\qquad \le  [\EE_{\pi_\theta} \bigl[ (\overline Q_{q(t)}(s,a) - \overline Q_{q_*}(s,a))^2 \bigr] ]^{\frac{1}{2}} \cdot  [ \EE_{\pi_\theta} \bigl[ (\overline Q_{q(t)}(s',a') - \overline Q_{q_*}(s',a'))^2 \bigr] ]^{\frac{1}{2}} \notag \\
		& \qquad \le \beta_{\pi_\theta} \cdot \EE_{\pi_\theta} \bigl[ (\overline Q_{q(t)}(s,a) - \overline Q_{q_*}(s,a))^2 \bigr],
		\#
		where the last inequality holds by Assumption \ref{assumption:contraction}. Combining \eqref{eq:ac-bd-t1-a-c}, \eqref{eq:ac-bd-t1-b-c} and \eqref{eq:ac-bd-t1-bb}, we have
		\$
		\la q_* - q(t), \overline g_t^e - \overline g_*^e \ra \le 2R_{\rm c} \cdot \|g_t^e - \overline g_t^e\|_2 - (1-\beta_{\pi_\theta}) \EE_{\rho_{\pi_\theta}} \bigl[ (\overline Q_{q(t)} - \overline Q_{q_*})^2 \bigr]. 
		\$
		The remaining proof follows \cite{fu2020single}.	
\end{proof}

\section{Proofs for Section \ref{sec:error}}\label{appendix:error} \label{812354}
\begin{proof}[Proof of Lemma \ref{lem:error-kl}]
	We first have by \eqref{eq:pi-new-true}, and recall that we restrict  $\pi_{\theta_{k}} \propto \exp(\tau_{k}^{-1} f_{\theta_{k}})$,
	\$
	&\pi_{k+1}(a\,|\,s)=\exp\bigl(\beta_k^{-1}(1 + 2\overline{\lambda}_k\overline{y}_k)Q^{\pi_{\theta_k}}(s,a) - \beta_k^{-1}\overline{\lambda}_kW^{\pi_{\theta_k}}(s,a) +\tau_k^{-1}f_{\theta_k}(s,a)\bigr)/Z_{k+1}(s),
	\$
	and
	\$
	&\pi_{\theta_{k+1}}(a\,|\,s)=\exp\bigl(\tau_{k+1}^{-1}f_{\theta_{k+1}}(s,a)\bigr)/Z_{\theta_{k+1}}(s),
	\$
	where $Z_{k+1}(s), Z_{\theta_{k+1}}(s)\in \RR$ are normalization factors, which are defined as
	\#\label{eq:normalization}
	Z_{k+1}(s)& =\sum_{a'\in\cA}\exp\bigl(\beta_k^{-1}(1 + 2\overline{\lambda}_k\overline{y}_k)Q^{\pi_{\theta_k}}(s,a') - \beta_k^{-1}\overline{\lambda}_kW^{\pi_{\theta_k}}(s,a') +\tau_k^{-1}f_{\theta_k}(s,a')\bigr),\notag\\
	Z_{\theta_{k+1}}(s) & =\sum_{a'\in\cA} \exp\bigl(\tau_{k+1}^{-1}f_{\theta_{k+1}}(s,a')\bigr),
	\#
	respectively.
	 Then, we reformulate the inner product in \eqref{812759} as 
	\# \label{eq:l04}
	&\la \log\pi_{\theta_{k+1}}(\cdot\,|\,s)-\log\pi_{k+1}(\cdot\,|\,s), \pi^*(\cdot\,|\,s)-\pi_{\theta_k}(\cdot\,|\,s) \ra \notag\\
	&\qquad=\big\la  \tau_{k+1}^{-1}f_{\theta_{k+1}}(s,\cdot)- \big(\beta_k^{-1}(1 + 2\overline{\lambda}_k\overline{y}_k)Q^{\pi_{\theta_k}}(s,\cdot) - \beta_k^{-1}\overline{\lambda}_kW^{\pi_{\theta_k}}(s,\cdot)  \notag \\
	&\qquad\qquad+\tau_k^{-1}f_{\theta_k}(s,\cdot)\big), \pi^*(\cdot\,|\,s)-\pi_{\theta_k}(\cdot\,|\,s) \big\ra,
	\#
	where we use the fact that
	\$
	&\la \log Z_{k+1}(s)-\log Z_{\theta_{k+1}}(s),  \pi^*(\cdot\,|\,s)-\pi_{\theta_k}(\cdot\,|\,s)\ra \notag\\
	&\qquad
	= (\log Z_{k+1}(s)-\log Z_{\theta_{k+1}}(s)) \sum_{a'\in \cA} \bigl( \pi^*(a'\,|\,s)-\pi_{\theta_k}(a'\,|\,s)\bigr) = 0.
	\$
	Thus, it remains to upper bound the right-hand side of \eqref{eq:l04}. We first decompose it to three terms, namely the error from learning the Q-function and the error from fitting the improved policy,  that is,
	{\small
	\#\label{812951}
	&\la  \tau_{k+1}^{-1}f_{\theta_{k+1}}(s,\cdot)- (\beta_k^{-1}(1 + 2\overline{\lambda}_k\overline{y}_k)Q^{\pi_{\theta_k}}(s,\cdot) - \beta_k^{-1}\overline{\lambda}_kW^{\pi_{\theta_k}}(s,\cdot) +\tau_k^{-1}f_{\theta_k}(s,\cdot)), \pi^*(\cdot\,|\,s)-\pi_{\theta_k}(\cdot\,|\,s) \ra \notag\\
	&\qquad=
	\underbrace{\la \tau_{k+1}^{-1}f_{\theta_{k+1}}(s,\cdot)- (\beta_k^{-1}(1 + 2\overline{\lambda}_k\overline{y}_k)Q_{q_k}(s,\cdot) - \beta_k^{-1}\overline{\lambda}_kW_{\omega_k}(s,\cdot)+\tau_k^{-1}f_{\theta_k}(s,\cdot)), \pi^*(\cdot\,|\,s)-\pi_{\theta_k}(\cdot\,|\,s)  \ra}_{\displaystyle{\rm (i)}} \notag\\
	&\qquad\qquad
	+\underbrace{(1 + 2\overline{\lambda}_k\overline{y}_k)\cdot\la \beta_k^{-1}Q_{q_k}(s,\cdot)-\beta_k^{-1}Q^{\pi_{\theta_k}}(s,\cdot), \pi^*(\cdot\,|\,s)-\pi_{\theta_k}(\cdot\,|\,s) \ra}_{\displaystyle{\rm (ii)}} \notag\\
	&\qquad\qquad
	+\underbrace{\overline{\lambda}_k\cdot\la \beta_k^{-1}W^{\pi_{\theta_k}}(s,\cdot) - \beta_k^{-1}W_{\omega_k}(s,\cdot), \pi^*(\cdot\,|\,s)-\pi_{\theta_k}(\cdot\,|\,s) \ra}_{\displaystyle{\rm (iii)}}.
	\#}
	\vskip4pt
	\noindent{\bf Upper Bounding (i):} We have
	{\small
	\#\label{eq:l01}
	& \la  \tau_{k+1}^{-1}f_{\theta_{k+1}}(s,\cdot)- (\beta_k^{-1}(1 + 2\overline{\lambda}_k\overline{y}_k)Q_{q_k}(s,\cdot) - \beta_k^{-1}\overline{\lambda}_kW_{\omega_k}(s,\cdot) +\tau_k^{-1}f_{\theta_k}(s,\cdot)), \pi^*(\cdot\,|\,s)-\pi_{\theta_k}(\cdot\,|\,s) \ra  \notag\\
	&\qquad =
	\biggl\la \tau_{k+1}^{-1}f_{\theta_{k+1}}(s,\cdot)- (\beta_k^{-1}(1 + 2\overline{\lambda}_k\overline{y}_k)Q_{q_k}(s,\cdot) - \beta_k^{-1}\overline{\lambda}_kW_{\omega_k}(s,\cdot) \notag\\
	&\qquad\qquad +\tau_k^{-1}f_{\theta_k}(s,\cdot)),\pi_{0}(\cdot\,|\,s)\cdot\biggl(\frac{\pi^*(\cdot\,|\,s)}{\pi_{0}(\cdot\,|\,s)}-\frac{\pi_{\theta_k}(\cdot\,|\,s)}{\pi_{0}(\cdot\,|\,s)}\biggr) \biggr\ra .
	\#}
	Taking expectation with respect to $s\sim\nu^*$ on the both sides of \eqref{eq:l01}, we obtain
	{\small\$ 
	&\bigl|\EE_{\nu^*}[\la  \tau_{k+1}^{-1}f_{\theta_{k+1}}(s,\cdot)- (\beta_k^{-1}(1 + 2\overline{\lambda}_k\overline{y}_k)Q_{q_k}(s,\cdot) - \beta_k^{-1}\overline{\lambda}_kW_{\omega_k}(s,\cdot)  +\tau_k^{-1}f_{\theta_k}(s,\cdot)),\pi^*(\cdot\,|\,s)-\pi_{\theta_k}(\cdot\,|\,s) \ra ]\bigr|\notag\\
	&\qquad = \biggl|\int_\cS\biggl\la \tau_{k+1}^{-1}f_{\theta_{k+1}}(s,\cdot)- (\beta_k^{-1}(1 + 2\overline{\lambda}_k\overline{y}_k)Q_{q_k}(s,\cdot) - \beta_k^{-1}\overline{\lambda}_kW_{\omega_k}(s,\cdot) +\tau_k^{-1}f_{\theta_k}(s,\cdot)), \notag\\
	&\qquad\qquad\pi_{0}(\cdot\,|\,s)\cdot\biggr(\frac{\pi^*(\cdot\,|\,s)}{\pi_{0}(\cdot\,|\,s)}-\frac{\pi_{\theta_k}(\cdot\,|\,s)}{\pi_{0}(\cdot\,|\,s)}\biggl) \biggr\ra \cdot \nu^*(s)\ud s\biggr|\notag\\
	&\qquad = \biggl|\int_{\cS\times\cA}\bigl(\tau_{k+1}^{-1}f_{\theta_{k+1}}(s,a)- (\beta_k^{-1}(1 + 2\overline{\lambda}_k\overline{y}_k)Q_{q_k}(s,a) - \beta_k^{-1}\overline{\lambda}_kW_{\omega_k}(s,a) +\tau_k^{-1}f_{\theta_k}(s,a)) \bigr)  \notag\\ 
	&\qquad\qquad \cdot \biggl(\frac{\pi^*(\cdot\,|\,s)}{\pi_{0}(\cdot\,|\,s)}-\frac{\pi_{\theta_k}(\cdot\,|\,s)}{\pi_{0}(\cdot\,|\,s)}\biggr) \ud {\sigma}_k(s,a)\biggr| .
	\$}
	By Cauchy-Schwarz inequality, we further have
	{\small\# \label{812946}
	&\bigl|\EE_{\nu^*}[\la  \tau_{k+1}^{-1}f_{\theta_{k+1}}(s,\cdot)- (\beta_k^{-1}(1 + 2\overline{\lambda}_k\overline{y}_k)Q_{q_k}(s,\cdot) - \beta_k^{-1}\overline{\lambda}_kW_{\omega_k}(s,\cdot)  +\tau_k^{-1}f_{\theta_k}(s,\cdot)),\pi^*(\cdot\,|\,s)-\pi_{\theta_k}(\cdot\,|\,s) \ra ]\bigr|\notag\\
	&\qquad \le \EE_{{\sigma}_k}[\bigl(\tau_{k+1}^{-1}f_{\theta_{k+1}}(s,a)- (\beta_k^{-1}(1 + 2\overline{\lambda}_k\overline{y}_k)Q_{q_k}(s,a) - \beta_k^{-1}\overline{\lambda}_kW_{\omega_k}(s,a)+\tau_k^{-1}f_{\theta_k}(s,a))\bigr)^2]^{1/2}  \notag\\ 
	&\qquad\qquad 
	\cdot \EE_{{\sigma}_k}\biggl[\biggl| \biggl(\frac{\ud \pi^*}{\ud \pi_{0}}-\frac{ \ud \pi_{\theta_k}}{\ud \pi_{0}}\biggr)\biggr|^2\biggr]^{1/2}\notag\\
	&\qquad \le \tau_{k+1}^{-1}\epsilon_{k+1}\cdot\phi^*_{k},
	\#}
	where in the last inequality holds by  \eqref{eq:error-kl1} and the definition of $\phi^*_k$ in \eqref{eq:w0w}.
	
	\vskip4pt
	\noindent{\bf Upper Bounding (ii):} By the updating rule of  $\lambda$ and $y$, we have $|\overline{\lambda}_k |\le N$ and $| \overline{y}_k | \le M$. Thus, we have
	{\small\$
	&|(1 + 2\overline{\lambda}_k\overline{y}_k)\cdot \EE_{\nu^*}[\la \beta_k^{-1}Q_{q_k}(s,\cdot)-\beta_k^{-1}Q^{\pi_{\theta_k}}(s,\cdot), \pi^*(\cdot\,|\,s)-\pi_{\theta_k}(\cdot\,|\,s) \ra]|\notag\\
	&\qquad \leq (1 + 2MN)\cdot |\EE_{\nu^*}[\la \beta_k^{-1}Q_{q_k}(s,\cdot)-\beta_k^{-1}Q^{\pi_{\theta_k}}(s,\cdot) , \pi^*(\cdot\,|\,s)-\pi_{\theta_k}(\cdot\,|\,s) \ra]|\notag\\
	& \qquad= (1 + 2MN)\cdot\biggl|\int_{\cS\times\cA}( \beta_k^{-1}Q_{q_k}(s, a)-\beta_k^{-1}Q^{\pi_{\theta_k}}(s, a))\cdot\biggl(\frac{\pi^*(a\,|\,s)}{\pi_{\theta_k}(a\,|\,s)}-\frac{\pi_{\theta_k}(a\,|\,s)}{\pi_{\theta_k}(a\,|\,s)}\biggr) \cdot\frac{\nu^*(s)}{\nu_{k}(s)}\ud \sigma_k(s,a)\biggr| \notag .
	\$}
	By Cauchy-Schwartz inequality, we further have
	\#\label{812947}
	&|(1 + 2\overline{\lambda}_k\overline{y}_k)\cdot \EE_{\nu^*}[\la \beta_k^{-1}Q_{q_k}(s,\cdot)-\beta_k^{-1}Q^{\pi_{\theta_k}}(s,\cdot) , \pi^*(\cdot\,|\,s)-\pi_{\theta_k}(\cdot\,|\,s) \ra]|\notag\\
	&\qquad\le (1 + 2MN)\cdot\EE_{\sigma_k}[(\beta_k^{-1}Q_{q_k}(s,a)-\beta_k^{-1}Q^{\pi_{\theta_k}}(s,a))^2]^{1/2}
	\cdot\EE_{\sigma_k}\biggl[\biggl|\frac{\ud\sigma^*}{\ud\sigma_{k}} - \frac{\ud\nu^*}{\ud\nu_{k}}\biggr|^2\biggr]^{1/2}\notag\\
	& \qquad\le (1 + 2MN)\cdot\beta_k^{-1}\epsilon_{k}'\cdot\psi^*_{k},
	\#
	where  the last inequality holds by the error bound \eqref{eq:error-kl2} and the definition of $\psi^*_k$  \eqref{eq:w0w}.

	\vskip4pt
	\noindent{\bf Upper Bounding (iii):} By the updating rule of  $\lambda$, we have $|\overline{\lambda}_k |\le N$. Thus, we have
	\$
	&|\overline{\lambda}_k\cdot \EE_{\nu^*}[\la \beta_k^{-1}W^{\pi_{\theta_k}}(s,\cdot)- \beta_k^{-1}W_{\omega_k}(s,\cdot), \pi^*(\cdot\,|\,s)-\pi_{\theta_k}(\cdot\,|\,s) \ra]|\notag\\
	&\qquad \leq N\cdot |\EE_{\nu^*}[\la \beta_k^{-1}W^{\pi_{\theta_k}}(s,\cdot)- \beta_k^{-1}W_{\omega_k}(s,\cdot) , \pi^*(\cdot\,|\,s)-\pi_{\theta_k}(\cdot\,|\,s) \ra]|\notag\\
	& \qquad= N\cdot\biggl|\int_{\cS\times\cA}(\beta_k^{-1}W^{\pi_{\theta_k}}(s,a)- \beta_k^{-1}W_{\omega_k}(s,a) )\cdot\biggl(\frac{\pi^*(a\,|\,s)}{\pi_{\theta_k}(a\,|\,s)}-\frac{\pi_{\theta_k}(a\,|\,s)}{\pi_{\theta_k}(a\,|\,s)}\biggr) \cdot\frac{\nu^*(s)}{\nu_{k}(s)}\ud \sigma_k(s,a)\biggr| \notag .
	\$
		By Cauchy-Schwartz inequality, we further have
	\#\label{812948}
	&|\overline{\lambda}_k\cdot \EE_{\nu^*}[\la \beta_k^{-1}W^{\pi_{\theta_k}}(s,\cdot)- \beta_k^{-1}W_{\omega_k}(s,\cdot), \pi^*(\cdot\,|\,s)-\pi_{\theta_k}(\cdot\,|\,s) \ra]|\notag\\
	&\qquad\le N\cdot\EE_{\sigma_k}[(\beta_k^{-1}W^{\pi_{\theta_k}}(s,a)- \beta_k^{-1}W_{\omega_k}(s,a))^2]^{1/2}
	\cdot\EE_{\sigma_k}\biggl[\biggl|\frac{\ud\sigma^*}{\ud\sigma_{k}} - \frac{\ud\nu^*}{\ud\nu_{k}}\biggr|^2\biggr]^{1/2}\notag\\
	& \qquad\le N\cdot\beta_k^{-1}\epsilon_{k}''\cdot\psi^*_{k},
	\#
	Finally, combining \eqref{eq:l04}, \eqref{812951}, \eqref{812946}, \eqref{812947} and \eqref{812948}, we have 
	\$
	&|\EE_{\nu^*}[\la \log\pi_{\theta_{k+1}}(\cdot\,|\,s)-\log\pi_{k+1}(\cdot\,|\,s), \pi^*(\cdot\,|\,s)-\pi_{\theta_k}(\cdot\,|\,s) \ra]| \\
	&\qquad\le \tau_{k+1}^{-1}\epsilon_{k+1}\cdot\phi^*_{k}+(1+2MN)\cdot\beta_k^{-1}\epsilon_{k}'\cdot\psi^*_{k} +N\cdot\beta_k^{-1}\epsilon_{k}''\cdot\psi^*_{k} ,
	\$
	which concludes the proof.
\end{proof}

\begin{proof}[Proof of Lemma \ref{lem:stepwise-energy}]
	By the triangle inequality, we have
	\#\label{812424}
	&\| \tau_{k+1}^{-1} f_{\theta_{k+1}}(s,\cdot) - \tau_k^{-1}f_{\theta_k}(s,\cdot)  \|_{\infty}^2 \notag\\
	&\qquad\le
	2\| \tau_{k+1}^{-1} f_{\theta_{k+1}}(s,\cdot) - \tau_k^{-1}f_{\theta_k}(s,\cdot)-\beta_k^{-1}(1+2\overline{\lambda}_k\overline{y}_k)Q_{q_k}(s,\cdot) + \beta^{-1}\overline{\lambda}_kW_{\omega_k}  \|_{\infty}^2  \notag\\
	&\qquad\qquad +2 \| \beta_k^{-1}(1+2\overline{\lambda}_k\overline{y}_k)Q_{q_k}(s,\cdot) - \beta^{-1}\overline{\lambda}_kW_{\omega_k} \|_{\infty}^2 \notag\\
	&\qquad\le
	2\| \tau_{k+1}^{-1} f_{\theta_{k+1}}(s,\cdot) - \tau_k^{-1}f_{\theta_k}(s,\cdot)-\beta_k^{-1}(1+2\overline{\lambda}_k\overline{y}_k)Q_{q_k}(s,\cdot) + \beta^{-1}\overline{\lambda}_kW_{\omega_k}  \|_{\infty}^2  \notag\\
	&\qquad\qquad +4 \| \beta_k^{-1}(1+2\overline{\lambda}_k\overline{y}_k)Q_{q_k}(s,\cdot)\|_{\infty}^2 +4 \| \beta^{-1}\overline{\lambda}_kW_{\omega_k} \|_{\infty}^2.
	\#
	For the first term on the right-hand side of \eqref{812424}, by Lemma \ref{lem:error-kl}, we have
	\#\label{812432}
	\EE_{\nu^*}[\| \tau_{k+1}^{-1} f_{\theta_{k+1}}(s,\cdot) - \tau_k^{-1}f_{\theta_k}(s,\cdot)-\beta_k^{-1}(1+2\overline{\lambda}_k\overline{y}_k)Q_{q_k}(s,\cdot) + \beta^{-1}\overline{\lambda}_kW_{\omega_k}   \|_{\infty}^2] \le |\cA|\cdot \tau_{k+1}^{-2}\epsilon_{k+1}^2.
	\#
	For the second term on the right-hand side of \eqref{812424}, we have
	\#\label{812433}
	\EE_{\nu^*}[\|  \beta_k^{-1}(1+2\overline{\lambda}_k\overline{y}_k)Q_{q_k}(s,\cdot) \|_{\infty}^2] \le \beta_k^{-2}\cdot (1+2MN)^2 \cdot \EE_{\nu^*}\Bigl[ \max_{a\in\cA} 2(Q_{q_0}(s,a))^2 + 2R_{\rm c}^2 \Bigr],
	\#
	where we use the $1$-Lipschitz continuity of $Q_{\omega}$ in $\omega$ and the constraint $\|\omega_k-\omega_0\|_2\le R_\omega$.
		For the third term on the right-hand side of \eqref{812424}, we have
	\#\label{812434}
	\EE_{\nu^*}[\|\beta^{-1}\overline{\lambda}_kW_{\omega_k}   \|_{\infty}^2] \le \beta_k^{-2}\cdot N^2 \cdot \EE_{\nu^*}\Bigl[ \max_{a\in\cA} 2(W_{\omega_0}(s,a))^2 + 2R_{\rm b}^2 \Bigr].
	\# 
	Then, taking expectation with respect to $s\sim\nu^*$ on  both sides of \eqref{812424} and plugging \eqref{812432}, \eqref{812433} and \eqref{812434} in, the result holds as desired.
\end{proof}

\section{Proofs of Section \ref{sec:main} }\label{appendix:main}

\begin{proof}[Proof of Lemma \ref{lem:v-diff}]
By the definition of $\rho(\pi)$ in \eqref{eq:rho}, we have
\#\label{eq:main-lemma-proof1}
\rho(\pi^*) - \rho(\pi) =\EE_{\nu^*\pi^*}[r(s,a)] - \rho(\pi) =\EE_{\nu^*\pi^*}[r(s,a) - \rho(\pi)] .
\#
By the Bellman equation that $Q^{\pi}(s,a) = r(s,a) - \rho(\pi) +V^{\pi}(s')$, we have 
\#\label{eq:main-lemma-proof6}
\EE_{\nu^*\pi^*}[r(s,a) - \rho(\pi)] =\EE_{\nu^*\pi^*}[Q^{\pi}(s,a) - V^{\pi}(s')]  =\EE_{\nu^*\pi^*}[Q^{\pi}(s,a) - V^{\pi}(s)] ,
\#
where the last equality follows from $(\cP^{\pi^*})^t\nu^* = \nu^*$. Finally, note that for any given $s\in \cS$,
\#\label{eq:main-lemma-proof2}
\EE_{\pi^*}[Q^\pi(s, a) - V^\pi(s)] &= \la Q^\pi(s,\cdot) , \pi^*(\cdot\,|\, s) \ra - \la Q^\pi(s,\cdot), \pi(\cdot\,|\,s)\ra\notag\\
 &= \la Q^\pi(s,\cdot), \pi^*(\cdot\,|\,s) - \pi(\cdot\,|\,s)\ra.
\#
Plugging \eqref{eq:main-lemma-proof6} and \eqref{eq:main-lemma-proof2} into \eqref{eq:main-lemma-proof1}, we obtain
\# \label{eq:main-lemma-proof3}
\rho(\pi^*) - \rho(\pi) = \EE_{\nu^*}[\la Q^{\pi}(s,\cdot), \pi^*(\cdot\,|\,s) - \pi(\cdot\,|\,s)\ra] .
\#
Similarly, by the definition of $\eta(\pi)$ in \eqref{eq:eta}, we obtain
\#\label{eq:main-lemma-proof4}
\eta(\pi^*) - \eta(\pi) =\EE_{\nu^*\pi^*}[r(s,a)^2] - \eta(\pi) =\EE_{\nu^*\pi^*}[r(s,a)^2 - \eta(\pi)] .
\#
By the equation $r(s,a)^2 - \eta(\pi) = W^{\pi}(s,a) - U^{\pi}(s')$, we further have
\#\label{eq:main-lemma-proof7}
\EE_{\nu^*\pi^*}[r(s,a)^2 - \eta(\pi)] 
=\EE_{\nu^*\pi^*}[W^{\pi}(s,a) - U^{\pi}(s')] 
=\EE_{\nu^*\pi^*}[W^{\pi}(s,a) - U^{\pi}(s)] ,
\#
where the last equality follows from $(\cP^{\pi^*})^t\nu^* = \nu^*$. In addition, note that for any given $s\in \cS$,
\#\label{eq:main-lemma-proof8}
\EE_{\nu^*\pi^*}[W^{\pi}(s,a) - U^{\pi}(s)]  &= \la W^\pi(s,\cdot) , \pi^*(\cdot\,|\, s) \ra - \la W^\pi(s,\cdot), \pi(\cdot\,|\,s)\ra\notag\\
&= \EE_{\nu^*}[\la W^{\pi}(s,\cdot), \pi^*(\cdot\,|\,s) - \pi(\cdot\,|\,s)\ra] .
\#
Plugging \eqref{eq:main-lemma-proof7} and \eqref{eq:main-lemma-proof8} into \eqref{eq:main-lemma-proof4}, we have
\#\label{eq:main-lemma-proof9}
\eta(\pi^*) - \eta(\pi) = \EE_{\nu^*}[\la W^{\pi}(s,\cdot), \pi^*(\cdot\,|\,s) - \pi(\cdot\,|\,s)\ra] .
\#
Combining \eqref{eq:main-lemma-proof3} and \eqref{eq:main-lemma-proof9}, and by the definition of $\cL(\pi)$ in \eqref{eq:def-L}, we obtain
\$%#\label{eq:main-lemma-proof5}
\cL(\lambda, \pi^*, y) - \cL(\lambda, \pi, y) &=(1 + 2\lambda y)(\rho(\pi^*) - \rho(\pi)) - \lambda(\eta(\pi^*) - \eta(\pi)) \notag\\
&=\EE_{\nu^*}[\la (1 + 2\lambda y)Q^{\pi}(s,\cdot) - \lambda W^{\pi}(s, \cdot), \pi^*(\cdot\,|\,s) - \pi(\cdot\,|\,s)\ra] ,
\$
which completes the proof.
\end{proof}

\begin{proof}[Proof of Lemma \ref{lem:main-descent}]
First, we have
{\small\#\label{eq:l03}
& {\rm KL}\bigl(\pi^*(\cdot\,|\,s)\,\|\,\pi_{\theta_{k}}(\cdot\,|\,s)\bigr) - {\rm KL}\bigl(\pi^*(\cdot\,|\,s)\,\|\,\pi_{\theta_{k+1}}(\cdot\,|\,s)\bigr)\notag\\
& \qquad= \la \log(\pi_{\theta_{k+1}}(\cdot\,|\,s)/\pi_{\theta_k}(\cdot\,|\,s)), \pi^*(\cdot\,|\,s)  \ra\notag\\
& \qquad= \la \log(\pi_{\theta_{k+1}}(\cdot\,|\,s)/\pi_{\theta_k}(\cdot\,|\,s)), \pi^*(\cdot\,|\,s) - \pi_{\theta_{k+1}}(\cdot\,|\,s)\ra + {\rm KL}\bigl(\pi_{\theta_{k+1}}(\cdot\,|\,s)\,\|\,\pi_{\theta_{k}}(\cdot\,|\,s)\bigr)\notag\\
& \qquad= \la \log(\pi_{\theta_{k+1}}(\cdot\,|\,s)/\pi_{\theta_k}(\cdot\,|\,s)) - \beta_k^{-1}(1+\overline{\lambda}_k\overline{y}_k)Q^{\pi_{\theta_k}}(s, \cdot) +\beta_k^{-1}\overline{\lambda}_kW^{\pi_{\theta_k}}(s, \cdot), \pi^*(\cdot\,|\,s) - \pi_{\theta_{k}}(\cdot\,|\,s)\ra\notag\\
&\qquad\qquad+ \beta_k^{-1}\cdot\la (1+\overline{\lambda}_k\overline{y}_k)Q^{\pi_{\theta_k}}(s, \cdot) - \overline{\lambda}_kW^{\pi_{\theta_k}}(s, \cdot), \pi^*(\cdot\,|\,s) - \pi_{\theta_{k}}(\cdot\,|\,s)\ra + {\rm KL}\bigl(\pi_{\theta_{k+1}}(\cdot\,|\,s)\,\|\,\pi_{\theta_{k}}(\cdot\,|\,s)\bigr)\notag\\
& \qquad\qquad + \la \log(\pi_{\theta_{k+1}}(\cdot\,|\,s)/\pi_{\theta_k}(\cdot\,|\,s)), \pi_{\theta_{k}}(\cdot\,|\,s) - \pi_{\theta_{k+1}}(\cdot\,|\,s)\ra.
\#}
Recall that $\pi_{k+1}\propto \exp(\tau_k^{-1}f_{\theta_k} +\beta_k^{-1}(1+\overline{\lambda}_k\overline{y}_k)Q^{\pi_{\theta_k}} -\beta_k^{-1}\overline{\lambda}_kW^{\pi_{\theta_k}})$ and $Z_{k+1}(s)$, and $Z_{\theta_k}(s)$ are defined in \eqref{eq:normalization}. Also recall that we have $ \la \log Z_{\theta_k}(s), \pi(\cdot\,|\,s) - \pi'(\cdot\,|\,s)\ra =  \la \log Z_{k}(s), \pi(\cdot\,|\,s) - \pi'(\cdot\,|\,s)\ra = 0$ for all $k$, $\pi$, and $\pi'$, which implies that, on the right-hand-side of \eqref{eq:l03}, 
\#\label{eq:l05}
& \la \log\pi_{\theta_k}(\cdot\,|\,s) + \beta_k^{-1}(1+\overline{\lambda}_k\overline{y}_k)Q^{\pi_{\theta_k}}(s, \cdot) -\beta_k^{-1}\overline{\lambda}_kW^{\pi_{\theta_k}}(s, \cdot), \pi^*(\cdot\,|\,s) - \pi_{\theta_{k}}(\cdot\,|\,s)\ra\notag\\
& \qquad= \la \tau_k^{-1}f_{\theta_k}(s, \cdot) + \beta_k^{-1}(1+\overline{\lambda}_k\overline{y}_k)Q^{\pi_{\theta_k}}(s, \cdot) -\beta_k^{-1}\overline{\lambda}_kW^{\pi_{\theta_k}}(s, \cdot), \pi^*(\cdot\,|\,s) - \pi_{\theta_{k}}(\cdot\,|\,s)\ra  \notag\\
& \qquad\qquad - \la \log Z_{\theta_{k}}(s), \pi^*(\cdot\,|\,s) - \pi_{\theta_k}(\cdot\,|\,s)\ra\notag\\
& \qquad= \la \tau_k^{-1}f_{\theta_k}(s, \cdot) + \beta_k^{-1}(1+\overline{\lambda}_k\overline{y}_k)Q^{\pi_{\theta_k}}(s, \cdot) -\beta_k^{-1}\overline{\lambda}_kW^{\pi_{\theta_k}}(s, \cdot), \pi^*(\cdot\,|\,s) - \pi_{\theta_{k}}(\cdot\,|\,s)\ra  \notag\\
& \qquad\qquad - \la \log Z_{k+1}(s), \pi^*(\cdot\,|\,s) - \pi_{\theta_k}(\cdot\,|\,s)\ra\notag\\
& \qquad =  \la \log\pi_{k+1}(\cdot\,|\,s), \pi^*(\cdot\,|\,s) - \pi_{\theta_{k}}(\cdot\,|\,s)\ra,
\#
and
\#\label{eq:l06}
&\la \log(\pi_{\theta_{k+1}}(\cdot\,|\,s)/\pi_{\theta_k}(\cdot\,|\,s)), \pi_{\theta_{k}}(\cdot\,|\,s) - \pi_{\theta_{k+1}}(\cdot\,|\,s) \ra\notag\\
& \qquad = \la \tau_{k+1}^{-1}f_{\theta_{k+1}}(s, \cdot) - \tau_k^{-1}f_{\theta_k}(s, \cdot), \pi_{\theta_{k}}(\cdot\,|\,s) - \pi_{\theta_{k+1}}(\cdot\,|\,s) \ra\notag\\
&\qquad\qquad - \la \log Z_{\theta_{k+1}}(s), \pi_{\theta_k}(\cdot\,|\,s) - \pi_{\theta_{k+1}}(\cdot\,|\,s)\ra + \la \log Z_{\theta_{k}}(s), \pi_{\theta_k}(\cdot\,|\,s) - \pi_{\theta_{k+1}}(\cdot\,|\,s)\ra\notag\\
&\qquad= \la \tau_{k+1}^{-1}f_{\theta_{k+1}}(s, \cdot) - \tau_k^{-1}f_{\theta_k}(s, \cdot), \pi_{\theta_{k}}(\cdot\,|\,s) - \pi_{\theta_{k+1}}(\cdot\,|\,s) \ra.
\#
Plugging \eqref{eq:l05} and \eqref{eq:l06} into \eqref{eq:l03}, we obtain
\#\label{eq:l07}
& {\rm KL}\bigl(\pi^*(\cdot\,|\,s)\,\|\,\pi_{\theta_{k}}(\cdot\,|\,s)\bigr) - {\rm KL}\bigl(\pi^*(\cdot\,|\,s)\,\|\,\pi_{\theta_{k+1}}(\cdot\,|\,s)\bigr)\\
&\qquad = \la \log(\pi_{\theta_{k+1}}(\cdot\,|\,s)/\pi_{k+1}(\cdot\,|\,s)), \pi^*(\cdot\,|\,s) - \pi_{\theta_{k}}(\cdot\,|\,s)\ra \notag\\
&\qquad\qquad+ \beta_k^{-1}\cdot\la(1+2\overline{y}_k\overline{\lambda}_k)Q^{\pi_{\theta_k}}(s, \cdot) - \overline{\lambda}_kW^{\pi_{\theta_k}}(s, \cdot), \pi^*(\cdot\,|\,s) - \pi_{\theta_{k}}(\cdot\,|\,s)\ra\notag\\
&\qquad\qquad+\la \tau_{k+1}^{-1}f_{\theta_{k+1}}(s, \cdot) - \tau_k^{-1}f_{\theta_k}(s, \cdot), \pi_{\theta_{k}}(\cdot\,|\,s) - \pi_{\theta_{k+1}}(\cdot\,|\,s) \ra + {\rm KL}\bigl(\pi_{\theta_{k+1}}(\cdot\,|\,s)\,\|\,\pi_{\theta_{k}}(\cdot\,|\,s)\bigr)\notag\\
&\qquad = \la \log(\pi_{\theta_{k+1}}(\cdot\,|\,s)/\pi_{k+1}(\cdot\,|\,s)), \pi^*(\cdot\,|\,s) - \pi_{\theta_{k}}(\cdot\,|\,s) \ra \notag\\
&\qquad\qquad+ \beta_k^{-1}\cdot\la(1+2y^*\overline{\lambda}_k)Q^{\pi_{\theta_k}}(s, \cdot) - \overline{\lambda}_kW^{\pi_{\theta_k}}(s, \cdot), \pi^*(\cdot\,|\,s) - \pi_{\theta_{k}}(\cdot\,|\,s)\ra\notag\\
&\qquad\qquad + \beta_k^{-1}\cdot\la(2(\overline{y}_k - y^*)\overline{\lambda}_k)Q^{\pi_{\theta_k}}(s, \cdot), \pi^*(\cdot\,|\,s) - \pi_{\theta_{k}}(\cdot\,|\,s)\ra\notag\\
&\qquad\qquad+\la \tau_{k+1}^{-1}f_{\theta_{k+1}}(s, \cdot) - \tau_k^{-1}f_{\theta_k}(s, \cdot), \pi_{\theta_{k}}(\cdot\,|\,s) - \pi_{\theta_{k+1}}(\cdot\,|\,s) \ra + {\rm KL}\bigl(\pi_{\theta_{k+1}}(\cdot\,|\,s)\,\|\,\pi_{\theta_{k}}(\cdot\,|\,s)\bigr)\notag\\
& \qquad\geq \la \log(\pi_{\theta_{k+1}}(\cdot\,|\,s)/\pi_{k+1}(\cdot\,|\,s)), \pi^*(\cdot\,|\,s) - \pi_{\theta_{k}}(\cdot\,|\,s) \ra \notag\\
&\qquad\qquad+ \beta_k^{-1}\cdot\la(1+2y^*\overline{\lambda}_k)Q^{\pi_{\theta_k}}(s, \cdot) - \overline{\lambda}_kW^{\pi_{\theta_k}}(s, \cdot), \pi^*(\cdot\,|\,s) - \pi_{\theta_{k}}(\cdot\,|\,s)\ra\notag\\
&\qquad\qquad + \beta_k^{-1}\cdot\la(2(\overline{y}_k - y^*)\overline{\lambda}_k)Q^{\pi_{\theta_k}}(s, \cdot), \pi^*(\cdot\,|\,s) - \pi_{\theta_{k}}(\cdot\,|\,s)\ra\notag\\
&\qquad\qquad+\la \tau_{k+1}^{-1}f_{\theta_{k+1}}(s, \cdot) - \tau_k^{-1}f_{\theta_k}(s, \cdot), \pi_{\theta_{k}}(\cdot\,|\,s) - \pi_{\theta_{k+1}}(\cdot\,|\,s) \ra \notag\\
&\qquad\qquad+ \frac{1}{2}\cdot\|\pi_{\theta_{k+1}}(\cdot\,|\,s) - \pi_{\theta_{k}}(\cdot\,|\,s)\|_1^2,\notag
\#
where in the last inequality holds by the Pinsker's inequality. Rearranging the terms in \eqref{eq:l07}, we conclude the proof.
\end{proof}

\begin{proof}[Proof of Corollary  \ref{coro:main-bound}]
	By Lemmas \ref{thm:ac-error}, \ref{thm:td} and \ref{thm:td2}, it holds with probability at least $1- \exp( - \Omega(R_{\rm a}^{2/3} m_{\rm a}^{2/3}H_a )) - \exp( - \Omega(R_{\rm b}^{2/3} m_{\rm b}^{2/3}H_b )) - \exp( - \Omega(R_{\rm c}^{2/3} m_{\rm c}^{2/3}H_c ))$ that
	\$
	&\epsilon_{k+1} = O ( R_{\rm a}^2 T^{-1/2} + R_{\rm a}^{8/3} m_{\rm a}^{-1/6} H_{\rm a}^{7} \log m_{\rm a} ), \\ 
	&\epsilon'_{k} = O ( R_{\rm c}^2 T^{-1/2} + R_{\rm c}^{8/3} m_{\rm c}^{-1/6} H_{\rm c}^{7} \log m_{\rm c} ), \\ 
	&\epsilon''_{k} = O ( R_{\rm b}^2 T^{-1/2} + R_{\rm b}^{8/3} m_{\rm b}^{-1/6} H_{\rm b}^{7} \log m_{\rm b} ).
	\$ 
	By our choice of of the parameters that
	\$
	& R_{\rm a}  = R_{\rm a} = R_{\rm c} = O\bigl(m_{\rm a}^{1/2} H_{\rm a}^{-6}(\log m_{\rm a})^{-3}\bigr),\notag\\
	& T = \Omega\bigl( K^{3}(\phi^*_{k}+\psi^*_{k})^2 |\cA| R_{\rm a}^4 H_am_a^{2/3}\bigr), \\
	& m_{\rm a} =m_{\rm b} = m_{\rm c} = \Omega\bigl( d^{3/2}K^{9} R_{\rm a}^{16}(\phi^*_{k}+\psi^*_{k})^6|\cA|^3 H_{\rm a}^{42} \log^6 m_{\rm a} \bigr) .
	\$
	Thus, it holds with probability at least $1 - 3\exp( - \Omega(R_{\rm a}^{2/3} m_{\rm a}^{2/3}H_a ))$ that
	$\epsilon_{k} \le \cO(K^{-3/2}(\phi^*_k + \psi^*_k)^{-1} |\cA|^{-1/2}),\  \epsilon_{k}' \le \cO(K^{-3/2}(\phi^*_k + \psi^*_k)^{-1} |\cA|^{-1/2}) \text{\ and\ }  \epsilon_{k}'' \le \cO(K^{-3/2}(\phi^*_k + \psi^*_k)^{-1} |\cA|^{-1/2}).$ 
	Recall that we set the temperature parameter $\tau_{k+1} = \beta\sqrt{K}/(k+1)$ and the penalty parameter $\beta_k = \beta\sqrt{K}$. For $\varepsilon_k$ defined in Lemma \ref{lem:error-kl}, we have
	\# \label{eq:cor-1}
	&\varepsilon_k= \tau_{k+1}^{-1}\epsilon_{k+1}\cdot\phi^*_{k}+(1+2MN)\cdot\beta_k^{-1}\epsilon_{k}'\cdot\psi^*_{k} +N\cdot\beta_k^{-1}\epsilon_{k}''\cdot\psi^*_{k} \le \cO(1/K).
	\#
	For $\varepsilon_k'$ defined in Lemma \ref{lem:stepwise-energy}, we have
	\#\label{eq:cor-2}
	&\varepsilon_k' =|\cA|\cdot \tau_{k+1}^{-2}\epsilon_{k+1}^2 \le  \cO(1/K).
	\#
	By Lemma \ref{lem:average-error}, we have 
	\$
	c_k \le O\bigl(T^{-1/2}\log(4K/p)^{1/2}\bigr), \quad d_k \le O\bigl(T^{-1/2}\log(4K/p)^{1/2}\bigr) .
	\$ 
	The parameters we set ensure that $T=\Omega(K\log(4K/p))$ and $p=\exp( - \Omega(R_{\rm a}^{2/3} m_{\rm a}^{2/3}H_a ))$, which further implies that 
	\# \label{eq:cor-3}
	c_k \le \cO(1/\sqrt{K}), \qquad d_k \le  \cO(1/\sqrt{K})
	\#
	with probability at least $1-\exp( - \Omega(R_{\rm a}^{2/3} m_{\rm a}^{2/3}H_a ))$.
	Plugging \eqref{eq:cor-1}, \eqref{eq:cor-2} and \eqref{eq:cor-3} into Theorem \ref{thm:main}, we have, with probability at least $1-4\exp( - \Omega(R_{\rm a}^{2/3} m_{\rm a}^{2/3}H_a ))$, 
	\$
	-\cO(1/\sqrt{K}) \le \frac{1}{K}\sum_{k=0}^{K-1} \bigl( \cL(\lambda^*,\pi^*, y^*) - \cL(\overline{\lambda}_k, \pi_{\theta_k}, \overline{y}_k) \bigr) \le \cO(1/\sqrt{K}) ,
	\$
	which concludes the proof.
\end{proof}

\begin{proof}[Proof of Theorem  \ref{thm:duality-gap}]
	The negativity of duality gap holds by the definition. We only need to show the upper bound. For the optimal solution $(\lambda^*,\pi^*,y^*)$, we have
	\$
	&\frac{1}{K}\sum_{k=0}^{K-1} \bigl(\cL(\overline{\lambda}_k,\pi^*, y^*) - \cL(\lambda^*, \pi_{\theta_k}, \overline{y}_k)\bigr) \\
	&\qquad = \frac{1}{K}\sum_{k=0}^{K-1} \bigl(\cL(\overline{\lambda}_k,\pi^*, y^*) -\cL(\overline{\lambda}_k, \pi_{\theta_k}, \overline{y}_k) +\cL(\overline{\lambda}_k, \pi_{\theta_k}, \overline{y}_k) - \cL(\lambda^*, \pi_{\theta_k}, \overline{y}_k) \bigr).
	\$
	By  \eqref{eq:result-part1} and \eqref{eq:result-part2} in Section \ref{sec:main}, we obtain
	\$ 
	\frac{1}{K}\sum_{k=0}^{K-1} \bigl( \cL(\overline{\lambda}_k, \pi_{\theta_k}, \overline{y}_k) - \cL(\lambda^*, \pi_{\theta_k}, \overline{y}_k) \bigr)  \le  \sum_{k=0}^{K-1} (c_k + d_k)\cdot \cO(1/K)  + \cO(1/\sqrt{K}) ,
	\$
	and
	\$
	\frac{1}{K}\sum_{k=0}^{K-1} \bigl(\cL(\overline{\lambda}_k,\pi^*, y^*) -\cL(\overline{\lambda}_k, \pi_{\theta_k}, \overline{y}_k)\bigr) \le\sum_{k=0}^{K-1}c_k \cdot \cO(1/K) +   \sum_{k=0}^{K-1}(  \varepsilon_k + \varepsilon_k') \cdot  \cO(1/\sqrt{K}) .
	\$
	Thus, we have
	\$
	&\frac{1}{K}\sum_{k=0}^{K-1} \bigl(\cL(\overline{\lambda}_k,\pi^*, y^*) - \cL(\lambda^*, \pi_{\theta_k}, \overline{y}_k)\bigr) \\
	& \qquad  \le  \sum_{k=0}^{K-1} (c_k + d_k)\cdot \cO(1/K)  +  \sum_{k=0}^{K-1}(  \varepsilon_k + \varepsilon_k') \cdot  \cO(1/\sqrt{K}) +  \cO(1/\sqrt{K}) .
	\$
	Moreover, by setting the parameters same as Corollary \ref{coro:main-bound}, together with \eqref{eq:cor-1}, \eqref{eq:cor-2} and \eqref{eq:cor-3}, we have
	\$
	\frac{1}{K}\sum_{k=0}^{K-1} \bigl(\cL(\overline{\lambda}_k,\pi^*, y^*) - \cL(\lambda^*, \pi_{\theta_k}, \overline{y}_k)\bigr) \le \cO(1/\sqrt{K}),
	\$
	which concludes the proof.     
	\end{proof}

	\section{Algorithms in Section \ref{sec:linear}} \label{appendix:alg:linear}

	\begin{algorithm}[H]
		\caption{Variance-Constrained Actor-Critic with Linear Function Approximation}
		\begin{algorithmic}[1]\label{alg:risac2}
		\REQUIRE MDP $(\cS, \cA, \cP, r)$, projection radii $R$, penalty parameter $\beta$, number of SGD and TD iterations $T$ and number of VARAC iterations $K$
		\STATE Initialize with uniform policy: $\tau_0\leftarrow 1$, $f_{\theta_0} \leftarrow 0$, $ \pi_{\theta_0} \leftarrow \pi_0 \propto \exp(\tau_0^{-1}f_{\theta_0})$
		\STATE Sample $\{(s_t, a_t, a^0_t, s_t', a_t')\}^{T}_{t = 1}$ with $(s_t, a_t) \sim \sigma_0$, $a^0_t\sim \pi_0(\cdot\,|\,s_t)$, $s_t'\sim\cP(\cdot\,|\,s_t, a_t)$ and $a_t' \sim \pi_{\theta_0}(\cdot\,|\,s_t')$
		\STATE Estimate $\rho(\pi_{\theta_{0}})$ and $\eta(\pi_{\theta_{0}})$ by $\overline{\rho}(\pi_{\theta_{0}}) = \frac{1}{T}\cdot\sum^{T}_{t=1}r(s_t, a_t)$ and $\overline{\eta}(\pi_{\theta_{0}}) = \frac{1}{T}\cdot\sum^{T}_{t=1}r(s_t, a_t)^2$
		\FOR{$k = 0, \dots, K-1$}

		\STATE Set temperature parameter $\tau_{k+1} \leftarrow \beta\sqrt{K}/(k+1)$ and penalty parameter $\beta_k \leftarrow \beta\sqrt{K}$
		
		\STATE Solve $Q_{q_{k}}(s, a) = q_k^\top \varphi(s, a)$ using the TD update in \eqref{eq:td-update3} (Algorithm \ref{alg:td3})\label{line:td:q2}
		
		\STATE Solve $W_{\omega_{k}}(s, a) = \omega_{k}^\top \varphi(s, a)$ using the TD update in \eqref{eq:td-update4} (Algorithm~\ref{alg:td4})\label{line:td:w2}
		
		\STATE Update $\lambda$ : $\overline{\lambda}_{k+1} =  \Pi_{[0,N]}\bigl(\overline{\lambda}_k - \frac{1}{2\gamma_k}(\alpha + 2\overline{y}_k\overline{\rho}(\pi_{\theta_k}) + \overline{\eta}(\pi_{\theta_k}) -\overline{y}_k^2) \bigr)$

		\STATE Update $\theta_{k+1}$ using \eqref{eq:update:theta2} and calculate $f_{\theta_{k+1}} = \theta_{k + 1}^\top \varphi$ using \label{line:sgd2}
		
		\STATE Update policy: $\pi_{\theta_{k+1}} \propto \exp(\tau_{k+1}^{-1}f_{\theta_{k+1}})$ \label{line:c2}
		\STATE Sample $\{(s_t, a_t, a^0_t, s_t', a_t')\}^{T}_{t = 1}$ with $(s_t, a_t) \sim \sigma_{k+1}$, $a^0_t\sim \pi_0(\cdot\,|\,s_t)$, $s_t'\sim\cP(\cdot\,|\,s_t, a_t)$ and $a_t' \sim \pi_{\theta_{k+1}}(\cdot\,|\,s_t')$
		\STATE Estimate $\rho(\pi_{\theta_{k+1}})$ and $\eta(\pi_{\theta_{k+1}})$ by $\overline{\rho}(\pi_{\theta_{k+1}}) = \frac{1}{T}\cdot\sum^{T}_{t=1}r(s_t, a_t)$ and $\overline{\eta}(\pi_{\theta_{k+1}}) = \frac{1}{T}\cdot\sum^{T}_{t=1}r(s_t, a_t)^2$\label{line:estimate2}
		
		\STATE Update $y$ : $\overline{y}_{k+1} = \overline{\rho}(\pi_{\theta_{k+1}}) $

		\ENDFOR
		\end{algorithmic}
		\end{algorithm}

\begin{algorithm}[H]
	\caption{Update $q$ via TD(0)}
	\begin{algorithmic}[1]\label{alg:td3}
	\STATE {\bf Require:} MDP $(\cS, \cA, \cP, r)$, number of iterations $T$, sample $\{(s_t, a_t, s_t', a_t')\}^{T}_{t = 1}$
	\STATE {\bf Initialization:} $q(0) \leftarrow 0$
	\STATE Set stepsize $\delta \leftarrow T^{-1/2}$
	\FOR{$t = 0, \dots,T - 1$}
	\STATE Sample $(s, a, s', a')$ $\leftarrow (s_{t+1}, a_{t+1}, s'_{t+1}, a'_{t+1})$
	\STATE $q(t+1) \leftarrow \Pi_{\cB{(0, R)}}\bigl( q(t) - \delta\cdot \bigl(Q_{q(t)}(s,a) - r(s, a) + \overline{\rho}(\pi_{\theta_{k}}) - Q_{q(t)}(s', a')\bigr)\cdot\varphi(s,a)\bigr)$
	\ENDFOR
	\STATE Average over path $\overline{q} \leftarrow 1/T \cdot \sum^{T - 1}_{t = 0}q(t)$
	\STATE {\bf Output:} $Q_{\overline{q}}$
	\end{algorithmic}
	\end{algorithm}
	
	\begin{algorithm}[H]
		\caption{Update $\omega$ via TD(0)}
		\begin{algorithmic}[1]\label{alg:td4}
			\STATE {\bf Require:} MDP $(\cS, \cA, \cP, r, \gamma)$, number of iterations $T$, sample $\{(s_t, a_t, s_t', a_t')\}^{T}_{t = 1}$
			\STATE {\bf Initialization:} $\omega(0) \leftarrow 0$
			\STATE Set stepsize $\delta \leftarrow T^{-1/2}$
			\FOR{$t = 0, \dots,T-1$}
			\STATE Sample $(s, a, s', a') \leftarrow (s_{t+1}, a_{t+1}, s'_{t+1}, a'_{t+1})$
			\STATE $\omega(t+ 1) \leftarrow \Pi_{\cB(\omega_0, R_b)}\bigl(\omega(t) - \delta\cdot \bigl(W_{\omega(t)}(s,a) - r(s, a) + \overline{\eta}(\pi_{\theta_{k}}) - W_{\omega(t)}(s', a')\bigr)\cdot\varphi(s,a)\bigr)$
			\ENDFOR
			\STATE Average over path $\overline{\omega} \leftarrow 1/T \cdot \sum^{T-1}_{t = 0}\omega(t)$
			\STATE {\bf Output:} $W_{\overline{\omega}}$
		\end{algorithmic}
	\end{algorithm}

\section{Proof of Lemma \ref{lemma:q:error:linear}} \label{appendix:td:linear}

\begin{proof}
	For notational simplicity, we omit the dependence of $k$ and use $\theta$ to denote $\theta_k$.
	With slight abuse of notation, we denote by 
	\# \label{eq:local:linear2}
		& g_t = \bigl( Q_{q(t)}(s,a) -  Q_{q(t)}(s', a') -  r_0  + \overline{\rho}(\pi_\theta) \bigr)\cdot \varphi(s,a), && g_t^e = \EE_{\pi_\theta}[ g_n ], \notag\\
		& g_* = \bigl( Q_{q_*}(s,a) -  Q_{q_*}(s',a') - r_0 + \overline{\rho}(\pi_\theta)  \bigr)\cdot \varphi(s,a), && g_*^e = \EE_{\pi_\theta}[ g_* ],%\notag\\
		%& \overline g_* = \bigl(  Q_{q_*}(s,a) - Q_{q(t)}(s',a') - r_0  + \overline{\rho}(\pi_\theta) \bigr)\cdot \nabla_q Q_{q_0}(s,a), && \overline g_*^e = \EE_{\pi_\theta}[ \overline g_* ],
		\#
		where $q_*$ satisfies that
		\$
		q_* = \Pi_{\cB(0, R)}(q_* - \delta \cdot  g_*^e ). 
		\$
		Here the expectation $\EE_{\pi_\theta}[\cdot]$ is taken following $(s,a)\sim \rho_{\pi_\theta}(\cdot)$, $s'\sim P(\cdot\given s,a)$, $a'\sim \pi_{\theta}(\cdot\given s_1)$, and $r_0 = \cR(s,a)$.  By Algorithm \ref{alg:td}, we have that
		\$
		q(t+1) = \Pi_{\cB(0, R)}\big(q(t) - \delta \cdot  g_t \big). 
		\$
		Then, we have
		\# \label{eq:66001}
		& \EE_{\pi_\theta}\bigl[ \| q(t+1) - q_* \|_2^2 \given q(t) \bigr]\notag\\
		& \qquad = \EE_{\pi_\theta}\bigl[ \| \Pi_{\cB(q_0, R_{\rm c})}(q(t) - \delta \cdot  g_t ) - \Pi_{\cB(q_0, R_{\rm c})}(q_* - \delta \cdot  g_*^e ) \|_2^2 \given q(t) \bigr]\notag\\
		& \qquad \leq \EE_{\pi_\theta}\bigl[ \| (q(t) - \delta \cdot  g_t ) - (q_* - \delta \cdot  g_*^e ) \|_2^2 \given q(t) \bigr]\notag\\
		& \qquad = \underbrace{\|q(t) - q_*\|_2^2}_{\rm (i)}  + 2\delta \cdot  \underbrace{\la q_* - q(t), g_t^e -  g_*^e \ra|}_{\rm (ii)} + \delta^2 \cdot \underbrace{\EE_{\pi_\theta} \bigl[\|g_t - g_*^e\|_2^2\given q(t)\bigr]}_{\rm (iii)}. 
		\#
		We upper bound the second term on the right hand side of \eqref{eq:66001} in the sequel. 
		By the definitions in \eqref{eq:local:linear2}, we further obtain
		\# \label{eq:66002}
		{\rm (ii)} &= \la q_* - q(t),  g_t^e -  g_*^e \ra \notag\\
		&  = \EE_{\pi_\theta} \bigl[ \bigl(( Q_{q(t)}(s,a) -  Q_{q_*}(s,a)) - ( Q_{q(t)}(s',a') - Q_{q_*}(s',a')) \bigr) \cdot \la q_* - q(t), \varphi(s,a) \ra \bigr]\notag\\ 
		&  = \EE_{\pi_\theta} \bigl[ \bigl(( Q_{q(t)}(s,a) -  Q_{q_*}(s,a)) - ( Q_{q(t)}(s',a') -  Q_{q_*}(s',a')) \bigr) \cdot ( Q_{q_*}(s,a) -  Q_{q(t)}(s,a)) \bigr]\notag\\
		&  = \EE_{\pi_\theta} \bigl[ ( Q_{q(t)}(s,a) -  Q_{q_*}(s,a)) \cdot ( Q_{q(t)}(s',a') -  Q_{q_*}(s',a')) \bigr] \notag \\
		&\qquad - \EE_{\pi_\theta} \bigl[ ( Q_{q(t)}(s,a) -  Q_{q_*}(s,a))^2 \bigr].
		\# 
	    By Cauchy-Schwartz inequality, we further have 
		\#  \label{eq:66003}
		&\EE_{\pi_\theta} \bigl[ ( Q_{q(t)}(s,a) -  Q_{q_*}(s,a)) \cdot ( Q_{q(t)}(s',a') -  Q_{q_*}(s',a')) \bigr]  \notag \\
		&\qquad \le  [\EE_{\pi_\theta} \bigl[ ( Q_{q(t)}(s,a) -  Q_{q_*}(s,a))^2 \bigr] ]^{\frac{1}{2}} \cdot  [ \EE_{\pi_\theta} \bigl[ ( Q_{q(t)}(s',a') -  Q_{q_*}(s',a'))^2 \bigr] ]^{\frac{1}{2}} \notag \\
		& \qquad \le \beta_{\pi_\theta} \cdot \EE_{\pi_\theta} \bigl[ ( Q_{q(t)}(s,a) -  Q_{q_*}(s,a))^2 \bigr],
		\#
		where the last inequality holds by Assumption \ref{assumption:contraction}. Combining \eqref{eq:66002} and \eqref{eq:66003}, we have
		\# \label{eq:66004}
		\la q_* - q(t),  g_t^e - g_*^e \ra \le  (\beta_{\pi_\theta} - 1) \EE_{\rho_{\pi_\theta}} \bigl[ ( Q_{q(t)} - Q_{q_*})^2 \bigr]. 
		\#
		Moreover, by Cauchy-Schwarz inequality, we have 
		\# \label{eq:66005}
		{\rm (iii)} &= \EE_{\pi_\theta} \bigl[\|g_t - g_*^e\|_2^2\given q(t)\bigr] \notag\\
		&\le 2\underbrace{\EE_{\pi_\theta} \bigl[\|g_t - g_t^e\|_2^2\given q(t)\bigr]}_{\rm (iii.1)} + 2\underbrace{\|g_t^e - g_*^e\|_2^2}_{\rm (iii.2)}
		\# 
		By the definitions of $g_t$ and $g_t^*$ in \eqref{eq:local:linear2}, we can upper bound Term ${\rm (iii.1)}$ by 
		\# \label{eq:66006}
		{\rm (iii.1)} = \EE_{\pi_\theta} \bigl[\|g_t\|_2^2 - \|g_t^e\|_2^2\given q(t)\bigr] \le \EE_{\pi_\theta} \bigl[\|g_t\|_2^2 \given q(t)\bigr].
		\#
		Meanwhile, by the definition of $g_n$ in \eqref{eq:local:linear2}, we have 
		\$
		\|g_t\|_2^2 &= \bigl( Q_{q(t)}(s,a) -  Q_{q(t)}(s', a') -  r_0  + \overline{\rho}(\pi_\theta) \bigr)^2 \cdot \|\varphi(s,a)\|_2^2 \\
		& \le 4(R + M)^2,
		\$
		where the last inequality follows from the facts that $\|q(t)\|_2^2 \le R$, $\|\varphi(\cdot, \cdot)\|_2 \le$, $|r_0| \le M$ and $|\overline{\rho}(\pi_\theta)| \le M$. Then we upper bound Term ${\rm (iii.2)}$ by 
		\# \label{eq:66007}
		{\rm (iii.2)} &= \|\EE_{\sigma_{\pi_\theta}}[(Q_{q(n)} - Q_{q_*}) \varphi]\|_2^2 \notag\\
		&\le \EE_{\sigma_{\pi_\theta}}[(Q_{q(n)} - Q_{q_*})^2 \|\varphi\|_2^2] \notag\\
		& \le \EE_{\sigma_{\pi_\theta}}[(Q_{q(n)} - Q_{q_*})^2],
		\#
		where the first inequality uses the definitions of $g_t^e$ and $g_*^e$ in \eqref{eq:local:linear2}, the first inequality follows from Cauchy-Schwarz inequality, and the last inequality is obtained by the assumption that $\|\varphi(\cdot, \cdot)\|_2 \le 1$. 
		Combining \eqref{eq:66006} and \eqref{eq:66007}, we have 
		\# \label{eq:66008}
		{\rm (iii)}  \le 8(R + M)^2 + 2\EE_{\sigma_{\pi_\theta}}[(Q_{q(n)} - Q_{q_*})^2].
		\#
		Plugging \eqref{eq:66004} and \eqref{eq:66008} into \eqref{eq:66001}, we have 
		\# \label{eq:66009}
		& \EE_{\pi_\theta}\bigl[ \| q(t+1) - q_* \|_2^2 \given q(t) \bigr]\notag\\
		& \qquad \le \|q(t) - q_*\|_2^2 + 2\delta \cdot (\beta_{\pi_\theta} - 1) \EE_{\rho_{\pi_\theta}} [ ( Q_{q(t)} - Q_{q_*})^2 ] \notag\\
		&\qquad \qquad + \delta^2 \cdot  \bigl( 8(R + M)^2 + 2\EE_{\rho_{\pi_\theta}}  [ ( Q_{q(t)} - Q_{q_*})^2 ] \bigr).
		\#
		Rearranging \eqref{eq:66009} gives that
		\# \label{eq:66010}
		&\bigl(2\delta\beta_{\pi_\theta} - 2\delta^2\bigr) \cdot \EE_{\rho_{\pi_\theta}}  [ ( Q_{q(t)} - Q_{q_*})^2 ] \notag\\
		& \qquad \le \|q(t) - q_*\|_2^2 - \EE_{\pi_\theta}\bigl[ \| q(t+1) - q_* \|_2^2 \given q(t) \bigr] + 8\delta^2(R + M)^2.
		\#
		Here $\beta_{\pi_\theta}$ and $M$ are constants. Telescoping \eqref{eq:66010} and using Jensen's inequality, together with the fact that $\delta = T^{-1/2}$, we obtain
		\$
	    \EE_{\rho_{\pi_\theta}}\bigl[\bigl(Q_{\overline{q}}(s, a) - Q_{q_*}(s, a)\bigr)^2 \bigr] &\le \frac{1}{T}\cdot \sum_{t = 0}^{T - 1} \EE_{\sigma_{\pi_\theta}} \bigl[\bigl(Q_{q(t)} - Q_{q_*} \bigr)^2 \bigr] \\
		& \le \cO( R^2T^{-1/2}),
		\$
		which concludes the proof of Lemma \ref{lemma:q:error:linear}.
\end{proof}

    \section{Supporting Lemma}

    \begin{lemma} \label{lem:constraint:violation}
	Suppose Assumption \ref{assumption:slater} hold. Let $\lambda^*$ be the optimal Lagrangian dual variable and assuming that Let $\lambda' \ge 2\lambda^*$. Suppose that 
	\$
	\rho(\pi^*) - \rho(\pi) + \lambda' \cdot [\Lambda(\pi) - \alpha]_+ \le \delta.
	\$
	Then, it holds that 
	\$
	[\Lambda(\pi) - \alpha]_+ \le 2\delta/\lambda'.
	\$
   \end{lemma}

   \begin{proof}
	See \citet{efroni2020exploration} for a detailed proof.         
   \end{proof}

\section{Implementation Details} 
\label{app:hyper}

\begin{table}[h]
\centering

\begin{tabular}{lc}
\hline
\textbf{Hyperparameter} & \textbf{Value} \\
\hline
Optimizer & Adam \\
Learning rate & 1e-4 \\
Replay Buffer Size (Pendulum) & 1e5 \\
Replay Buffer Size (BipedalWalker) & 1e6 \\
Batch Size & 256 \\
Decay Rate & 0.99 \\
Policy noise & 0.2 \\
Policy noise clipping & (-0.5, 0.5) \\
Initial $\lambda$ & 0.5 \\
\hline
Number of Layers for Actor Network & 2 \\
Number of Layers for Critic Network & 2 \\
Hidden dim & 128 \\
Activation function & ReLU \\
\hline
\end{tabular}
\caption{Hyper-parameters sheet.}
\label{tab:hyperparameters}
\end{table}

\end{appendix}

\end{document}